\documentclass{article}

\usepackage{iclr2027_conference,times}
\iclrfinalcopy

\usepackage[utf8]{inputenc} % allow utf-8 input
\usepackage[T1]{fontenc}    % use 8-bit T1 fonts
\usepackage{hyperref}       % hyperlinks
\usepackage{url}            % simple URL typesetting
\usepackage{booktabs}       % professional-quality tables
\usepackage{amsfonts}       % blackboard math symbols
\usepackage{nicefrac}       % compact symbols for 1/2, etc.
\usepackage{microtype}      % microtypography

\usepackage{float}
\usepackage[export]{adjustbox}

\usepackage{soul}

\usepackage{amsmath}
\usepackage{amssymb}
\usepackage{mathtools}
\usepackage{amsthm}
\usepackage{amsthm}
\usepackage{amsfonts}

\usepackage{thmtools, thm-restate}

\usepackage{mathtools}
\usepackage{mathrsfs}

\usepackage[capitalize]{cleveref}

\usepackage{xparse}

\PassOptionsToPackage{dvipsnames}{xcolor}
\usepackage{tikz,pgfplots}
\usepackage{tikzscale}
\pgfplotsset{compat=1.18}
\usetikzlibrary{bayesnet}
\usetikzlibrary{shapes, arrows, calc, positioning, matrix}
\usepackage{tikz-cd}
\tikzcdset{scale cd/.style={every label/.append style={scale=#1},
    cells={nodes={scale=#1}}}}
\tikzstyle{do} = [obs, rectangle]
\usetikzlibrary{calc,intersections}
\usetikzlibrary{angles}
\usepackage{soul}
\tikzset{
    right angle quadrant/.code={
        \pgfmathsetmacro\quadranta{{1,1,-1,-1}[#1-1]}     % Arrays for selecting quadrant
        \pgfmathsetmacro\quadrantb{{1,-1,-1,1}[#1-1]}},
    right angle quadrant=1, % Make sure it is set, even if not called explicitly
    right angle length/.code={\def\rightanglelength{#1}},   % Length of symbol
    right angle length=2ex, % Make sure it is set...
    right angle symbol/.style n args={3}{
        insert path={
            let \p0 = ($(#1)!(#3)!(#2)$),     % Intersection
                \p1 = ($(\p0)!\quadranta*\rightanglelength!(#3)$), % Point on base line
                \p2 = ($(\p0)!\quadrantb*\rightanglelength!(#2)$), % Point on perpendicular line
                \p3 = ($(\p1)+(\p2)-(\p0)$) in  % Corner point of symbol
            (\p1) -- (\p3) -- (\p2)
        }
    }
}

\pgfplotsset{compat=1.18}
\usetikzlibrary{hobby}                
\usetikzlibrary{decorations.markings}
\usetikzlibrary{decorations.text}
\usetikzlibrary{backgrounds}
\usetikzlibrary{arrows.meta} %% JUST FOR pi-x.tikz

\usepackage{graphicx}
\usepackage{subcaption}

\usepackage{enumitem}

\usepackage{wrapfig}

\usepackage{booktabs}
\usepackage{array}
\usepackage{longtable}

\usetikzlibrary{patterns.meta}

\usepackage{pdfrender}

\usepackage{makecell}
\usepackage{multirow}
\usepackage{booktabs}

\usepackage{stfloats}

\usepackage{etoolbox}

\usepackage{titletoc}

\usepackage{scalerel} % For auto-scaling the symbol in subscripts/superscripts; used for talos-pi

\usepackage{nicefrac}       % compact symbols for 1/2, etc.
\usepackage{xfrac}

\usepackage[embolden=0pt]{talos}

\usepackage{upgreek}

\usepackage{algorithm}
\usepackage{algorithmic}

\usepackage{pifont}

\declaretheorem{theorem}
\declaretheorem{corollary}
\declaretheorem{lemma}
\declaretheorem{proposition}

\declaretheorem[style=definition]{assumption}

\declaretheorem[style=definition]{remark}

\crefname{theorem}{Thm.}{Thms.}
\crefname{subtheorem}{Thm.}{Thms.}
\crefname{corollary}{Cor.}{Cors.}
\crefname{lemma}{Lem.}{Lems.}
\crefname{proposition}{Prop.}{Props.}
\crefname{observation}{Obs.}{Obs.}
\crefname{assumption}{Asm.}{Asms.}
\crefname{example}{Ex.}{Exs.}
\crefname{remark}{Rem.}{Rems.}
\crefname{equation}{Eq.}{Eqs.}
\crefname{condition}{Cond.}{Conds.}
\crefname{algorithm}{Algo.}{Algos.}

\Crefname{theorem}{Theorem}{Theorems}
\Crefname{subtheorem}{Theorem}{Theorems}
\Crefname{corollary}{Corollary}{Corollaries}
\Crefname{lemma}{Lemma}{Lemmas}
\Crefname{proposition}{Proposition}{Propositions}
\Crefname{observation}{Observation}{Observations}
\Crefname{assumption}{Assumption}{Assumptions}
\Crefname{example}{Example}{Examples}
\Crefname{remark}{Remark}{Remarks}
\Crefname{equation}{Equation}{Equations}
\Crefname{condition}{Condition}{Conditios}

\crefformat{chapter}{\S#2#1#3}
\crefmultiformat{chapter}{\S\S#2#1#3}{ and~#2#1#3}{, #2#1#3}{, and~#2#1#3}

\crefformat{section}{\S#2#1#3}
\crefmultiformat{section}{\S\S#2#1#3}{ and~#2#1#3}{, #2#1#3}{, and~#2#1#3}

\crefformat{appendix}{\S#2#1#3}
\crefmultiformat{appendix}{\S\S#2#1#3}{ and~#2#1#3}{, #2#1#3}{, and~#2#1#3}
\Crefformat{appendix}{\S#2#1#3}
\Crefmultiformat{appendix}{\S\S#2#1#3}{ and~#2#1#3}{, #2#1#3}{, and~#2#1#3}

\usepackage{stackengine}
\usepackage[dvipsnames]{xcolor}
\input pdf-trans
\newbox\qbox
\def\usecolor#1{\csname\string\color@#1\endcsname\space}
\newcommand\bordercolor[1]{\colsplit{1}{#1}}
\newcommand\fillcolor[1]{\colsplit{0}{#1}}
\newcommand\colsplit[2]{\colorlet{tmpcolor}{#2}\edef\tmp{\usecolor{tmpcolor}}%
  \def\tmpB{}\expandafter\colsplithelp\tmp\relax%
  \ifnum0=#1\relax\edef\fillcol{\tmpB}\else\edef\bordercol{\tmpC}\fi}
\def\colsplithelp#1#2 #3\relax{%
  \edef\tmpB{\tmpB#1#2 }%
  \ifnum `#1>`9\relax\def\tmpC{#3}\else\colsplithelp#3\relax\fi
}
\newcommand\outline[1]{\leavevmode%
  \def\maltext{#1}%
  \setbox\qbox=\hbox{\maltext}%
  \boxgs{Q q 2 Tr \thickness\space w \fillcol\space \bordercol\space}{}%
  \copy\qbox%
}
\newcommand\myMathbb[2][1]{%
  \stackengine{0pt}{\def\thickness{.15}\outline{${#2}$}}{\kern.1pt\outline{${#2}$}}{O}{l}{F}{F}{L}}
\bordercolor{black}
\fillcolor{white}
\def\thickness{.4}% TO CHANGE THICKNESS OF SHADOW

\newcommand{\rnum}[1]{\expandafter{\romannumeral #1\relax}}
\newcommand{\RNum}[1]{\uppercase\expandafter{\romannumeral #1\relax}}

\newif\ifrestatingtheorem
\restatingtheoremfalse

\definecolor{deep_blue}{HTML}{4c72b0}
\definecolor{deep_orange}{HTML}{dd8452}
\definecolor{deep_green}{HTML}{55a868}
\definecolor{deep_red}{HTML}{c44e52}
\definecolor{deep_purple}{HTML}{8172b3}
\definecolor{deep_brown}{HTML}{937860}
\definecolor{deep_pink}{HTML}{da8bc3}
\definecolor{deep_grey}{HTML}{8c8c8c}

\definecolor{DarkBlue}{RGB}{0,0,139}
\newif\ifshowchanges
\showchangestrue   % or \showchangesfalse
\newcommand{\GenerateBlobCoords}[5]{
    \pgfmathsetseed{#4}
    \def\N{7} % 7 points = Sexy Convexity
    \global\def\TempCoords{}
    \foreach \k in {1,...,\N}{
        \pgfmathsetmacro{\ang}{(\k * (360/\N)) + (rand * 15)}
        \pgfmathsetmacro{\rad}{#3 + (rand * 0.15 * #3)}
        \pgfmathsetmacro{\bx}{#1 + \rad*cos(\ang)}
        \pgfmathsetmacro{\by}{#2 + \rad*sin(\ang)}
        \xdef\TempCoords{\TempCoords (\bx,\by)}
    }
    \global\let#5\TempCoords
}

\newcommand{\DrawBlob}[2]{
    \draw[
        color=#2, 
        ultra thick, 
        fill=#2, 
        fill opacity=0.1,
    ] plot [closed hobby, tension=1] coordinates { #1 };
}
\newcommand{\EntropyBlob}[5]{
    \pgfmathsetseed{#4}
    
    \def\N{7} % 7 points = Sexy Convexity
    \global\def\BlobCoords{}
    \foreach \k in {1,...,\N}{
        \pgfmathsetmacro{\ang}{(\k * (360/\N)) + (rand * 15)}
        \pgfmathsetmacro{\rad}{#3 + (rand * 0.15 * #3)}
        
        \pgfmathsetmacro{\bx}{#1 + \rad*cos(\ang)}
        \pgfmathsetmacro{\by}{#2 + \rad*sin(\ang)}
        
        \xdef\BlobCoords{\BlobCoords (\bx,\by)}
    }
    
    \edef\DrawCmd{
        \noexpand\draw[
            color=#5, 
            ultra thick, 
            fill=#5, 
            fill opacity=0.1,
        ] plot [closed hobby, tension=1] coordinates { \BlobCoords };
    }
    \DrawCmd
}

\newcommand{\CigarBlob}[6]{
    \pgfmathsetseed{#5}

    \def\N{7}

    \global\def\BlobCoords{}

    \foreach \k in {1,...,\N}{
        \pgfmathsetmacro{\t}{
            (\k-1)*(360/\N) + rand*8
        }

        \pgfmathsetmacro{\stretch}{
            1 + 0.25*cos(2*\t)
        }

        \pgfmathsetmacro{\noise}{
            1 + 0.10*rand
        }

        \pgfmathsetmacro{\bx}{
            #1 + #3*\stretch*\noise*cos(\t)
        }

        \pgfmathsetmacro{\by}{
            #2 + #4*\noise*sin(\t)
        }

        \xdef\BlobCoords{\BlobCoords (\bx,\by)}
    }

    \edef\DrawCmd{
        \noexpand\draw[
            color=#6,
            ultra thick,
            fill=#6,
            fill opacity=0.1
        ]
        plot [closed hobby, tension=1]
        coordinates { \BlobCoords };
    }

    \DrawCmd
}

\hypersetup{
    colorlinks=true,
    linkcolor=deep_red,      % Internal links (cref, ref) - matches your primary color
    citecolor=deep_green,     % Citations - distinct but harmonious
    urlcolor=deep_blue,     % URLs - different enough to distinguish
    filecolor=deep_purple     % File links
}

\dottedcontents{section}[2.3em]{}{2.3em}{5pt}

\makeatletter

\newcommand{\stored@titlecmd}{\@latex@warning@no@line{Title command not stored}}
\newcommand{\stored@authorcmd}{\@latex@warning@no@line{Author command not stored}}

\AfterEndPreamble{%
  \global\let\stored@titlecmd\@title
  \global\let\stored@authorcmd\@author
}

\newcommand{\appendixmaketitle}[1]{%
  {%
  \par
  \begingroup
    \let\thanks\@gobble
    \thispagestyle{plain}
    \vbox{%
      \hsize\textwidth
      \linewidth\hsize
      \vskip 0.1in
      \@toptitlebar
      \centering
      {\Large\bf #1\par}
      \@bottomtitlebar

      \if@anonymous % <--- CHANGED for neurips_2026.sty
        \begin{tabular}[t]{c}\bf\rule{\z@}{24\p@}
          Anonymous Author(s) \\
          Affiliation \\
          Address \\
          \texttt{email} \\
        \end{tabular}%
      \else
        \def\And{%
          \end{tabular}\hfil\linebreak[0]\hfil%
          \begin{tabular}[t]{c}\bf\rule{\z@}{24\p@}\ignorespaces%
        }
        \def\AND{%
          \end{tabular}\hfil\linebreak[4]\hfil%
          \begin{tabular}[t]{c}\bf\rule{\z@}{24\p@}\ignorespaces%
        }
        \begin{tabular}[t]{c}\bf\rule{\z@}{24\p@}\stored@authorcmd\end{tabular}%
      \fi
      \vskip 0.3in \@minus 0.1in
    }%
  \endgroup
  \par
  \vskip \bigskipamount
  }%
}

\makeatother

\newif\ifshowfigures
\showfiguresfalse

\newcommand{\asmOneA}{Asm.~\hyperref[assumption:hypothesis-class:no-bias]{1a}}
\newcommand{\asmOneB}{Asm.~\hyperref[assumption:hypothesis-class:abs-cont]{1b}}
\newcommand{\asmOneC}{Asm.~\hyperref[assumption:hypothesis-class:finite-dim]{1c}}

\newcommand{\asmOneShortA}{\hyperref[assumption:hypothesis-class:no-bias]{1a}}
\newcommand{\asmOneShortB}{\hyperref[assumption:hypothesis-class:abs-cont]{1b}}
\newcommand{\asmOneShortC}{\hyperref[assumption:hypothesis-class:finite-dim]{1c}}

\newcommand{\asmOneShortAB}{Asms.~\hyperref[assumption:hypothesis-class:no-bias]{1a}--\hyperref[assumption:hypothesis-class:abs-cont]{1b}}
\newcommand{\asmOneShortBC}{Asms.~\hyperref[assumption:hypothesis-class:abs-cont]{1b}--\hyperref[assumption:hypothesis-class:finite-dim]{1c}}

\usepackage{amsfonts,bm}

\newcommand{\figleft}{{\em (Left)}}

\newcommand{\figright}{{\em (Right)}}
\newcommand{\figtop}{{\em (Top)}}
\newcommand{\figbottom}{{\em (Bottom)}}

\newcommand{\captiona}{{\em (a)}}
\newcommand{\captionb}{{\em (b)}}
\newcommand{\captionc}{{\em (c)}}
\newcommand{\captiond}{{\em (d)}}

\def\1{\bm{1}}

\def\vgamma{{\bm{\gamma}}}

\def\tgamma{\tilde{{\gamma}}}
\def\stgamma{\smash[t]{\tilde{{\gamma}}}}

\def\vtheta{{\bm{\theta}}}

\def\vw{{\bm{w}}}
\def\mA{{\bm{A}}}

\def\mI{{\bm{I}}}

\def\mSigma{{\bm{\Sigma}}}

\def\tb{{\tilde{b}}}
\def\stb{{\smash[t]{\tilde{b}}}}
\def\tX{{\widetilde{X}}}
\def\stX{{\smash[t]{\widetilde{X}}}}
\def\tsigma{\smash[t]{\tilde{\sigma}}}

\def\tV{{\widetilde{V}}}
\def\stV{{\smash[t]{\widetilde{V}}}}

\def\sA{{\mathcal{A}}}
\def\sB{{\mathcal{B}}}
\def\sC{{\mathcal{C}}}
\def\sE{{\mathcal{E}}}
\def\sF{{\mathcal{F}}}
\def\sH{{\mathcal{H}}}
\def\sI{{\mathcal{I}}}
\def\sK{{\mathcal{K}}}

\def\sN{{\mathcal{N}}}

\def\sP{{\mathcal{P}}}
\def\sQ{{\mathcal{Q}}}
\def\sS{{\mathcal{S}}}
\def\sT{{\mathcal{T}}}

\def\sV{{\mathcal{V}}}

\newcommand{\R}{\mathbb{R}}
\newcommand{\Var}{\mathrm{Var}}

\newcommand{\Cov}{\mathrm{Cov}}
\newcommand{\var}{\mathrm{Var}}

\DeclareMathOperator*{\argmin}{argmin}

\newcommand{\mgeq}{\succcurlyeq}
\newcommand{\mgt}{\succ}

\newcommand{\indep}{\mathrel{\perp\mkern-10.5mu\perp}}

\newcommand\given{\;\!\middle\vert\;\!}

\NewDocumentCommand\intervention{mg}{
    \ensuremath{
        \operatorname{do}\pr*{ 
            #1 \IfNoValueTF{#2}{}{ = #2 }
        }
    }
}

\newcommand{\target}{\star}

\newcommand{\inv}{\textnormal{\i}\mathrm{nv}}

\newcommand{\erm}{\mathrm{erm}}
\newcommand{\term}{%
  \smash[t]{%
    \mkern7mu
    \widetilde{\mkern-7mu\mathrm{erm}\mkern-7mu}%
    \mkern7mu
  }%
}

\newcommand{\pR}{r}
\newcommand{\iv}{\textnormal{\i}\mathrm{v}}
\newcommand{\tiv}{\smash[t]{\widetilde{\textnormal{\i}\mathrm{v}}}}
\newcommand{\pid}{\mathrm{p}\textnormal{\i}}
\newcommand{\tpid}{\smash[t]{\widetilde{\pid}}}
\newcommand{\IPM}{D}
\newcommand{\DPI}{\rho}

\newcommand{\vtau}{\bm{\tau}}

\def\vx{{\bm{x}}}

\def\vz{{\bm{z}}}

\newcommand{\tg}{\tilde{g}}

\newcommand{\tZ}{\widetilde{Z}}
\newcommand{\stZ}{\smash[t]{\widetilde{Z}}}

\NewDocumentCommand\cov{mg}{
    \ensuremath{
        {\bm{\Sigma}}_{#1}^{\IfNoValueTF{#2}{}{#2}}
    }
}
\NewDocumentCommand\tCov{mg}{
    \ensuremath{
        \smash{\tilde{\bm{\Sigma}}}_{#1}^{\IfNoValueTF{#2}{}{#2}}
    }
}

\newcommand{\E}{\mathbb{E}}

\NewDocumentCommand\0{mg}{
\ensuremath{
    \mathbf{0}_{ #1 \IfNoValueTF{#2}{}{\times #2}}
}
}

\def\sX{{\mathcal{X}}}
\def\sY{{\mathcal{Y}}}
\def\sZ{{\mathcal{Z}}}
\def\sG{{\mathcal{G}}}
\def\sD{{\mathcal{D}}}
\def\sR{{\mathcal{R}}}

\def\sPi{{\mathcal{\Pi}}}

\def\Id{{\rm Id}}

\def\shift{\sS}
\def\varShift{\sK}
\def\meanShift{\delta}
\def\meanShiftAdj{\meanShift^\ast}
\def\mNoise{{\eta}}
\def\eps{{\varepsilon}}

\DeclarePairedDelimiter\abs{\lvert}{\rvert}
\DeclarePairedDelimiter{\pr}{(}{)}
\DeclarePairedDelimiter{\br}{\{}{\}}
\DeclarePairedDelimiter{\sqPr}{[}{]}
\DeclarePairedDelimiter\norm{\lVert}{\rVert}
\newcommand{\sqNorm}[1]{\norm*{#1}^2}

\newcommand{\EE}[3]{\mathbb{E}_{#1}^{#2}\sqPr*{\;\! #3 \;\!}}

\newcommand{\eE}[3]{\mathbb{E}_{#1}^{#2}\sqPr{ #3 }}
\def\ee{\mathbb{E}}

\newcommand{\Risk}[3]{R_{#1}^{#2}\pr*{#3}}

\newcommand{\loss}[1]{\ell\pr*{#1}}
\newcommand{\tr}[1]{ \operatorname{tr}\pr*{#1} }
\newcommand{\Span}[1]{ \operatorname{span}\pr*{#1} }

\newcommand{\Null}[1]{ \operatorname{null}\pr*{#1} }
\newcommand{\Rank}[1]{ \operatorname{rank}\pr*{#1} }

\newcommand{\error}[1]{ E^{#1} }

\newcommand{\residual}{\xi}

\newcommand{\func}[2]{#1\pr*{#2}}

\NewDocumentCommand\distribution{mg}{
    \ensuremath{
        p\IfNoValueTF{#2}{}{ ^{\scriptstyle{#2}} }_{ \scriptstyle{#1}}
    }
}

\NewDocumentCommand\candidateDistribution{mg}{
    \ensuremath{
        q_{ \scriptstyle{#1}} \IfNoValueTF{#2}{}{ ^{#2} }
    }
}

\def\diff{\mathrm{d}}
\def\graph{{\sG}}
\def\tgraph{{\widetilde\sG}}
\newcommand{\Pa}[1]{ {\rm Pa}_\graph\pr*{#1} }
\newcommand{\pa}[1]{ {\rm pa}_\graph\pr*{#1} }

\newcommand{\NonDesc}[1]{ {\rm NonDesc}_\graph\pr*{#1} }
\newcommand{\paperTitle}{Symmetry-Informed\\ Causal Partial Identification}
\title{\paperTitle}

\author{%
  \href{mailto:uzair.akbar@gatech.edu?Subject=Your paper on symmetry-informed partial identification}{Uzair Akbar}\\
  Georgia Tech
  \And
  Zulfiqar Zaidi\\
  Georgia Tech
  \And
  Niki Kilbertus\\
  TU Munich\\
  Helmholtz Munich
  \And
  Krikamol Muandet\\
  Rational Intelligence \\ CISPA
  \And
  ~Bo~Dai\\
  Georgia Tech
}

\begin{document}
\maketitle
% The ICLR style sets the running header inside \maketitle
% ("Published as a conference paper at ICLR 2027"); override it here.
\lhead{Preprint.}

\begin{abstract}
Partial identification (PI) entails estimating bounds on causal effects by encoding different assumptions on data generation as a constrained optimization problem. Such bounds can suffice to inform policy decisions even if the causal effect itself is not identifiable. Often vacuous in practice, practitioners seek to exhaustively encode domain knowledge as additional constraints to make the PI bounds more informative. We introduce known \emph{data symmetries}---invariance of the causal effect under certain data transformations---as a new source of constraints to inform PI. We operationalize this as a \emph{shape constraint} on the causal function, and via a \emph{change of measure} against which PI is posed using simple data pre-processing. 
Both approaches are shown to sharpen bounds under two canonical PI models. This is shown both theoretically for the population case, and via experiments in the finite-sample case. More broadly, our framework establishes data symmetries as a natural, underutilized source of background knowledge for robust causal inference.
\end{abstract}

\section{Introduction}
Interventions are the gold standard for answering causal queries~\citep{Pearl_2009,eoci}. Absent interventions, observed associations may be due to hidden confounders rather than causal effects, so researchers~resort~to~adjusting via auxiliary variables, which rest on strong, untestable assumptions that~may~simply~fail~\citep{iv_rui}.
\nocite{making-sensemakr}
\nocite{rosenbaum2002observational}
Partial identification (PI) offers a principled relaxation to this problem. Rather than insisting on assumptions sufficient for point identification, PI returns \emph{bounds} on the causal effect under a milder set of assumptions~\citep{causal-programming}. These bounds are constructed by considering all possible data-generating processes (DGPs) compatible with both the observed data and the chosen assumptions. Consequently, they are only as informative as the strength of said assumptions, and can often be near-vacuous. To make PI actionable for decision-making, practitioners must exhaustively incorporate credible domain knowledge into the PI solver to restrict the set of compatible DGPs, thereby sharpening the bounds.
\nocite{leaky-iv}
\nocite{pmlr-v258-penn25a}
\nocite{plausibly-exogenous}
\nocite{misspecified-npiv1}
\nocite{misspecified-npiv2}
\nocite{r2-leaky-iv}

While existing PI families encode domain knowledge by bounding the strength of hidden confounding~\citep{MSM,rosenbaum2002observational}, positing a causal graph~\citep{implicit-g-constrained-pid}, or imposing shape restrictions on the class of causal functions~\citep{shape-constraints}, a natural and accessible source of prior knowledge remains underutilized: \emph{known~data~symmetries}. Symmetries describe properties of a system that remain invariant under certain data transformations; for example, image content may not change under rotation/reflection/translation~\citep{bronstein2021geometric}.
Readily used in statistical regularization~\citep{group-theory-da-jmlr,invariance_benefit}, we propose re-purposing known symmetries as a natural shape restriction on the set of candidate causal functions to sharpen~PI~bounds.

\paragraph{Contributions.} We give two operationalizations: an invariance shape constraint on the causal functions added to white-box PI solvers~(\cref{sec:explicit-constraint}), and a change-of-measure approach where off-the-shelf or black-box solvers are simply fed pre-processed data via symmetry transformations~(\cref{sec:implicit-constraint}). 
We license the latter by showing it realizes a soft intervention: preserving the causal mechanism and graph up to relabeling, while diluting confounding. Thus, validity is preserved on aforementioned PI families. We then develop observable conditions for bound sharpening, and a simple pad-and-intersect protocol to guard against misspecified symmetry under two canonical PI models: bounded confounding~\citep{making-sensemakr,long-story-short} and~leaky~instruments~\citep{misspecified-npiv1,r2-leaky-iv}. Finally, we validate both approaches on simulated and real data.

\section{Preliminaries}
\label{sec:preliminaries}

%%%%%%%%%%%%%%%%%%%%%%%%%%%%%%%%%%%%%%%%%%
%%%%%%%%%%%%%%%%%%%%%%%%%%%%%%%%%%%%%%%%%%

\subsection{Statistical vs. causal inference}
\label{sec:statistical-vs-causal-inference}
Given $n$ samples $\sD\coloneqq \br*{ (\vx_i, y_i, \vz_i) }_{i\in [n]}$ from the observational distribution $\distribution{XYZ}$ of treatment $X\in\sX\subseteq \R^{d_{\vx}}$, outcome $Y\in\sY\subseteq \R$ and pre-treatment co-variates $Z\in\sZ\subseteq \R^{d_{\vz}}$, we aim to find the function $h_{\target}\in\sH\coloneqq \{ h : \sX\to \R\}$ defining the causal effect of $X$ on $Y$ under the following~DGP:
\begin{align}
\label{eq:canonical-sem}
    Y = \func{h_{\target}}{X} + U + \residual
    ,
    &&
    \EE{}{}{U} = \EE{}{}{\residual} = 0
    ,
    &&
    \residual \indep (U, X, Z)
    .
\end{align}
The unobserved noise $(\residual, U)$ is assumed mean-centered w.l.o.g., with the intercept absorbed into $h_{\target}$. 
Independence of noise $U \indep X$ implies $\EE{}{}{Y\given \vx} = \func{h_{\target}}{\vx}$. \emph{Statistical inference} entails identifying precisely the Bayes optimal predictor $\EE{}{}{Y\given \vx}$ from $\sD$ via \emph{empirical risk minimization} (ERM): for a proper, convex loss $\ell:\R\times\R\rightarrow\R_{+}$, minimize over $h\in\sH$ an empirical version of the \emph{statistical~risk},
\begin{equation}
    \label{eq:statistical-risk}
    \Risk{\erm}{}{ h } \coloneqq \EE{}{}{ \loss{ Y, h\pr*{X} } } .
\end{equation}
Then, for a sufficiently rich hypothesis class, the minimizer $h_{\erm}$ provides an unbiased~estimate~of~$h_{\target}$.~

However, independence of $X$ and the unobserved noise $U$ may not hold in general, in which case $\EE{}{}{U \given \vx} \neq 0$ and the conditional $\EE{}{}{Y\given \vx}$ now gives a biased estimate of $\func{h_{\target}}{\vx}$~\citep{Pearl_2009,eoci}. This dependence arises due to unobserved common causes $U$ of $X$ and $Y$, known as \emph{confounders}. We say that $X$ and $Y$ are confounded and refer to the resulting bias as \emph{confounding bias}~\citep{Pearl_2009}. The observed pre-treatment co-variates $Z$ may contain auxiliary information, e.g. proxies for such confounders, that can help to mitigate this bias. This encompasses \emph{causal inference} which seeks to identify $h_{\target}$ under hidden confounding, or at the very least account for it by finding bounds on $h_{\target}$ should point-identification not be possible. Both approaches are~outlined~below.

%%%%%%%%%%%%%%%%%%%%%%%%%%%%%%%%%%%%%%%%%%
%%%%%%%%%%%%%%%%%%%%%%%%%%%%%%%%%%%%%%%%%%

\subsection{Point identification via intervention and adjustment}
\label{sec:point-identification-via-intervention-and-adjustment}
We can make treatment $X$ and the confounder $U$ independent via an \emph{intervention} $\intervention{ X }{ \vx }$ that sets $X$ to a fixed value $\vx$ in \cref{eq:canonical-sem} during data generation. Under such a treatment intervention, henceforth denoted as $\intervention{ \vx }$ for brevity, outcome $Y$ follows the \emph{interventional distribution} $\distribution{Y \mid \intervention{\vx}}$. The loss from \cref{eq:statistical-risk} evaluated under $\distribution{Y \mid \intervention{\vx}}$ now defines the \emph{causal~risk} $R$~\citep{causal-regularization2},
\begin{align}
\label{eq:causal-risk}
    \Risk{ \vx }{}{y} \coloneqq \EE{}{}{ \loss{ Y, y } \given \intervention{\vx} },
    &&
    \Risk{}{}{h} \coloneqq \EE{ X\sim p_X }{}{ \Risk{X}{}{h(X)} }
    .
\end{align}
The target estimand of \cref{eq:causal-risk} is the \emph{average treatment effect (ATE)} $\EE{}{}{Y\given \intervention{\vx}}$ which equals $\func{h_{\target}}{\vx}$ for the data generation in \cref{eq:canonical-sem}~\citep{backdoor}. Minimizers of \cref{eq:causal-risk} hence yield unbiased estimates of $h_{\target}(\vx)$. To capture the \emph{estimation error} for a candidate hypothesis $h\in\sH$, we use \emph{causal~excess~risk}~\citep{causal-data-augmentation,causal-interpolation-regularization}
\begin{align*}
    E(h) \coloneqq \Risk{}{}{ h } - \Risk{}{}{ h_{\target} } ,
\end{align*}
which removes the irreducible noise from \cref{eq:causal-risk} and measures how far $h$ is from $h_{\target}$, with $E(h_{\target})=0$.
% \ifshowfigures
\begin{figure*}[tbp]
  \captionsetup{type=figure}\addtocounter{figure}{-1}
  \centering
  \begin{minipage}[t]{0.49\textwidth}
    \centering
    \begin{subfigure}[c]{0.55075\linewidth}
      \centering
          \includegraphics[width=0.7667\linewidth, trim=10pt 0pt 0pt 0pt, clip]{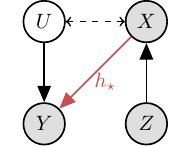}
      \caption{Hidden~confounding~model}
      \label{fig:sem}
    \end{subfigure}
    \hfill%
    \begin{subfigure}[c]{0.4225\linewidth}%
      \centering
          \includegraphics[width=1\linewidth, trim=10pt 0pt 0pt 0pt, clip]{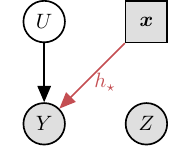}
      \caption{Intervention $\intervention{\vx}$}
      \label{fig:intervention}
    \end{subfigure}
    \captionof{figure}{
      {\captiona} Graph of~\cref{eq:canonical-sem} with IV~$Z$ and hidden noise~$U$ that confounds $(X,Y)$.
      {\captionb} An intervention $\intervention{X\!=\vx}$ removes this confounding.
    }
    \label{fig:iv-intervention}
  \end{minipage}
  \hfill
  \begin{minipage}[t]{0.49\textwidth}
    \centering
    \setlength{\fboxsep}{0pt}
    \begin{subfigure}[c]{0.5\linewidth}
      \centering
      \adjustbox{width=\linewidth, trim=0.6cm 0.1cm 0.1cm 0cm, clip}{
        \includegraphics{tikz/pi-x.tikz}
      }
    \end{subfigure}
    \hfill
    \begin{subfigure}[c]{0.46\linewidth}
      \centering
      \adjustbox{width=\linewidth, trim=0.275cm 0pt 0.1cm 0pt, clip}{
        \includegraphics{tikz/pi-metrics.tikz}
      }
    \end{subfigure}
    \captionof{figure}{
      PI sets $\sH_{\pid}(\vx)$ {\figleft} and $\sH_{\pid}$ {\figright} for various sensitivity parameters $\vgamma$ and different metrics of interest; estimation errors $E_{\vx}, E$, approx. errors $E^-_{\vx}, E^-$, worst-case errors $E^+_{\vx}, E^+$.
    }
    \label{fig:pi-cartoon-depictions}
    \vspace{-1\baselineskip}
  \end{minipage}
\end{figure*} %HERE
% \fi

\vspace{-\baselineskip}
As interventions are often inaccessible for computing the risk in~\cref{eq:causal-risk}, estimating $h_{\target}$ usually relies on access to the full joint distribution $p$ of $(U, X, Y, Z)$ via \emph{back-door~adjustment}~$h^p_{\target}$~\citep{Pearl_2009},
\begin{align*}
    \func{h^{p}_{\target}}{\vx} \coloneqq 
    \EE{}{}{ \EE{}{}{Y\given \vx, Z, U} } ,
    &&
    (U, X, Y, Z)\sim\distribution{} .
\end{align*}
Alternatively, $h_{\target}$ is recovered via \emph{instrumental variables (IVs)} $Z$ satisfying~(\rnum{1})~\textbf{un-confoundedness:} $Z\indep U$, (\rnum{2}) \textbf{exclusion:} $Y\indep Z\mid X,U$~\citep{npiv}. Conditioning~\cref{eq:canonical-sem}~on~$Z$:
\begin{equation}
    \label{eq:iv-problem}
    \EE{}{}{Y - h_{\target}(X)\given \vz} = \EE{}{}{U \given \vz} = 0 .
\end{equation}
An IV estimate $h_{\iv}$ solves~\cref{eq:iv-problem} for $h_{\target}$, which now requires only observed $\distribution{XYZ}$~\citep{pmlr-v258-sun25d}.

%%%%%%%%%%%%%%%%%%%%%%%%%%%%%%%%%%%%%%%%%%
%%%%%%%%%%%%%%%%%%%%%%%%%%%%%%%%%%%%%%%%%%

\subsection{Partial identification and sensitivity analysis}
\label{sec:partial-identification-and-sensitivity-analysis}
Uniqueness of the IV solution in~\cref{eq:iv-problem} needs a \emph{complete} $Z$, i.e. $\EE{}{}{h(X)\given Z}=0$ iff $h(X)=0$ \citep{misspecified-npiv1}. Without completeness, $h_{\target}$ is unidentifiable~\citep{niki-targeted-sequential-iv}. Furthermore, the IV properties themselves may be violated~\citep{leaky-iv,pmlr-v258-penn25a,plausibly-exogenous,r2-leaky-iv}. And since $U$ is unobserved, $h_{\target}$ is generally not recoverable from $\distribution{XYZ}$ alone. Nevertheless, milder assumptions on the model in \cref{eq:canonical-sem} still allows \emph{partial identification} (PI) of $h_{\target}$ by considering \emph{all} the joint distributions $\candidateDistribution{}$ consistent with said assumptions~\citep{causal-programming},
\begin{equation*}
    \func{\sQ_{\pid}}{\distribution{XYZ}} \coloneqq 
    \br[\Big]{
    \candidateDistribution{}\in \sP_{\!\pid}
    \, : \,
    \IPM_{\sF}(
        \distribution{V}
        ,
        \candidateDistribution{V}
    )
    = 0
    } ,
    \qquad
    \IPM_{\sF}(p_V, q_V)\! \coloneqq \sup_{ f\in\sF } \abs[\Big]{\, \mathop{\mathbb{E}}\limits_{p}[f(V)] - \mathop{\mathbb{E}}\limits_{q}[f(V)] \,}
    ,
\end{equation*}
where some discrepancy metric $\IPM_{\sF}$, chosen here to be an \emph{integral probability metric} (IPM)~\citep{IMP}, matches $q, p$ on observed variables $V\subseteq\{X, Y, Z\}$ over a set of witness functions $\sF=\{f:\sV\to\R\}$ and the constraint set $\sP_{\!\pid}$ encodes our assumptions on the DGP~\cref{eq:canonical-sem}. If correctly specified, $\distribution{}\in\sP_{\!\pid}$, and the following set $\sH_{\pid}$ of candidate hypotheses contains the true $h_{\target}$,% and interval $\sH_{\pid}(\vx)\ni h_{\target}(\vx)$,
\begin{align*}
    \sH_{\pid} \coloneqq 
    \big\{
    h^{\candidateDistribution{}}_{\target}
    \, : \,
    \candidateDistribution{} \in \sQ_{\pid}
    \big\} ,
    &&
    \func{\sH_{\pid}}{\vx} \coloneqq 
    \big\{
    \func{h_{\target}^{q}}{\vx}
    \, : \,
    q \in \sQ_{\pid}
    \big\}
    ,
\end{align*}
where $\sQ_{\pid}$ abbreviates $\func{\sQ_{\pid}}{\distribution{XYZ}}$. Computing interval $\func{\sH_{\pid}}{\vx}$ at $\vx$ is often more practical than characterizing set $\sH_{\pid}$, as for convex $\sQ_{\pid}$ it amounts to solving two~constrained~optimization~problems
\begin{equation*}
\label{eq:min-bound-max-bound-optimization}
    \func{\sH_{\pid}}{\vx} = 
    \sqPr*{\;
        \min\nolimits_{\candidateDistribution{}\in\sQ_{\pid}}\func{h^{\candidateDistribution{}}_{\target}}{\vx},\quad
        \max\nolimits_{\candidateDistribution{}\in\sQ_{\pid}}\func{h^{\candidateDistribution{}}_{\target}}{\vx}
    \;} .
\end{equation*}
In either case, we want the identified sets to (\rnum{1}) contain the true solution, and (\rnum{2}) be sharp, i.e. as small as possible. Of course these respectively depend on the validity and strength of assumptions that one is willing to make within $\sP_{\!\pid}$, with point-estimation now being a special case where $\sQ_{\pid}$~is~singleton.

Hence, one may parameterize the constraints as $\sP^\vgamma_{\!\pid}$ to conduct \emph{sensitivity analysis}~by~varying~parameters $\vgamma$ to see how $\func{\sH^\vgamma_{\pid}}{\vx}, \sH^\vgamma_{\pid}$ evolve as assumptions are~relaxed/tightened~\citep{neural-sensitivity-pid}.

Lastly, since we are now discussing hypothesis sets, we define the three appropriate evaluation metrics
\begin{align*}
    \func{ E^- }{\sH_{\pid}} \coloneqq \smash[t]{\min_{ h\in\sH_{\pid}}} E(h)
    ,
    \qquad
    \func{ \overline{E} }{\sH_{\pid}} \coloneqq 
    \mathbb{E}_{h\sim{\rm Unif}(\sH_{\pid})}[
        E( h )
    ]
    ,
    \qquad
    \func{ E^+ }{\sH_{\pid}} \coloneqq \smash[t]{\max_{ h\in\sH_{\pid}}} E(h)
    .
\end{align*}
The \emph{approximation~error} $E^-$
measures how far the target $h_{\target}$ is from~$\sH_{\pid}$~\citep{estimation-approximation}, enabling a notion of approximate validity: $\sH_{\pid}$ is \emph{$\epsilon$-approximately valid}, or just \emph{$\epsilon$-valid}, whenever $E^-(\sH_{\pid})\leq \epsilon$~\citep{e-valid}.
The \emph{worst-case error} $E^+$ gives our primary notion of bound \emph{informativeness} for decision making, upper bounding the excess risk of candidates within $\sH_{\pid}$ relative to $h_{\target}$. We additionally introduce the \emph{average error} $\overline{E}$~\citep{avg-error} as an analytic surrogate for informativeness, approximating worst-case error up to a universal constant factor~(see~\cref{lemma:informativeness-surrogate}).

Similarly, the intervals $\func{\sH_{\pid}}{\vx}$ are evaluated using $E^-_{\vx}(\func{\sH_{\pid}}{\vx})$, $\overline{E}_{\vx}(\func{\sH_{\pid}}{\vx})$ and $E^+_{\vx}(\func{\sH_{\pid}}{\vx})$. We define these analogously to $E^- \!\! ,\overline{E}$ and $E^+$ respectively, but with $R_{\vx}$ as the primitive instead of~$R$.

\paragraph{Choice of constraints.} The construction of $\sP_{\!\pid}$ depends on domain knowledge. At the agnostic extreme, \citeauthor{manski}'s~\citeyearpar{manski} ``no-assumption'' bounds use only a bounded support of $Y$ to constrain $\sH_{\pid}(\vx)$ for binary $\vx$. Other approaches bound the strength of unmeasured confounding through odds ratios~\citep{rosenbaum2002observational}, propensity scores in the Marginal Sensitivity Model (MSM)~\citep{MSM}, or the share of variance explained by hidden confounders in the partial~$r^2$ model~\citep{making-sensemakr,long-story-short}. Robust optimization (RO) formulations instead take $\sQ_{\pid}$ to be a total-variation~\citep{hallucinatedDRO,dro-pid} or Wasserstein~\citep{implicit-g-constrained-pid} ball around $\distribution{XYZ}$. Alternatively, one may directly model the \emph{response function} $\func{H_{\target}}{\cdot}\coloneqq \func{h_{\target}}{\cdot} + U + \residual$~\citep{causal-programming}, e.g., with GANs matched to $\distribution{XYZ}$~\citep{gan-pid}. Many methods further sharpen bounds via conditional independences on auxiliary variables~\citep{niki-targeted-sequential-iv,CopSens}. A notable example is IV-based PI~\citep{balke1997bounds}, where $U$ may affect $Y$ non-additively, unlike \cref{eq:canonical-sem}. Neural variants extend this to continuous, high-dimensional treatments and IVs~\citep{iv_niki,gan-pid,causal-programming,implicit-g-constrained-pid,vacuous3}.

Perhaps the closest to our work are \emph{shape constraints}~\citep{shape-constraints}, which sharpen PI by restricting the class of structural or response functions $h_{\target}$ or $H_{\target}$. Examples include positivity~\citep{positivity}, monotonicity~\citep{monotone-treatment-response,monotone-iv}, convexity~\citep{convexity}, curvature~\citep{curvature}, smoothness~\citep{smoothness}, sparsity~\citep{sparse-iv}, and other regularities~\citep{implicit-g-constrained-pid}, each narrowing the set of compatible DGPs. In the same vein, we propose known invariances of $h_{\target}$ as a natural shape constraint.

%%%%%%%%%%%%%%%%%%%%%%%%%%%%%%%%%%%%%%%%%%
%%%%%%%%%%%%%%%%%%%%%%%%%%%%%%%%%%%%%%%%%%

\begin{wrapfigure}{r}{0.44\textwidth}
  \vspace{-40pt} % tweak as needed
  \centering
  % \hspace{-1em}
  \begin{subfigure}[c]{0.49\linewidth}
    \centering
    \includegraphics[width=\linewidth]{tikz/da-graph.tikz}
    \caption{Data augmentation}
    \label{fig:da-graph}
  \end{subfigure}
  \hfill
  \begin{subfigure}[c]{0.49\linewidth}
    \centering
    \includegraphics[width=\linewidth]{tikz/transformation-intervention-graph.tikz}
    \caption{Soft-intervention}
    \label{fig:intervention-da-equivalence}
  \end{subfigure}
  \caption{
      For $\sT$-inv. $h_{\target}$, {\captiona} DA on observed data induces an identical distribution to {\captionb} a soft intervention ${\rm do}(X = \smash{\tX})$ on~\cref{eq:canonical-sem}.
  }
  \label{fig:da-intervention-graph}
  \vspace{-0pt}
\end{wrapfigure}
\subsection{Known data symmetries and invariance}
\label{sec:known-data-symmetries-and-invariance}
\emph{Symmetries} are ubiquitous in scientific modeling, describing structural properties that do not change under certain data transformations \citep{bronstein2021geometric}. Formalized as a map $\alpha: \sX\times\sT\rightarrow \sX$ parameterized by $\vtau\in\sT$ with $\vtau\vx\coloneqq\alpha(\vx, \vtau)$ as shorthand, we say that $h_{\target}$ is \emph{invariant} under $\sT$, or \emph{$\sT$-invariant}, if
\begin{align*}
    \func{h_{\target}}{\vtau\vx} = \func{h_{\target}}{\vx} ,
    &&
    \forall
    \;
    (\vx, \vtau)\in \sX\times\sT .
\end{align*}
This captures the notion that certain transformations preserve the causal relationship $X\rightarrow Y$. E.g., spatial translations in physics~\citep{symmetryPhysics}, permutations of exchangeable units~\citep{deepSets}, semantics-preserving text transformations~\citep{symmetryNLP}, and reverse-complements in genomics~\citep{geometricCausalModels}. In image classification, for instance, a rotated image $\vtau\vx$ of a cat $\vx$ remains a cat~\citep{groupCNN}.

In practice, symmetries may hold only approximately, and $h_{\target}$ is \emph{$(\eps, \sT)$-inv.} if~\citep{invarianceError0}
\begin{equation*}
    \sup\nolimits_{\vx, \vtau}\; \abs{
        h_{\target}(\vx) - h_{\target}(\vtau\vx)
    } \leq \eps .
\end{equation*}
Symmetry assumptions are well-established across~many~applications and routinely used to improve statistical estimation. For example, given some probability measure $\distribution{T}$ on $\sT$, \emph{data augmentation} (DA) reduces estimation variance by generating multiple $m$ augmentations $\{ (\vtau_{i, j} \vx_i,\; y_i) \}_{j\in [m]}, \vtau_{i, j}\sim p_T$ for each sample $(\vx_i, y_i, \cdot)\in\sD$, increasing data variability for~statistical~risk~evaluation~\citep{invariance_benefit,group-theory-da-jmlr,savkin2020adversarial},
\begin{align}
    \label{eq:data-augmentation-risk}
    R_{ \term }( h ) \coloneqq \mathbb{E}\sqPr[\big]{
        \ell\pr[\big]{Y, h( \stX ) }
    } ,
\end{align}
where we denote $\stX\coloneq TX$ as random perturbation of $X$ under $T\sim p_T$, such as depicted in~\cref{fig:da-graph}.

While DA is canonically used to mitigate finite-sample estimation variance, our focus is primarily on the infinite-sample setting, and we present~\cref{eq:data-augmentation-risk} and subsequent theoretical results in that context. Nonetheless, increasing sample size via DA also bears on our work, a point we shall briefly discuss.

Despite their success in statistical inference, symmetry priors remain understudied in causal inference~\citep{causal-data-augmentation}. This gap is particularly consequential for~PI, where bounds are often vacuous and practitioners seek to exhaustively impose constraints encoding domain knowledge to sharpen them. In the language of \cref{sec:partial-identification-and-sensitivity-analysis}, $\sT$-inv. is precisely a shape constraint on $h_{\target}$. Symmetry assumptions hence provide a natural and powerful addition to this toolkit, especially in domains where they are already established. In what follows, we provide a first analysis of their application~to~sharpen~PI.

%%%%%%%%%%%%%%%%%%%%%%%%%%%%%%%%%%%%%%%%%%
%%%%%%%%%%%%%%%%%%%%%%%%%%%%%%%%%%%%%%%%%%

\section{A Symmetry Shape Constraint}
\label{sec:explicit-constraint}
Following the shape constraint tradition in PI, the most direct way is to incorporate $\sT$-inv. as a solver feasibility requirement. E.g., impose an \emph{invariance error} constraint with $\eps$ as a sensitivity parameter:
\begin{align*}
    E_{\inv}(h) \coloneqq \mathbb{E}_p\sqPr[\big]{
        \abs{
            \func{h}{X} - h( \stX )
        }^2
    },
    &&
    \sP_{\!\pid+\inv} \coloneqq 
    \br*{
    \;
    \candidateDistribution{} \in \sP_{\!\pid}
    \, : \,
    E_{\inv}( h^{\candidateDistribution{}}_{\target} ) \leq \eps^2
    \;
    } .
\end{align*}
yielding the \emph{sharp} set $\sH_{\pid+\inv} = \sH_{\pid}\cap\sH^{\eps}_{\inv}$, with $\sH^{\eps}_{\inv}\coloneqq\br*{h\in \sH : E_{\inv}(h)\leq\eps^2}$ (\cref{corollary:sharp-inv-constraints}, cf.~\cref{app:explicit-vs-implicit}). Other $E_{\inv}$ functionals may serve equally~\citep{fanny}, as does restricting $\sH$ to be invariant by design~\citep{groupCNN}, though the latter can sit awkwardly with standard PI solvers and may not admit sensitivity parameters.

While most of our experiments focus on settings where $\func{\sH_{\pid+\inv}}{\vx}$ is computable via convex programming, this constraint formulation is compatible with modern deep learning-based PI. Since $E_{\inv}(\cdot)$ is differentiable and amenable to Monte Carlo evaluation, it can be incorporated as a regularizer in augmented Lagrangian or gradient-based solvers~\citep{causal-programming,implicit-g-constrained-pid,gan-pid}. Nevertheless, there are operational limitations: it requires white-box access to the solver's objective, which precludes the specialized or black-box software where the optimization is fixed. Furthermore, optimizing non-convex invariance~constraints for complex, high-dimensional data can be expensive and notoriously unstable~\citep{pid-representation-niki,causal-programming}. This motivates our second approach; an operationalization that touches only the data,~not~the~solver.

%%%%%%%%%%%%%%%%%%%%%%%%%%%%%%%%%%%%%%%%%%
%%%%%%%%%%%%%%%%%%%%%%%%%%%%%%%%%%%%%%%%%%

\section{A Change of Measure under Symmetry Transformations}
\label{sec:implicit-constraint}
\paragraph{DA as a change of measure.}
Where~\cref{sec:explicit-constraint} constrains the feasible set, we instead change the measure $p_{XYZ}\to p_{\stX YZ}$ under which PI is posed while leaving the solver untouched, composing with it as
\begin{align*}
    \func{\sQ_{ \tpid }}{\distribution{XYZ}} \coloneqq \func{\sQ_{\pid}}{\distribution{\stX Y Z}} .
\end{align*}
This change of measure reweights candidate evaluations, pruning previously feasible functions while, crucially, preserving validity across broad families of standard solvers. We now isolate the two mechanisms driving these effects: variance inflation (sharpening) and simulated intervention~(validity).

\paragraph{Variance inflation.}
In IV methods, “strong” IVs that induce more exogenous variation in $X$ yield sharper PI bounds than weak ones \citep{iv_niki,causal-programming}. So we pose~the~question:
{\setlength{\topsep}{0pt}
\begin{quote}
\itshape
\hfill Can the synthetic exogenous variation introduced by DA likewise sharpen PI?\hfill\vphantom{.}
\end{quote}}
We answer yes. Fundamentally, PI uncertainty is an \emph{attribution problem}; the solver must split observed $(X, Y)$ covariation between $h_{\target}$ and $U$. Only variation in $(X, Y)$ whose relation to $U$ is \emph{known}, e.g. an IV $Z$, can break the tie. DA adds exactly such variation since $T$ is exogenous by construction and cannot be attributed to $U$. This reduces ambiguity in PI, giving sharper, more informative bounds. This effect is formalized in~\cref{sec:better-bounds}, here we give a notion of variance inflation~itself.

\emph{Fit vs. evaluation geometries.} Center $Y$ w.l.o.g., then $\mathbb{E}[h_{\target}(X)]=0$. So \emph{estimation} of $h_{\target}$ lives only in the centered \emph{shape space} $\sH_X\coloneqq\br*{ h\in\sH \colon \E[h(X)] = 0 }$, which we equip with covariance forms
\begin{align*}
    \langle f, g \rangle_X \coloneqq \Cov(f(X), g(X))
    ,
    \qquad
    \sqNorm{h}_X
    \coloneqq
    \Var(h(X))
    ,
\end{align*}
and $\pr*{\sH_{\stX}, \langle \cdot, \cdot \rangle_\stX}$ likewise under $p_\stX$. Of course, \emph{evaluation} is still reported in the ambient $L_2(X)$-norm; in particular, the causal excess risk under a sq.-loss $\ell(a, b)\!\coloneqq\!(a\! -\!b)^2$ decomposes orthogonally,
\begin{align*}
    \label{eq:errors}
    E(h) = \bigl\| h - h_{\target} \bigr\|^{2}_{L_2(X)}
    =
    \underbrace{
        \sqNorm{
            h - h_{\target}
        }_X
    }_{\text{shape; lives on $\sH_X$}}
    +
    \;\;\;
    \underbrace{
        \pr[\big]{
            \EE{}{}{
                h(X)
            } - 
            \EE{}{}{
                h_{\target}(X)
            }
        }^2
    }_{\text{level; lives off $\sH_X$}}
    .
\end{align*}
Fitting in $\pr*{\sH_{\stX}, \langle \cdot,\cdot\rangle_{\stX}}$ can change the shape. But $\stX$-centered candidates are not $X$-centered, so a level penalty appears. This forces a separate treatment of covariance and mean, which we do under:

\begin{assumption}
\label{assumption:hypothesis-class}
Let $\sH\subseteq L_2(X)\cap L_2(\stX)$ be closed under constant shifts $\sH + \R1 = \sH$.~Optionally:\footnote{Any class with a free intercept satisfies the base clause. Clause~(a) removes statistical approx. bias to isolate confounding bias, our focus. Clause~(b) is essentially free under DA; sample $\Id\in\sT$, or mix-in $p_{XYZ}$. It lets the $\sH_X$ Hilbert structure carry over to $\sH_{\tX}$. Clause (c) gives bounded~evaluation~$h(\vx)$, without which PI intervals $\sH_{\pid}(\vx)$ for continuous $X$ are known to be vacuous~\citep{iv_niki,vacuous1,vacuous2,vacuous3}. Crucially, finite dimension is only needed to compute set volume/avg.-error in closed~form.}
\\
\phantomsection\label{assumption:hypothesis-class:no-bias}
\noindent
\emph{(a) Realizability:} Suppose $\sH$ contains both $\,h_{\target}(X),\, \mathbb{E}[Y | X]$, and post-DA $\,\mathbb{E}[h_{\target}(X) | \tX],\, \mathbb{E}[Y | \tX]$.
\\
\phantomsection\label{assumption:hypothesis-class:abs-cont}
\noindent
\emph{(b) Absolute continuity:} $\,p_X\, \ll\, p_{\stX}\,$, i.e. the post-DA distribution covers the baseline distribution.
\\
\phantomsection\label{assumption:hypothesis-class:finite-dim}
\noindent
\emph{(c) Finite dim. Hilbert:} $\sH$ is linear with ${\rm dim}(\sH_X) = k < \infty$, and $\langle\cdot,\cdot\rangle_{X}$ positive-definite over~$\sH_X$.
\end{assumption}

\emph{Variance vs. mean-shifts.}
We ask how much a solver-unit direction actually weighs under evaluation, on average. Sum the $L_2(X)$-norms over \emph{any} $\tX$-orthonormal basis $\br*{e_i}\subset\sH_{\stX}$~(basis indep., cf.~\cref{app:proofs}),
\begin{align*}
    \sum\nolimits_i \sqNorm{e_i}_{L_2(X)}
    \quad
    =
    \quad
    \sum\nolimits_i \sqNorm{e_i}_{X}
    +
    \sum\nolimits_i \pr[\big]{ \E[ e_i(X) ] - \E[ e_i(\tX) ] }^2
    \quad
    =
    \quad
    \tr{\varShift} + \IPM^2_\sB(p_X, p_{\stX}).
\end{align*}
Here, $\tr{\varShift} \coloneqq \sum_i \sqNorm{e_i}_X$ is the $X$-variance carried by $k$ directions that are unit-sized under $\tX$. And $D^2_{\sB}$ is the squared \emph{maximum mean discrepancy} (MMD) over a unit $\stX$-ball $\sB \coloneqq \br*{h \in \sH_{\stX} \colon \norm{h}_{\stX} \leq 1}$~\citep{MMD}, collecting the $e_i$ mean-shifts by Pythagoras. Read them as a lever and its price. Whenever $\tr{\varShift}/k \leq 1$, a typical unit $\tX$-direction carries less than unit $X$-variance; this is our notion of \emph{variance inflation}, quantifying the added exogenous variation invoked above. We later connect it directly to bound sharpening in~\cref{proposition:da-sharper-bounds} with a price; the familiar $D_{\sB}$ from \emph{out-of-distribution} (OOD) generalization literature~\citep{domain-generalization}, since we fit under $p_{\stX}$ yet evaluate OOD on $p_X$. However, DA must first be licensed for PI at all, which we discuss next.

\paragraph{Simulated-intervention.}
We now give an observation that licenses DA for PI: for $\sT$-inv. $h_{\target}$, DA realizes a \emph{soft intervention} $\intervention{X}{\stX}\vphantom{\tX}$ on~\cref{eq:canonical-sem}~\citep{causal-data-augmentation,ilse2020selecting,soft-intervention1,soft-intervention2,soft-intervention3}; the structural equation and causal graph are preserved up to relabeling $X\to\tX$, as in~\cref{fig:da-intervention-graph}. Meanwhile, confounding is diluted by the \emph{data processing inequality} (DPI) $\var(\eE{}{}{U | X})\geq \var(\eE{}{}{U | \tX})$ on the chain $U\!\leftrightarrow X\!\to \tX$. Crucially, this dilution is observable through the ratio $\rho\coloneqq \tsigma^2/\sigma^2$ of MMSE $\sigma^2\coloneqq \min_{h\in\sH} R_{\erm}(h)$ and $\tsigma^2\coloneqq \min_{h\in\sH} R_{\term}(h)$, with $\rho\geq 1$ since a noisy $\tX$ cannot predict $Y$ better than $X$. Concretely:

\begin{restatable}[soft intervention with DA]{lemma}{softIntervention}
\label{lemma:da-soft-intervention}
For $\sT$-inv. $h_{\target}$, DA acts on~\cref{eq:canonical-sem} as a \emph{soft-intervention}:
\begin{enumerate}[
    label=(\Alph*),
    leftmargin=*,
    itemsep=2pt,
    topsep=-4pt,
    parsep=0pt
]
    \item \emph{Structural equation:}
    Trivially, post-DA variables $(\residual, U, \tX, Y, \tZ)$ with $\tZ\in\br*{Z, (Z, T)}$ follow:
    \begin{align*}
        Y = h_{\target}(\tX) + U + \residual
        ,
        &&
        \EE{}{}{U}
        =
        \EE{}{}{\residual}
        =
        0
        ,
        &&
        \residual
        \indep
        (U, \tX, \tZ)
        .
    \end{align*}
    \item \emph{Markov transfer:}
    Let $p$ be Markov w.r.t. a DAG $\graph$ over $V \coloneqq \br*{U, X, Y, Z_1, Z_2, \dots}$ where $\NonDesc{X} = V\backslash\br*{X,Y}$. Then, $p_{\stV}$ is Markov w.r.t. $\tgraph$, where $\tV, \tgraph$ relabel $X$ to $\tX$ in $V, \graph$.
    \item \emph{Attenuated confounding:} Under {\asmOneA}, let biases $b\coloneqq h_{\erm} - h_{\target}$ and $\stb\coloneqq h_{\term} - h_{\target}$. Then,
\begin{align*}
    \sqNorm{b}_{X} - \sqNorm{\stb}_{\stX}
    \;=\;
    \sigma^2 (\rho - 1)
    \;\geq\; 0
    .
\end{align*}
\end{enumerate}
\end{restatable}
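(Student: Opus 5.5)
Part (A) is a substitution argument. Since $\tX = TX$ with $T\sim p_T$ drawn independently of everything in~\cref{eq:canonical-sem}, and $h_{\target}$ is $\sT$-invariant, we have $h_{\target}(\tX)=h_{\target}(X)$ pointwise, so $Y=h_{\target}(\tX)+U+\residual$ holds surely. The zero-mean conditions are unchanged. For the independence statement, $\residual\indep(U,X,Z)$ and $T\indep(\residual,U,X,Z)$ give $\residual\indep(U,X,Z,T)$. Since $(U,\tX,\tZ)$ is a measurable function of $(U,X,Z,T)$, this yields $\residual\indep(U,\tX,\tZ)$ for either choice $\tZ\in\br*{Z,(Z,T)}$.

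Part (B) is a factorization argument. I would write the joint of $\tV$ (together with $X$ and $T$) as
\[
p(\tV,X,T)=\textstyle\prod_{W\neq X,Y} p(W\mid \pa{W})\; p(X\mid \pa{X})\; p_T(T)\; \delta(\tX-TX)\; p(Y\mid X, \pa{Y}\setminus\{X\}).
\]
By (A) and $\sT$-invariance, the $Y$-mechanism depends on $X$ only through $h_{\target}(X)=h_{\target}(\tX)$, so it may be rewritten as $p(Y\mid \tX,\pa{Y}\setminus\{X\})$. The assumption $\NonDesc{X}=V\setminus\{X,Y\}$ means the only child of $X$ is $Y$, so no other factor involves $X$. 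Integrating out $T$ and $X$ then merges $p(X\mid\pa{X})$, $p_T$ and the delta into a single kernel $p(\tX\mid \pa{X})$. The remaining factorization is exactly the one for $\tgraph$, which is the Markov property for $p_{\stV}$.

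Part (C) is a Pythagorean decomposition, and it is where the main care is needed. Under the squared loss and \asmOneA, $h_{\erm}=\E[Y\mid X]$, so $b(X)=\E[U\mid X]$. Likewise $h_{\term}=\E[Y\mid\tX]=h_{\target}(\tX)+\E[U\mid\tX]$, using (A) and $\E[\residual\mid \tX]=0$, so $\stb(\tX)=\E[U\mid\tX]$. Both biases have mean zero, so the shape norms are plain second moments. The minimal risks are then
\[
\sigma^2=\E[(U+\residual-b(X))^2]=\Var(U)+\Var(\residual)-\E[b(X)^2],
\qquad
\tsigma^2=\Var(U)+\Var(\residual)-\E[\stb(\tX)^2].
\]
Each follows by orthogonality of the conditional-expectation residual. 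Subtracting gives $\sqNorm{b}_X-\sqNorm{\stb}_{\stX}=\tsigma^2-\sigma^2=\sigma^2(\rho-1)$.

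Nonnegativity is the data processing inequality along $U\leftrightarrow X\to\tX$. Because $\tX$ is generated from $X$ and independent noise $T$, we have $U\indep\tX\mid X$, hence $\E[U\mid\tX]=\E[\E[U\mid X]\mid\tX]$. By Jensen's inequality (contraction of conditional expectation), $\E[\stb^2]\le\E[b^2]$.

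The main obstacle is bookkeeping rather than depth. Since $b$ and $\stb$ live in different spaces, the shape norms must be evaluated under their respective measures. It also has to be checked that \asmOneA\ makes the restricted minimizers over $\sH$ coincide with the conditional expectations, so that the minimal risk over $\sH$ equals the unrestricted MMSE.
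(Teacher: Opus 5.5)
Your proposal is correct and follows essentially the same route as the paper. Part (A) comes from $\sT$-invariance plus $T\indep(\residual,U,X,Z)$. Part (B) is a kernel-wise factorization in which $p(X\mid \Pa{X})$, $p_T$ and the map $(X,T)\mapsto\tX$ collapse into one $\tX$-kernel, while the $Y$-kernel is rewritten through $h_{\target}(\tX)$ using (A); the paper disintegrates the post-DA joint $p_{W\tX Y}$ directly rather than marginalizing $X$ and $T$ out of the augmented $\graph$-factorization, which amounts to the same argument. Part (C) writes each noise floor as $\Var(\residual)+\Var(U)$ minus the squared bias norm and invokes the contraction of conditional expectation along $U - X - \tX$.
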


Hence, broad families of shape constraints~\citep{shape-constraints}, graphical restrictions~\citep{balke1997bounds,nonparametric-graphical} or confounding bounds~\citep{neural-sensitivity-pid} that are valid under $p_{X Y Z}$ continue to remain valid under $p_{\stX Y Z}$. Subsequent sections instantiate two such canonical PI models to study the specific identified set evaluation metrics we introduced in~\cref{sec:partial-identification-and-sensitivity-analysis}.
\paragraph{Fragility of point-estimation.} To motivate the PI application of DA, we first understand its limitations in point-estimation. After all,~\cref{lemma:da-soft-intervention} does beg the question why this should not work.

\begin{minipage}{\columnwidth}
\begin{restatable}[estimation with DA]{proposition}{causalEstimation}
\label{proposition:da-causal-estimation}
Under {\asmOneA}, $\sT$-inv. $h_{\target}$ and \emph{bias direction} $e_{\stb}\coloneqq \left.\tb\right/\norm{\stb}_{\stX}$,
\begin{align*}
    E(h_{\erm})
    -
    E( h_{ \term } )
    \quad
    =
    \quad
    \underbrace{
    \sigma^2
        (
            \rho
            -
            1
        )
        \cdot
        \sqNorm{e_{\stb}}_{L_2(X)}
        \vphantom{D^2_{\br*{\pm e_{\stb}}}(p_X, p_{\stX})}
    }_{
        \text{
            gain
        }
    }
    \;
    +
    \;
    \underbrace{
        (
                1 - \sqNorm{e_{\stb}}_{L_2(X)}
        )
        \cdot
        E(h_{\erm})
    }_{
        \text{wild-card}
    }
    .
\end{align*}
\end{restatable}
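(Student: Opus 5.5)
The plan is a direct algebraic rearrangement. It expresses both excess risks through the biases $b$ and $\stb$ and then feeds in the identity of \cref{lemma:da-soft-intervention}(C). Throughout I work with the squared loss, as in the orthogonal decomposition of $E(h)$ given before \cref{assumption:hypothesis-class}, and I assume $\stb\neq 0$ so that $e_{\stb}$ is well defined.

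First I rewrite $E(h_{\erm})$. Under the squared loss and \asmOneA, $h_{\erm}=\E[Y\given X]$. From \cref{eq:canonical-sem} this gives $b(X)=h_{\erm}(X)-h_{\target}(X)=\E[U\given X]$. Since $\E[U]=0$, we get $\E[b(X)]=0$, so the level term in the decomposition of $E$ vanishes. Hence $E(h_{\erm})=\sqNorm{b}_{L_2(X)}=\sqNorm{b}_X$.

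Second I rewrite $E(h_{\term})$. By definition of the bias direction, $\stb=\norm{\stb}_{\stX}\, e_{\stb}$. Homogeneity of the ambient norm then gives
\begin{align*}
E(h_{\term})=\sqNorm{\stb}_{L_2(X)}=\sqNorm{\stb}_{\stX}\,\sqNorm{e_{\stb}}_{L_2(X)} .
\end{align*}
This is the step that moves from the fit geometry to the evaluation geometry. It deliberately keeps the full $L_2(X)$-norm of $e_{\stb}$, so any level penalty from the mean-shift is absorbed into $\sqNorm{e_{\stb}}_{L_2(X)}$ rather than handled separately.

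Third I invoke \cref{lemma:da-soft-intervention}(C), which holds under \asmOneA{} and $\sT$-invariance. It gives $\sqNorm{\stb}_{\stX}=\sqNorm{b}_X-\sigma^2(\rho-1)=E(h_{\erm})-\sigma^2(\rho-1)$. Substituting,
\begin{align*}
E(h_{\erm})-E(h_{\term}) = E(h_{\erm}) - \bigl(E(h_{\erm})-\sigma^2(\rho-1)\bigr)\sqNorm{e_{\stb}}_{L_2(X)} .
\end{align*}
Regrouping the terms in $E(h_{\erm})$ yields exactly the gain term plus the wild-card term. The degenerate case $\stb=0$ would be handled separately by reading the formula with $\sqNorm{\stb}_{\stX}=0$.

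The only real obstacle is the first step: making sure $b$ is $X$-centered, so that $\sqNorm{b}_{L_2(X)}$ coincides with the shape norm $\sqNorm{b}_X$ appearing in \cref{lemma:da-soft-intervention}(C). This rests on $h_{\erm}$ being exactly $\E[Y\given X]$ (realizability) and on $\E[U]=0$. No analogous centering is needed for $\stb$ under $p_X$, since the second step keeps the full ambient norm.
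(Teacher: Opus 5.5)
Your proposal is correct and follows the paper's proof essentially step by step. The paper likewise writes $E(h_{\erm})=\sqNorm{b}_{L_2(X)}$ and $E(h_{\term})=\sqNorm{\stb}_{L_2(X)}$ with $b=\E[U\mid X]$ and $\stb=\E[U\mid \stX]$, factors the latter through the shift ratio $s(\stb)=\sqNorm{\stb}_{L_2(X)}/\sqNorm{\stb}_{\stX}$ (exactly your $\sqNorm{e_{\stb}}_{L_2(X)}$), substitutes \cref{lemma:da-soft-intervention}(C), and regroups; your degenerate case matches the paper's convention $s(0)\coloneqq 0$.
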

\end{minipage}

\Cref{proposition:da-causal-estimation} reveals three fragilities: (\rnum{1})~\textbf{bias-direction dependence:} attenuation $\DPI\geq1$ alone does not
guarantee improvement, which hinges on whether $\sqNorm{e_{\stb}}_{L_2(X)}\leq1$ for an
unobserved bias direction, (\rnum{2})~\textbf{misspecification:} if symmetry is misspecified, the gain term may itself become a loss, and (\rnum{3})~\textbf{identifiability:}  $h_{\target}$ remains unidentifiable regardless.
All three defects are manageable~under~PI.

\subsection{Partial identification with data augmentation}
\label{sec:better-bounds}
We show that DA yields three PI guarantees, each resolving one fragility above, as summarized in~\cref{table:da-point-vs-partial-identification}. \textbf{Validity}~(\cref{theorem:valid-bounds-with-da})~\textbf{:}
Crucially, PI validity follows from $\sT$-invariance alone, addressing~(\rnum{3}). \textbf{Sharpness / informativeness}~(\cref{proposition:da-sharper-bounds,theorem:informative-bounds-with-da})~\textbf{:}
We show that PI sharpening is governed by \emph{global} quantities like $\rho$, $\tr{\varShift}$, $D_{\sB}$, making sufficient conditions observable, unlike~(\rnum{1}). \textbf{Robustness}~(Thm. \hyperlink{thm-general-validity}{3}~and~\cref{remark:robust-bounds-with-da})~\textbf{:} To make these guarantees robust to $\eps$-misspecified
symmetry, we deploy a simple $\eps$-pad-and-intersect post-processing, resolving~(\rnum{2}).

\begin{figure*}[t]
  \centering
  \begin{subfigure}[c]{0.24\textwidth}
    \centering
    \includegraphics[width=1\linewidth]{tikz/inv-pi.tikz}
    \caption{$\sH_{\pid+{\inv}}$.}
    \label{fig:inv-pi-cartoon}
  \end{subfigure}%
  \hfill
  \begin{subfigure}[c]{0.24\textwidth}
    \centering
    \includegraphics[width=1\linewidth]{tikz/da-pi.tikz}
    \caption{$\sH_{\tpid}$.}
    \label{fig:da-pi-cartoon}
  \end{subfigure}
  \!\!\hfill\!
  \begin{subfigure}[c]{0.24\textwidth}
    \centering
    \includegraphics[width=1\linewidth]{tikz/da-iv-pi.tikz}
    \caption{$\sH_{\tpid+\tiv}$.}
    \label{fig:da-iv-pi-cartoon}
  \end{subfigure}%
  \hfill
  \begin{subfigure}[c]{0.24\textwidth}
    \centering
    \includegraphics[width=1.1\linewidth]{tikz/pi-da-pi-intersection.tikz}
    \caption{$\sH_{\pid}(\vx) \cap\pr*{ \sH_{\tpid}(\vx)\pm\eps }$}
    \label{fig:pi-da-pi-intersection-cartoon}
  \end{subfigure}
  \caption{
    Illustration of identified sets: {\captiona}~Invariance shape constraint prunes the baseline set~(\cref{sec:explicit-constraint}). 
    {\captionb}~DA pre-processing yields a sharper (\cref{proposition:da-sharper-bounds}), better-centered (\cref{proposition:da-causal-estimation}) set~(\cref{sec:implicit-constraint}).
    {\captionc}~$T$-as-IV enforces symmetry~(\cref{sec:da-as-iv}, cf.~\cref{app:explicit-vs-implicit}).
    {\captiond}~$\eps$-pad-and-intersect dominates under~arbitrary~DA~(\cref{remark:robust-bounds-with-da}).
  }
  \label{fig:da-pi-cartoon-depictions}
  \vspace{-1\baselineskip}
\end{figure*} %HERE

Beyond these population effects, DA has the familiar statistical-efficiency benefit of growing effective sample size, mitigating sampling variation and finite-sample error, key sources of PI uncertainty~\citep{manski-confidence}.
To make these claims explicit, we instantiate the following PI model:

\begin{assumption}[a bounded confounding sensitivity model]
\label{assumption:partial-r2}
For \emph{confounding budgets} $\gamma_{\vx}, \gamma_y$, let
\begin{align*}
\sP_{\!\pid}
\coloneqq\br*{
\,
\candidateDistribution{}
\,
\colon
\,
\begin{aligned}
    \pR^2_{U\sim X} \leq \gamma_{\vx} ,
    \\
    \pR^2_{U\sim Y | X} \leq \gamma_{y} \phantom{,}
\end{aligned}
\,
}
,
&&
\pR^2_{A\sim B | C}
\coloneqq
1 - \frac{ \EE{}{}{\var(A \given B, C) } }{ \EE{}{}{ \var( A \given C ) } }
\leq 1
,
\end{align*}
where $\smash[t]{\pR^2_{A\sim B \mid C}}$ is the  \emph{partial-$r^2$}, i.e. fraction of variation in $A$ explained by $B$ after controlling~for~$C$.
Further, make the IPM $\IPM_{\sF}$ test all quadratic functions $\sF\coloneqq\br*{ f^2 \colon f\in \Span{\sH} + \R Y }$ of $(X, Y)$.
\end{assumption}

\Cref{assumption:partial-r2} adopts the widely used partial-$r^2$ model by~\cite{making-sensemakr,long-story-short}. While we employ this model in our analysis, we do not necessarily restrict~ourselves to it: several PI families in~\cref{sec:partial-identification-and-sensitivity-analysis} impose equivalent constraints, reducing to ellipsoidal sets of the form

% \vspace{0.1\baselineskip}

\begin{restatable}[characterizing the identified set]{lemma}{characterizingIdentifiedSets}
\label{lemma:characterizing-identified-sets}
Under~\cref{assumption:partial-r2} and~{\asmOneShortA} or~{\asmOneShortC}, $\sH_{\pid}$ is the covariance~ball
\begin{align*}
    \sH_{\pid} = \br*{
    \:\!
    h\in\sH_X
    \:\!
    :
    \:\!
    \norm[\big]{ h - h_{\erm}}^2_X \leq \sigma^2 \gamma
    \:\!
    } ,
\end{align*}
within $\sH_X$, centered at $h_{\erm}$ with radius $\sigma\sqrt{\gamma}$, and an \emph{effective confounding budget} of $\gamma\coloneqq\frac{\gamma_{\vx} \gamma_y}{1 - \gamma_{\vx}}$.
\end{restatable}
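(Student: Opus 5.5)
The plan is to prove the two inclusions separately. The forward inclusion $\subseteq$ is an omitted-variable-bias computation in the style of \citet{making-sensemakr,long-story-short}. The reverse inclusion $\supseteq$ is an explicit construction of a compatible $\candidateDistribution{}$.

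\emph{Step 1: reduce $q$ to second moments.} Because $\IPM_{\sF}$ tests exactly the squares of elements of $\Span{\sH}+\R Y$, the constraint $\IPM_\sF(\distribution{XY},\candidateDistribution{XY})=0$ says that $q$ and $p$ agree on all second moments of $(h(X),Y)$ for $h\in\sH$. Constant shifts lie in $\sH$, so they also agree on the corresponding means. Hence $h_{\erm}$ and $\sigma^2$ are the same under $q$ and $p$.
\begin{itemize}
\item Under \asmOneShortA, $h_{\erm}=\E[Y\mid X]$.
\item Under \asmOneShortC, $h_{\erm}$ is the $L_2$-projection of $Y$ onto the finite-dimensional $\sH$. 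This projection is determined by these second moments.
\end{itemize}
In both cases the residual $R\coloneqq Y-h_{\erm}(X)$ is orthogonal to every $h(X)$ with $h\in\sH$ and has $\E R^2=\sigma^2$. I will read the partial-$r^2$ quantities as the corresponding (projection) $R^2$ ratios. This reading is the natural one given that only quadratic moments are pinned down, and it is exact in the Gaussian-type construction of Step 3.

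\emph{Step 2: forward inclusion.} Fix $q\in\sQ_{\pid}$ and write $h\coloneqq h^q_{\target}$, so that $Y=h(X)+U+\residual$ under $q$. Projecting onto $X$ gives
\begin{equation*}
h_{\erm}-h=\E_q[U\mid X]\eqqcolon g,
\end{equation*}
which is centered, so $h$ lies in $\sH_X$ after centering. Set $v\coloneqq\E\Var(U\mid X)$. Then $\pR^2_{U\sim X}=\norm{g}_X^2/(\norm{g}_X^2+v)$, and the constraint $\pR^2_{U\sim X}\le\gamma_{\vx}$ rearranges to
\begin{equation*}
\norm{g}_X^2\le \frac{\gamma_{\vx}}{1-\gamma_{\vx}}\,v .
\end{equation*}
For the second constraint, the residual is $R=(U-g(X))+\residual$ with $\residual$ independent of $(U,X)$. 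This gives $\sigma^2=v+\Var(\residual)$ and $\Cov(U-g,R)=v$. The share of $\E\Var(U\mid X)$ explained by $Y$ given $X$ is therefore $v^2/(v\sigma^2)=v/\sigma^2$. So $\pR^2_{U\sim Y\mid X}\le\gamma_y$ gives $v\le\gamma_y\sigma^2$. Combining the two bounds yields $\norm{h-h_{\erm}}_X^2\le\sigma^2\gamma$.

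\emph{Step 3: reverse inclusion.} Take any $h\in\sH_X$ with $d\coloneqq\norm{h-h_{\erm}}_X^2\le\sigma^2\gamma$. I will build $q$ with the same law of $X$ as $p$ and set
\begin{equation*}
U\coloneqq (h_{\erm}-h)(X)+V,\qquad Y\coloneqq h_{\erm}(X)+V+\residual,
\end{equation*}
where $V$ and $\residual$ are independent of $X$ and of each other, centered, with $\Var V=\gamma_y\sigma^2$ and $\Var\residual=(1-\gamma_y)\sigma^2$. Then $Y=h(X)+U+\residual$, so $h^q_{\target}=h$.

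The quadratic moments of $(h'(X),Y)$ for $h'\in\sH$ match those of $p$. The reason is that $V+\residual$ plays the role of $R$: it is uncorrelated with $\sH$ and has variance $\sigma^2$. Hence $\IPM_\sF=0$.

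It remains to check the two budgets. By the Step 2 formulas, $\pR^2_{U\sim Y\mid X}=\gamma_y$. Also $\pR^2_{U\sim X}=d/(d+\gamma_y\sigma^2)$, and this is at most $\gamma_{\vx}$ precisely when $d\le\sigma^2\gamma_y\gamma_{\vx}/(1-\gamma_{\vx})=\sigma^2\gamma$, which holds by assumption.

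\emph{Main obstacle.} The delicate step is justifying that partial-$r^2$, which is defined through conditional variances, can be computed by the linear-projection formulas used in Steps 2 and 3. In Step 3 this is exact, because $V,\residual$ are independent of $X$ (Gaussian if desired), so conditional and linear quantities coincide. In Step 2 I would argue that $\E\Var(U\mid X,Y)$ is at most the linear-residual variance. This gives $\pR^2_{U\sim Y\mid X}\ge v/\sigma^2$, which is exactly the direction needed for the upper bound on $v$.

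The second subtlety is the non-realizable case \asmOneShortC. There $g$ must be replaced by its projection onto $\sH_X$, with the orthogonal remainder absorbed into the residual. Positive-definiteness of $\langle\cdot,\cdot\rangle_X$ ensures that the set is a proper ball in $\sH_X$.
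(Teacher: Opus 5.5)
Your plan matches the paper's proof in outline. It uses the same second-moment anchor and the same necessity chain $\norm{h_{\erm}-h}_X^2\le\Var_q(\E_q[U\mid X])\le\frac{\gamma_{\vx}}{1-\gamma_{\vx}}\,\E_q[\Var_q(U\mid X)]\le\frac{\gamma_{\vx}\gamma_y}{1-\gamma_{\vx}}\,\sigma^2$. It also builds a certificate whose noise is independent of $X$.

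One step is false as written, however: $h_{\erm}-h=\E_q[U\mid X]$ is not an identity, not even under \asmOneA. Realizability constrains $p$, but a candidate $q$ is pinned down only through the quadratic moments of $(v(X),Y)$ with $v\in\Span{\sH}$. So $\E_q[Y\mid X]$ may differ from $h_{\erm}$ by any $r_q$ with $\E_q[v(X)\,r_q(X)]=0$ for all such $v$. For example, take $\sH$ affine, $X\sim\sN(0,1)$, and $r_q(x)=c\,(x^2-1)$.

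The projection repair you reserve for \asmOneC{} is therefore needed in both cases. It must also be applied at the end, not by swapping $g$ for its projection. Keep $g\coloneqq\E_q[U\mid X]$ in both budget computations. Then close with Pythagoras on $g=(h_{\erm}-h)+r_q$ with $r_q\perp\Span{\sH}$, which gives $\norm{h_{\erm}-h}_X^2\le\norm{g}_X^2$; this is exactly the paper's $r_q+b^q$ split. Your $\gamma_y$ bound survives unchanged, because $\Cov(U-g,R)=v$ and $\E R^2=\sigma^2$ hold for any $r_q$. Only $\sigma^2=v+\Var(\residual)$ weakens to $\ge$, and you never actually use it.

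You genuinely depart from the paper in how the two budgets are exploited.
\begin{itemize}
\item \emph{The $\gamma_y$ step.} You bound $\E[\Var(U\mid X,Y)]$ by the error of the single global predictor $g(X)+\beta R$. This gives $v-v^2/\sigma^2$, hence $v\le\gamma_y\sigma^2$ in one line. The paper instead bounds the conditional MMSE pointwise by the linear MMSE $s\,\Var(\residual)/(s+\Var(\residual))$ with $s=\Var(U\mid X=\vx)$. It then applies Jensen to this concave map and uses $\E[\Var(Y\mid X)]\le\sigma^2$. Your route is shorter and needs no conditioning on $\vx$. The paper's yields the slightly stronger, but unused, intermediate $v\le\gamma_y\,(v+\Var(\residual))$.
\item \emph{The certificate.} You saturate the $\gamma_y$ budget by setting $\Var V=\gamma_y\sigma^2$. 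The paper saturates $\gamma_{\vx}$ via $\sigma_W^2=\frac{1-\gamma_{\vx}}{\gamma_{\vx}}\norm{h-h_{\erm}}_X^2$, so yours avoids dividing by $\gamma_{\vx}$. But because your $\gamma_y$ constraint is tight, Gaussian $(V,\residual)$ is required, not optional; any pair with linear $\E[V\mid V+\residual]$ would also do. If $\E[V\mid V+\residual]$ is nonlinear, $\E[\Var(V\mid V+\residual)]$ falls strictly below its linear value, and then $\pR^2_{U\sim Y\mid X}>\gamma_y$.
\item \emph{The Step 1 reading.} Drop the ``projection-$R^2$'' reading announced in Step 1. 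Your arguments already handle the conditional-variance definition: $\gamma_{\vx}$ exactly via total variance, and $\gamma_y$ via the linear-predictor inequality. Reinterpreting $\pR^2$ would change the model.
\end{itemize}
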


The post-DA analysis now reduces to three questions about this ellipsoid: whether it covers $h_{\target}$, does its size contract, and if its center/radius tradeoff lowers avg./worst-case errors. We treat these in~turn.

%%%%%%%%%%%%%%%%%%%%%%%%%%%%%%%%%%%%%%%%%%
%%%%%%%%%%%%%%%%%%%%%%%%%%%%%%%%%%%%%%%%%%

\begin{wraptable}{r}{0.49\textwidth}
  \vspace{-12.5pt}
  \centering
  \caption{Point vs. \emph{partial} identification under DA.}
  \label{table:da-point-vs-partial-identification}
    \resizebox{\linewidth}{!}{%
        \begin{tabular}{@{}l@{}c@{}c@{}c@{}c@{}}
          \toprule
          \makecell[l]{Condition} & \hspace{12pt} & \makecell{Identification metric} & & \makecell{Safeguard} \\
          \midrule
          \\[-5ex]
          \makecell[c]{
              $\sT\!$-inv. $h_{\target}\;\;$
          } & \makecell{
              \\[1ex]
              \llap{$\Longrightarrow$\hspace{-5pt}}
              \\[-1ex]
              \llap{\rotatebox{-25}{$\Longrightarrow$}\hspace{-5pt}}
          } & \makecell{\emph{validity} (\cref{theorem:valid-bounds-with-da}) $\color{deep_green}\bm\checkmark$} & \makecell{$\bm\}\;$} & \makecell{$\eps$-pad (Thm. \hyperlink{thm-general-validity}{3})} \\[-3ex]
          & & \makecell{\\} &  &  \\[-1ex]
          \makecell[c]{
            $\norm{e_{\stb}} \leq 1\;\;$
          }
          & \makecell{
          \llap{$\Longrightarrow$\hspace{-5pt}}
          } & \makecell{{estimation error} $\color{deep_green}\bm\downarrow$\\ (\cref{proposition:da-causal-estimation})} & \multirow{2}{*}{\smash{\raisebox{1.2ex}{$\!\!{\pmb{\left.\rule{0pt}{1.2em}\right\}}}$}}} & {\color{deep_red}\Large\ding{55}} \\[-0ex]
          &  & $\Downarrow$ & & \\
           & & \makecell{\emph{informativeness} $\color{deep_green}\bm\uparrow$\\ (\cref{theorem:informative-bounds-with-da})} & \multirow{3}{*}{\smash{\raisebox{-1ex}{$\!\left.\rule{0pt}{2.75em}\right\}$}}} & \multirow{4}{*}{\makecell{intersect with\\ baseline set\\ (\cref{remark:robust-bounds-with-da})}} \\[-0.25ex]
          &
          & $\Uparrow$ & & \\[-0.25ex]
          \makecell[c]{
            $  \Omega \leq 1\;\;$
          }
          &
          \makecell{
          \llap{$\Longrightarrow$\hspace{-5pt}}
          }
          & \makecell{\emph{sharpness} $\color{deep_green}\bm\uparrow$\\ (\cref{proposition:da-sharper-bounds})} & & \\
          \bottomrule
        \end{tabular}%
    }
  \vspace{-50pt}
\end{wraptable}

\paragraph{Validity.} First and foremost, we establish post-DA validity. In terms of approximation error:

\begin{minipage}{0.485\columnwidth}
\begin{restatable}[valid bounds with DA]{theorem}{validBoundsWithDA}
\label{theorem:valid-bounds-with-da}
Let $h_{\target}$ be $\sT$-inv. under {\asmOneA} and \ref{assumption:partial-r2}. Then, the post-DA set $\sH_{\tpid}$ is exactly valid whenever the baseline set $\sH_{\pid}$ is $\epsilon$-valid with $
\epsilon \leq \sigma^2\pr*{ \sqrt{ \DPI - 1 + \gamma\DPI } - \sqrt{\gamma} }^2
$.
\end{restatable}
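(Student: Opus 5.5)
The plan is to reduce both identified sets to the covariance balls of \cref{lemma:characterizing-identified-sets}, so that validity becomes a comparison of a bias norm with a radius. \Cref{lemma:da-soft-intervention}(C) then transfers the bias norm across the change of measure. The constant in the statement should fall out of a single quadratic inequality.

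\emph{Step 1: baseline $\epsilon$-validity as a bias bound.} By \cref{lemma:characterizing-identified-sets} under {\asmOneShortA}, $\sH_{\pid}=\{h\in\sH_X:\norm{h-h_{\erm}}_X^2\le\sigma^2\gamma\}$. With $Y$ centered we have $\E[h_{\target}(X)]=\E[Y]=0$, so $h_{\target}\in\sH_X$. Every candidate in $\sH_{\pid}$ is also $X$-centered, so the level term in the decomposition of $E$ vanishes and $E(h)=\norm{h-h_{\target}}_X^2$ on $\sH_{\pid}$. The squared distance from $h_{\target}$ to a Hilbert ball of radius $\sigma\sqrt\gamma$ centered at $h_{\erm}$ is $\bigl(\norm{b}_X-\sigma\sqrt\gamma\bigr)_+^2$, where $b=h_{\erm}-h_{\target}$. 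Hence
\begin{align*}
\norm{b}_X\le\sigma\sqrt\gamma+\sqrt{\epsilon}
\end{align*}
whenever $E^-(\sH_{\pid})\le\epsilon$.

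\emph{Step 2: characterize the post-DA set.} By \cref{lemma:da-soft-intervention}(A), the model \cref{eq:canonical-sem} holds with $X$ relabeled to $\tX$ and with the same $U$, $\residual$ and $\sT$-invariant $h_{\target}$. The partial-$r^2$ model of \cref{assumption:partial-r2} is applied to $p_{\stX YZ}$, and {\asmOneA} covers the post-DA regressions. So \cref{lemma:characterizing-identified-sets} applies verbatim under $p_{\stX}$:
\begin{align*}
\sH_{\tpid}=\{h\in\sH_{\stX}:\norm{h-h_{\term}}_{\stX}^2\le\tsigma^2\gamma\},\qquad \tsigma^2=\DPI\sigma^2 .
\end{align*}
Also $\E[h_{\target}(\tX)]=\E[Y]=0$, so $h_{\target}\in\sH_{\stX}$. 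Exact validity $h_{\target}\in\sH_{\tpid}$ is therefore equivalent to $\norm{\stb}_{\stX}^2\le\DPI\sigma^2\gamma$.

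\emph{Step 3: transfer and solve.} \Cref{lemma:da-soft-intervention}(C) gives $\norm{\stb}_{\stX}^2=\norm{b}_X^2-\sigma^2(\DPI-1)$. Combining this with Step 1, it suffices that
\begin{align*}
(\sigma\sqrt\gamma+\sqrt\epsilon)^2\le\sigma^2(\DPI-1+\gamma\DPI),
\end{align*}
that is, $\sqrt\epsilon\le\sigma\bigl(\sqrt{\DPI-1+\gamma\DPI}-\sqrt\gamma\bigr)$. The right-hand side is nonnegative because $\DPI\ge1$. Squaring gives exactly the stated threshold.

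\emph{Main obstacle.} The delicate point is Step 2: justifying that \cref{lemma:characterizing-identified-sets} can be invoked under the changed measure. This needs three things. First, the partial-$r^2$ budgets must refer to the same $\gamma_{\vx},\gamma_y$ post-DA; the constraint holds because $U\!\leftrightarrow X\!\to\tX$ only dilutes $\pR^2_{U\sim\tX}$ by the DPI. Second, the quadratic IPM class must be taken over $\Span{\sH}+\R Y$ under $p_{\stX}$. Third, centering must be consistent, so that $h_{\target}$ lies in the shape space in both geometries without a level penalty. I would dispatch these by appealing to the Markov-transfer and structural-equation parts of \cref{lemma:da-soft-intervention}, and then the remaining argument is the algebra above.
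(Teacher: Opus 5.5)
Your proposal is correct and follows the paper's proof essentially step for step. The paper also lower-bounds $E^-(\sH_{\pid})$ by the point-to-ball distance $(\norm{b}_X-\sigma\sqrt{\gamma})_+^2$, reduces post-DA validity to $\norm{\stb}_{\stX}\le\tsigma\sqrt{\gamma}$ via \cref{lemma:characterizing-identified-sets} applied to $p_{\stX Y}$, and closes with the identity $\norm{b}_X^2=\norm{\stb}_{\stX}^2+\tsigma^2-\sigma^2$ from \cref{lemma:da-soft-intervention}.

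Two minor remarks. In Step~1 you only need the triangle-inequality lower bound on that distance; the exact formula presumes $\sH_X$ is convex, which {\asmOneA} alone does not give. Your ``obstacle'' about post-DA budgets is unnecessary, because \cref{lemma:characterizing-identified-sets} characterizes the set from the observed law and the chosen $\gamma_{\vx},\gamma_y$, without requiring the true confounding to satisfy them.
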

\end{minipage}

Consequently, $\sH_{\tpid}$ \emph{repairs} the $\epsilon$-validity of $\sH_{\pid}$ since the previously insufficient budget $\gamma$ can now accommodate the diluted bias. Crucially, this follows from the simulated intervention in \cref{lemma:da-soft-intervention} alone, and does not rely on conditions over unobserved bias like \cref{proposition:da-causal-estimation}. The only untestable requirement of $\sT$-invariance is later relaxed to $(\eps, \sT)$-invariance to allow for sensitivity analysis over the assumed $\eps$-misspecification of symmetry.

%%%%%%%%%%%%%%%%%%%%%%%%%%%%%%%%%%%%%%%%%%
%%%%%%%%%%%%%%%%%%%%%%%%%%%%%%%%%%%%%%%%%%

Having established validity, we now show DA can make bounds \emph{better}, even under relaxed conditions.
% \vspace{-1.5em}
\paragraph{Sharpness.}
While the centroid drift in \cref{proposition:da-causal-estimation} hinges on the bias direction, the identified set's \emph{size} depends on \emph{global} distribution shifts. This yields a milder, data-observable improvement~condition:

\begin{restatable}[sharper bounds with DA]{proposition}{daSharperBounds}
\label{proposition:da-sharper-bounds}
 Denoting $L_2(X)$-volume by $\abs{\cdot}$, under {\asmOneShortBC} and \ref{assumption:partial-r2},
\begin{align*}
    \pr*{
        \frac{ \abs*{\sH_{\tpid}} }{ \abs*{\sH_{\pid}} }
    }^{2/k}
    \leq
    \quad
        \frac{
            \mathbb{E}\bigl[
                \abs*{\func{\sH_{\tpid}}{X}}^2
            \bigr]
        }{
            \mathbb{E}\bigl[
                \abs*{\func{\sH_{\pid}}{X}}^2
            \bigr]
        }
    \quad
    =
    \;\;
    \underbrace{
        \smash[t]{\rho}
        \vphantom{\rho \tr{\varShift} D^2_\sB(p_X, p_\stX)}
    }_{\text{over-budget}}
    \!\!\!\!\!\!\!\!
    \;\;
    \cdot
    \;\;
    \sqPr[\Big]{
    \;
    \underbrace{
        \tr{\varShift}
        \vphantom{\rho \tr{\varShift} D^2_\sB(p_X, p_\stX)}
    }_{\text{lever}}
    \;
    +
    \;
    \underbrace{
        D^2_\sB(p_X, p_\stX)
        \vphantom{\rho \tr{\varShift} D^2_\sB(p_X, p_\stX)}
    }_{\text{penalty}}
    \;
    }
    \left.
    \vphantom{\Big]}
    \right/
    k
    \quad
    \eqcolon
    \quad
    \Omega
    .
\end{align*}
\end{restatable}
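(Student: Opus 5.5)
By \cref{lemma:characterizing-identified-sets} (under \asmOneShortC), both identified sets are covariance ellipsoids. The baseline is $\sH_{\pid}=\{h\in\sH_X:\norm{h-h_{\erm}}_X^2\le\sigma^2\gamma\}$. Applying the same lemma to the pre-processed measure $p_{\stX YZ}$, which is legitimate by \cref{lemma:da-soft-intervention}, gives $\sH_{\tpid}=\{h\in\sH_{\stX}:\norm{h-h_{\term}}_{\stX}^2\le\tsigma^2\gamma\}$. By \asmOneShortB{} the forms $\langle\cdot,\cdot\rangle_X$ and $\langle\cdot,\cdot\rangle_{\stX}$ are both positive-definite on the $k$-dimensional shape space. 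Since $\sH$ is closed under constants, I identify $\sH_X$ and $\sH_{\stX}$ with the quotient $\sH/\R1$ via the centering maps. The plan is then to compute everything in a fixed $\stX$-orthonormal basis $\{e_i\}_{i\le k}$ of $\sH_{\stX}$ and read both the pointwise widths and the $L_2(X)$-volumes from the same Gram matrices.

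\emph{Step 1 (the equality).} For an ellipsoid $\{c+\sum_i a_i u_i:\sum a_i^2\le r^2\}$ with $\{u_i\}$ orthonormal in the fitting geometry, the interval at $\vx$ has half-width $r\,\norm{(u_i(\vx))_i}_2$. Hence $\abs{\sH(\vx)}^2=4r^2\sum_i u_i(\vx)^2$ and $\E\abs{\sH(X)}^2=4r^2\sum_i\E[u_i(X)^2]$. For the baseline set, take $u_i$ to be $X$-orthonormal and $X$-centered, so $\sum_i\E[u_i(X)^2]=k$. For the post-DA set, take $u_i=e_i$, which gives $\sum_i\norm{e_i}^2_{L_2(X)}=\tr{\varShift}+D^2_\sB(p_X,p_{\stX})$ by the Pythagoras identity stated before \cref{lemma:da-soft-intervention}. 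The ratio is then $(\tsigma^2\gamma/\sigma^2\gamma)\cdot(\tr{\varShift}+D_\sB^2)/k=\Omega$. I will also check the basis-independence claim, i.e.\ that $\sum_i\norm{e_i}^2_{L_2(X)}=\tr{G_{\stX}^{-1}G_{X,L_2}}$ in any basis; this settles the well-definedness of both terms.

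\emph{Step 2 (the volume inequality).} The $L_2(X)$-volume of an ellipsoid $\{c+\sum a_i u_i:\norm a\le r\}$ is $\omega_k r^k\det(M)^{1/2}$, where $M_{ij}=\langle u_i,u_j\rangle_{L_2(X)}$ on the $k$-dimensional span. The evaluation must happen in a fixed $k$-dimensional subspace of $L_2(X)$. I take this to be the span of the directions, which is the same linear space modulo constants, measured with the full $L_2(X)$ norm that includes the level term; the baseline uses $M=I$. The volume ratio to the power $2/k$ is then $\rho\,\det(M)^{1/k}$. AM--GM on the eigenvalues of the positive-definite $M$ gives $\det(M)^{1/k}\le\tr{M}/k=(\tr{\varShift}+D^2_\sB)/k$, which yields $(\abs{\sH_{\tpid}}/\abs{\sH_{\pid}})^{2/k}\le\Omega$.

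\emph{Main obstacle.} The delicate point is making the two volumes comparable. The sets live in different centered subspaces $\sH_X$ and $\sH_{\stX}$, and a $\stX$-centered direction picks up a level component under $L_2(X)$. I plan to resolve this by measuring both sets as $k$-dimensional ellipsoids inside the common $L_2(X)$ space through their spanning directions. I then need to verify that the Gram matrix of $\{e_i\}$ in $L_2(X)$ is exactly $\varShift+\mu\mu^\top$ with $\mu_i=\E[e_i(X)]-\E[e_i(\stX)]$. This holds if the $e_i$ are $\stX$-centered and $X$-mean-shifted, so that $\tr{M}=\tr{\varShift}+\norm{\mu}^2$, and $\norm{\mu}^2=D^2_\sB$ by Cauchy--Schwarz over the unit $\stX$-ball.
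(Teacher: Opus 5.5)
Your proposal is correct and follows the paper's proof essentially step for step. Both arguments characterize the two sets as ellipsoids via \cref{lemma:characterizing-identified-sets}. Both bound the volume ratio by AM--GM on the spectrum of the $L_2(X)$-Gram matrix of a $\stX$-orthonormal basis; the paper diagonalizes this matrix as the shift operator $\shift$. Both compute the averaged squared widths as $4\sigma^2\gamma k$ versus $4\tsigma^2\gamma\,\tr{\shift}$, with $\tr{\shift}=\tr{\varShift}+D^2_\sB$ from \cref{lemma:shift-decomposition}.

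One small correction: the post-DA ellipsoid follows from \cref{lemma:characterizing-identified-sets} applied directly to the observational law $p_{\stX Y}$. You should not justify it through \cref{lemma:da-soft-intervention}, because that lemma assumes a $\sT$-invariant $h_{\target}$, which this proposition does not.
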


\Cref{proposition:da-sharper-bounds} ties sharpening to competing factors. Noise inflation $\DPI\geq 1$ and mean-shift $D^2_{\sB} \geq 0$ strictly expand the set. Thus, bounds tighten when inflated variance $\tr{\varShift}/k\leq 1$ drives $\Omega$ small enough to satisfy $\Omega \leq 1$. Intuitively, \cref{lemma:characterizing-identified-sets} enforces a \emph{variance cap} on allowable bias. DA can stretch admissible candidates beyond this fixed $\gamma$-budget, pruning them from the set. Notably, the soft intervention~\cref{lemma:da-soft-intervention} keeps the true bias $\tb$ within budget even as other candidates~are~pruned. Finally, as the inherited $\gamma$ over-budgets the now diluted confounding by $\DPI$, one may re-calibrate to $\tgamma\coloneqq\gamma/\DPI$ if the solver exposes $\gamma$, keeping validity while reducing~\cref{proposition:da-sharper-bounds}'s term to~$\Omega=(\tr{\varShift}+D^2_{\sB})/k$.
% \vspace{-0.5em}
\paragraph{Informativeness.}
While smaller sets are desirable, we also gauge the ``quality'' of set candidates:

\begin{minipage}{\columnwidth}
\begin{restatable}[informative bounds with DA]{theorem}{informativeBoundsWithDA}
\label{theorem:informative-bounds-with-da}
Under~{\asmOneShortBC} and~\ref{assumption:partial-r2}, the following holds for~$d=k$,
\begin{align*}
    \overline{E}(\sH_{\pid})
    -
    \overline{E}(\sH_{\tpid})
    \quad
    =
    \quad
    \underbrace{
        E(h_{\erm})
        -
        E(h_{\term})
    }_{
        \substack{
            \text{centroid drift (\cref{proposition:da-causal-estimation})}
        }
    }
    \;
    +
    \;
    \;   
    \frac{\sigma^2\gamma k}{d + 2}
    \cdot
    \!
    \underbrace{
        \pr{
            1
            -
            \Omega
        }
    }_{
        \substack{
            \text{set size (\cref{proposition:da-sharper-bounds})}
        }
    }
    \!
    \!
    \quad
    \eqcolon
    \quad
    f(d),
\end{align*}
And interval improvement averaged over queries: 
\smash{$\EE{X\sim p_X}{}{
    \func{\overline{E}_{X}}{
        \func{\sH_{\pid}}{X}
    }
-
    \func{\overline{E}_{X}}{
        \func{\sH_{\tpid}}{X}
    }
}
= f(1)$}.
\end{restatable}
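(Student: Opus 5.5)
The plan is to reduce everything to one elementary fact about uniform measures on ellipsoids, applied twice: once to the baseline set and once to the post-DA set. Both sets are covariance balls by \cref{lemma:characterizing-identified-sets}. For the baseline set, $\sH_{\pid}=\{h\in\sH_X:\norm{h-h_{\erm}}_X^2\le\sigma^2\gamma\}$. For the post-DA set, we apply the same lemma under the measure $p_{\stX YZ}$, as licensed by \cref{lemma:da-soft-intervention}. This gives $\sH_{\tpid}=\{h\in\sH_{\stX}:\norm{h-h_{\term}}_{\stX}^2\le\tsigma^2\gamma\}$, with $\tsigma^2\gamma=\DPI\sigma^2\gamma$.

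{\asmOneC} makes these genuine $k$-dimensional ellipsoids. {\asmOneB} transfers positive-definiteness of $\langle\cdot,\cdot\rangle_X$ to $\langle\cdot,\cdot\rangle_{\stX}$: a function with zero $\stX$-variance is a.s.\ constant under $p_{\stX}$, hence under $p_X$. Closure under constant shifts identifies $\sH_X$ and $\sH_{\stX}$ through the level map, so $\sH_{\stX}$ is also $k$-dimensional.

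\textbf{Step 1 (bias--variance split of $\overline{E}$).} Take a ball with centre $c$, written $h=c+v$ with $v$ uniform on a centred ellipsoid. By central symmetry the cross term averages to zero, so
\begin{align*}
\overline{E}=E(c)+\mathbb{E}_v\bigl[\norm{v}^2_{L_2(X)}\bigr].
\end{align*}
Consider the unit ball of a Hilbert norm $N$ in dimension $k$, scaled to radius $r$. Its uniform second-moment operator is $\tfrac{r^2}{k+2}$ times the identity in any $N$-orthonormal basis $\{e_i\}$. This gives
\begin{align*}
\mathbb{E}\norm{v}^2_{L_2(X)}=\frac{r^2}{k+2}\sum_i\norm{e_i}^2_{L_2(X)}.
\end{align*}

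\textbf{Step 2 (the two traces).} In the baseline case the directions $v$ lie in $\sH_X$, so they have zero $X$-mean and $\norm{e_i}_{L_2(X)}=\norm{e_i}_X=1$. The sum is therefore $k$, and the spread term is $\sigma^2\gamma k/(k+2)$.

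In the post-DA case the directions lie in $\sH_{\stX}$, are $\stX$-orthonormal, and carry nonzero $X$-mean. The basis-independent identity stated before \cref{lemma:da-soft-intervention} gives $\sum_i\norm{e_i}^2_{L_2(X)}=\tr{\varShift}+D^2_\sB$. The spread term is then $\DPI\sigma^2\gamma(\tr{\varShift}+D^2_\sB)/(k+2)=\sigma^2\gamma k\,\Omega/(k+2)$.

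Subtracting the two decompositions gives $E(h_{\erm})-E(h_{\term})+\tfrac{\sigma^2\gamma k}{k+2}(1-\Omega)$, which is $f(k)$. The level mismatch of $h_{\term}$ under $p_X$ is already absorbed in $E(h_{\term})$, which is measured in $L_2(X)$. I would also check basis independence of the trace. It is the trace of the $L_2(X)$ Gram form relative to the $\stX$ form, so it is invariant under change of orthonormal basis.

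\textbf{Step 3 (pointwise intervals, $d=1$).} For fixed $\vx$, evaluation $h\mapsto h(\vx)$ is a linear functional on the ball. The interval is therefore $c(\vx)\pm r_{\vx}$ with
\begin{align*}
r_{\vx}^2=r^2\sup_{N(v)\le1}v(\vx)^2=r^2\sum_i e_i(\vx)^2,
\end{align*}
by Cauchy--Schwarz in the $N$-orthonormal basis. Under $\ell$ the pointwise error is $E_{\vx}(y)=(y-h_{\target}(\vx))^2$ up to a constant that cancels. Uniform $y$ on the interval then gives $\overline{E}_{\vx}=(c(\vx)-h_{\target}(\vx))^2+r_{\vx}^2/3$, where $3=1+2$ explains $d=1$. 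Taking $\mathbb{E}_{X}$ turns $\sum_i e_i(X)^2$ into $\sum_i\norm{e_i}^2_{L_2(X)}$ and the centre term into $E(c)$. Steps 1--2 then reproduce the result with $k+2$ replaced by $3$.

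\textbf{Main obstacle.} I expect the main difficulty to be the bookkeeping between the two geometries. The post-DA ellipsoid is defined in $\stX$-centred, $\stX$-normed coordinates but evaluated in $L_2(X)$. The key point is that ``uniform'' must mean uniform on the solver's own ellipsoid, i.e.\ in $\sH_{\stX}$ coordinates. If it is read as uniform in $L_2(X)$ volume instead, the Jacobian is constant and the affine push-forward preserves uniformity, so the result is the same. I will state this explicitly.
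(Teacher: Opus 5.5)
Your proposal is correct and follows the paper's proof essentially step for step. You use the same bias--variance split of the uniform average around each centre, and the same $r^2/(d+2)$ second-moment identity, which gives $k$ for the baseline and $\tr{\shift}=\tr{\varShift}+D^2_{\sB}$ post-DA. For the intervals you use the same $w^2/3$ uniform variance with averaged squared half-widths, where your $\sum_i e_i(\vx)^2$ is exactly the paper's squared representer norm $\sqNorm{\tg_{\vx}}_{L_2(\stX)}$, averaged in \cref{lemma:avg-representer-norms}.
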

\end{minipage}

The decomposition isolates two mechanisms: the centroid drifts toward $h_{\target}$ and the set shrinks, as illustrated in \cref{fig:da-pi-cartoon}. Crucially, thresholds $\sqNorm{e_{\stb}}_{L_2(X)} \leq 1$ and $\Omega\leq 1$ are \emph{sufficient}, and not necessary for improvement. In general, DA ``wobbles'' the ellipsoid of \cref{lemma:characterizing-identified-sets}, inflating some directions while deflating others. The intersection $\sH_{\pid}(\vx)\cap\sH_{\tpid}(\vx)$ nonetheless guarantees~no~regression.
% \vspace{-0.5em}
\paragraph{Robustness.}
DA encoding misspecified $\sT$-inv. induces endogenous noise $\mNoise \coloneqq h_{\target}(X) - h_{\target}( \tX )$,
\begin{align*}
    Y = h_{\target}(X) + U + \residual,
    &&
    \iff
    &&
    Y = h_{\target}(\tX) + U + \mNoise + \residual ,
\end{align*}
The post-DA intervals $\sH_{\tpid}(\vx)$  may not account for this additional confounding. Therefore, to recover validity we apply $\eps$-padding $[a, b]\pm \eps\coloneqq [a-\eps, b+\eps]$.\footnote{
For non-interval $\sH_{\tpid}(\vx)$, read $\pm\eps$ as the Minkowski sum $\sH_{\tpid}(\vx)\pm\eps\coloneqq \sH_{\tpid}(\vx)+[-\eps,\eps]$. Also, note that re-calibrating to \emph{any} bigger budget $\tgamma\geq\gamma$ may not suffice for validity since $\E[\eta]\neq 0$ is not represented by~$\sH_{\stX}$.
}
Baseline intersection then absorbs any~slack.

\setcounter{theorem}{3}
\begin{minipage}{\columnwidth}
\begin{restatable}[robust bounds with DA]{subtheorem}{aRobustBoundsWithDA}
\label{thm:general-validity}
\hypertarget{thm-general-validity}{Under} {\asmOneShortAB} and \ref{assumption:partial-r2},
let valid baseline set $\sH_{\pid}$ cover an $(\eps, \sT)$-invariant $h_{\target}$. Then, $\eps$-padded post-DA intervals $\sH_{\tpid}(\vx)\pm \eps$ also remain valid on $p_{X}$-a.e.~$\vx$.
\end{restatable}
\end{minipage}

\begin{remark}[dominant bounds with DA]
\label{remark:robust-bounds-with-da}
Under {\asmOneShortAB} and \ref{assumption:partial-r2}, let $\sH_{\pid}$ cover $(\eps, \sT)$-invariant $h_{\target}$. Then, $\sH_{\pid}(\vx) \cap\pr*{ \sH_{\tpid}(\vx)\pm\eps }$ dominates $\sH_{\pid}(\vx)$ on sharpness and worst-case error for $p_X$-a.e. $\vx$.
\end{remark}

\Cref{thm:general-validity} now enables DA-based sensitivity analysis via the parameter $\eps$; validity is preserved by $\eps$-padding bounds, while baseline intersection maintains sharpness and informativeness. This is depicted in \cref{fig:pi-da-pi-intersection-cartoon} where post-DA bounds may arbitrarily change, yet intersection guarantees~dominance. It is worth contrasting this with \cref{sec:explicit-constraint}; unlike $\sH_{\pid+\inv}(\vx)$, $\sH_{\tpid}(\vx)\pm\eps$ solves the entire $\eps$-grid simultaneously.

%%%%%%%%%%%%%%%%%%%%%%%%%%%%%%%%%%%%%%%%%%
%%%%%%%%%%%%%%%%%%%%%%%%%%%%%%%%%%%%%%%%%%

\subsection{Symmetry constraint with $T$-as-IV }
\label{sec:da-as-iv}
The IV problem in \cref{eq:iv-problem} may be incomplete~\citep{niki-targeted-sequential-iv}, or even misspecified~\citep{leaky-iv,pmlr-v258-penn25a,plausibly-exogenous,misspecified-npiv1,misspecified-npiv2,r2-leaky-iv}. Consequently, $h_{\target}$ is unidentifiable. Nevertheless, even invalid instruments retain value for sharpening~PI:

\begin{assumption}[a leaky IV sensitivity model~\citep{misspecified-npiv1}] 
\label{assumption:iv-model}
For instrument violation budget $\gamma_{\vz}$, let
\begin{equation*}
\sP_{\!\pid + \iv} \coloneqq 
\br[\Big]{\;
q\in \sP_{\!\pid}
\,
:
\,
    \sqNorm{
        \EE{p}{}{
            Y 
            -
            h^q_{\target}(X)
            \given Z
        }
    }_{Z}
    \leq
    \sigma^2
    \gamma_{\vz}
\;} .
\end{equation*}
\end{assumption}

Here, $\gamma_{\vz}$ represents a bounded violation in un-confoundedness and/or exclusion of $Z$, while the baseline constraints $\sP_{\!\pid}$ may allow $\sP_{\!\pid+\iv}$ to be sufficiently well-posed to yield non-vacuous bounds. 

Can \Cref{sec:better-bounds} be pushed further, to impose symmetry explicitly within an IV solver? Apparently so! Post-DA, the random transformation $T$ satisfies the IV properties of un-confoundedness $T\indep U$ and exclusion $Y\indep T\,\vert\, \tX, U$ by construction. But $\sT$-invariance renders it incomplete, since $T\indep X$ gives $\E[\;\! h_{\target}(\tX) \;\!\vert\;\! T\;\!] = \EE{}{}{h_{\target}(X)\given T} = \EE{}{}{h_{\target}(X)} = 0$. Because incomplete instruments still constrain PI,  we plug $T$ in as an IV $\tZ\coloneqq (Z, T)$ to enforce symmetry within off-the-shelf or black-box IV solvers,
$$
\sQ_{\tpid+\tiv}(p_{XYZ}) \coloneqq \sQ_{\pid+\iv}(p_{\stX Y\stZ})
.
$$
We defer readers to~\cref{app:misspecified-da-under-iv-constraints} and~\cref{app:explicit-vs-implicit} for additional analysis under~\cref{assumption:iv-model}, including comparison with~\cref{sec:explicit-constraint}.

\section{Experiments}
\label{sec:experiments}
\begin{figure*}[t]
  \centering
  \begin{minipage}[t]{0.63\textwidth}
    \centering
    \includegraphics[width=1.0\linewidth,trim=0pt 0pt 0pt 0pt,
    clip]{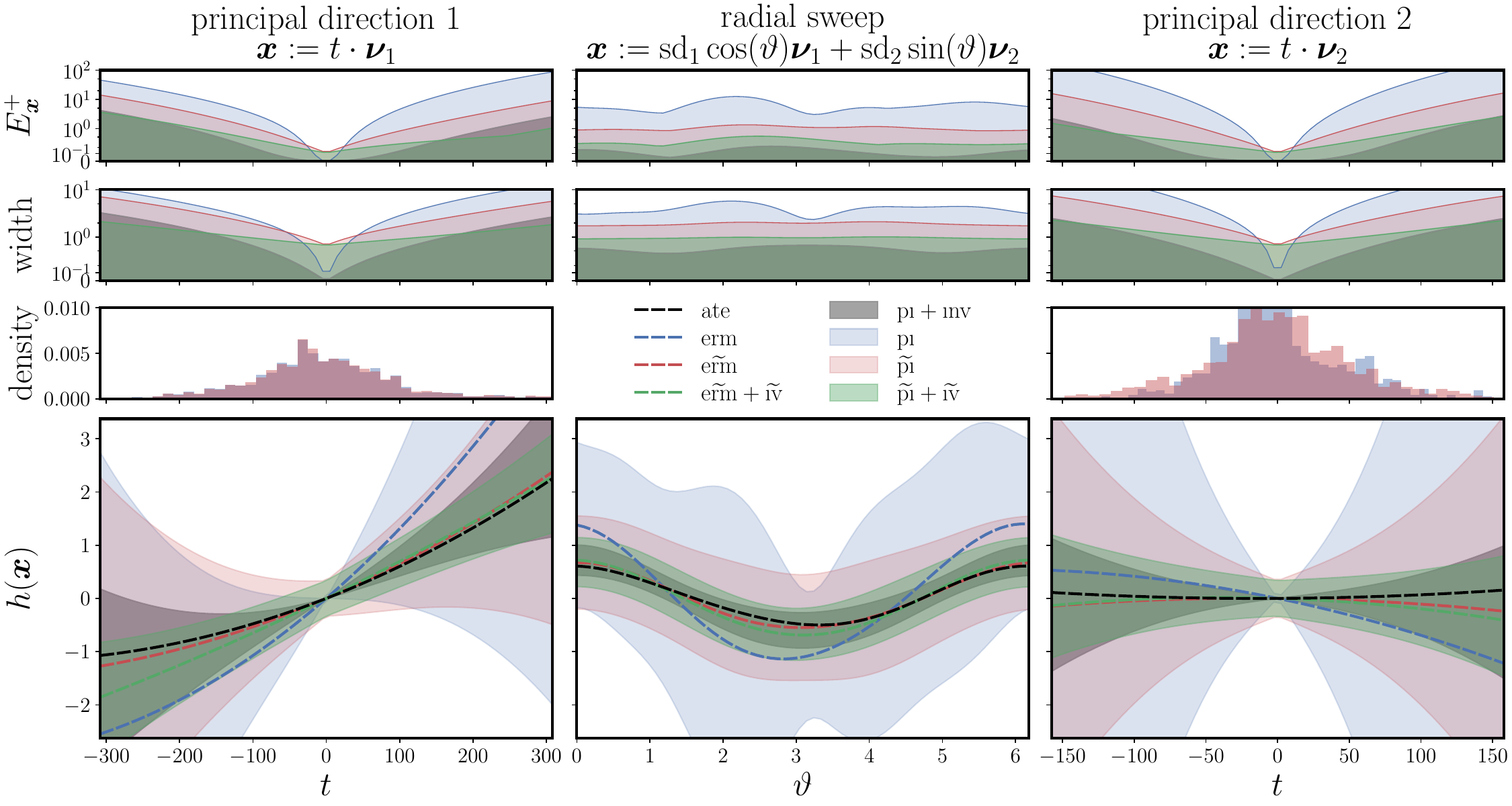}
    \captionof{figure}{
      \textbf{Optical device.}
      Across queries $\vx$ along principal components $\bm\nu_1, \bm\nu_2$ of $\var(X)$ and a radial sweep, symmetries reduce interval width and worst-case error {\figtop} over baseline while maintaining validity {\figbottom}.
    }
    \label{fig:optical_device}
  \end{minipage}
  \hfill%
  \begin{minipage}[t]{0.35\textwidth}
    \centering
    \includegraphics[width=1.0\linewidth,trim=10pt 10pt 10pt 10pt,
    clip]{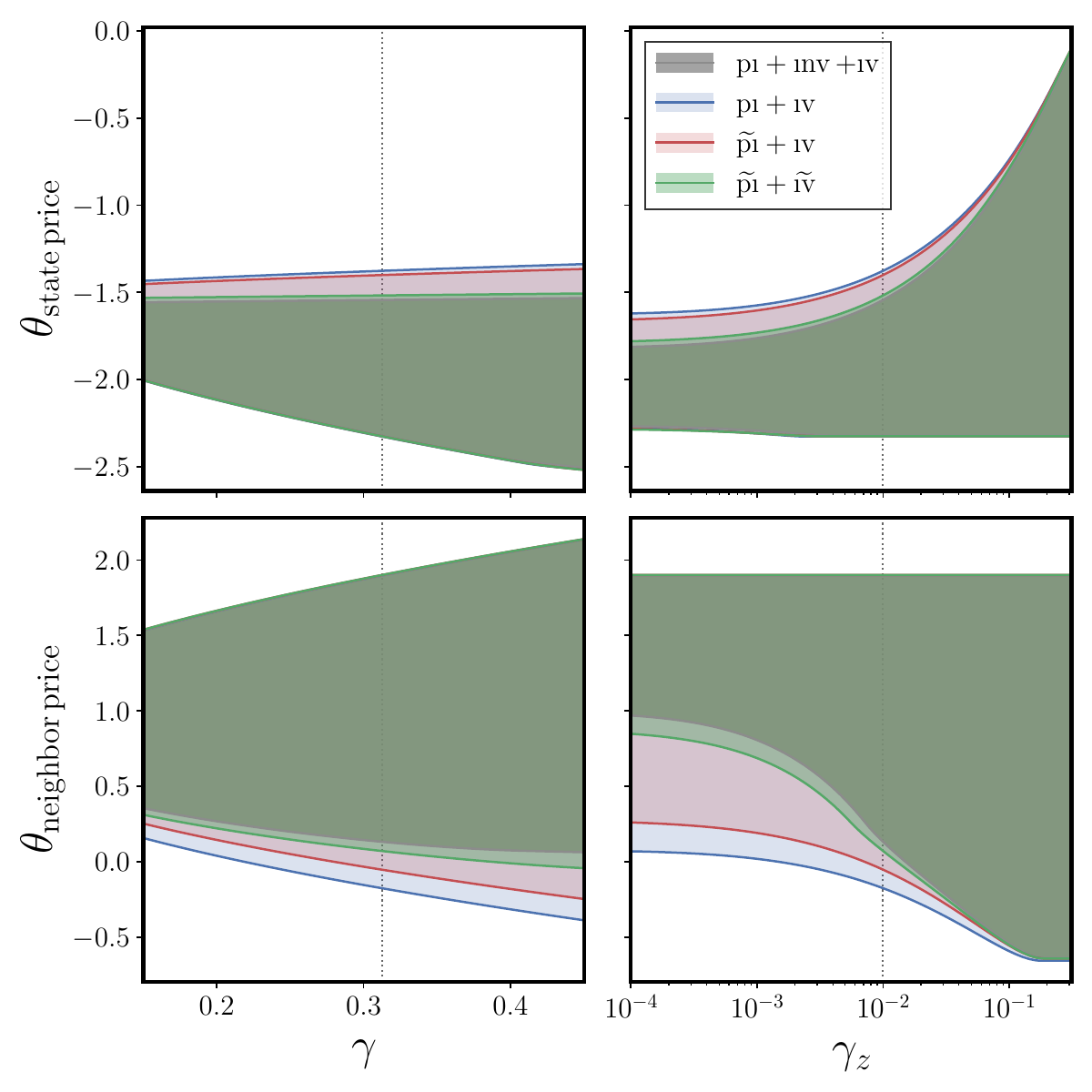}
    \captionof{figure}{
      \textbf{US cigarette demand.}
      State {\figtop} and neighbor {\figbottom} price elasticity across confounding $\gamma$ {\figleft} and IV $\gamma_{z}$ {\figright} budgets.
    }
    \label{fig:cigarettes}
  \end{minipage}
\vspace{-1\baselineskip}
\end{figure*}

We test three claims in finite samples: symmetry-informed PI (\rnum{1}) stays valid, (\rnum{2}) is sharper and more informative than baseline PI, and (\rnum{3}) via DA remains usable where the shape constraint is not. We compare baseline $\color{deep_blue}\pid$ with the shape constraint $\color{deep_grey}\pid+\inv$ (\cref{sec:explicit-constraint}), DA pre-processing $\color{deep_red}\tpid$ (\cref{sec:better-bounds}) and $T$-as-IV $\color{deep_green}\tpid+\tiv$ (\cref{sec:da-as-iv}), each an instance of one common PI program (\cref{table:estimator-grid}).
To isolate symmetry-driven sharpening from mere statistical efficiency, the augmented dataset is the same size as the original. Setup details are in \cref{app:experiment-setup}, and our implementation is publicly available at \url{https://github.com/uzairakbar/symmetry4CausalBounds}.

\textbf{Optical device.}\;
\label{sec:optical-device-dataset}
In \citet{optical-device}'s benchmark, $3\times3$-pixel images $X$ shown on a screen drive voltage readings $Y$ across a photodiode, while a physical latent $U$ drives two LEDs, one lighting the webcam that records $X$, the other the photodiode. Since image orientation should not affect the irradiance, we augment $X$ by flips, $90^{\circ}$ rotations, and sum-zero translations. We obtain $h_{\target}$ by back-door adjustment on $U$, withheld from all methods. \Cref{fig:optical_device} shows that symmetry-informed bounds stay valid while reducing width and worst-case error, sometimes by an order of~magnitude.

\textbf{US cigarette demand.}\;
We give an illustrative example of a textbook symmetry in a canonically causal setting; scaling all prices and income by a common factor (e.g., inflation) should leave demand unchanged. Following~\citet{baltagi-levin}, we take a panel of 48 US states and DC (1970--2019) with $Y$ the log cigarette sales per capita and $X$ the state's own price, cheapest neighbor's price, income and the consumer price index (CPI). We assume the latter two are exogenous, whereas the former two are confounded, with the state's cigarette tax as a (possibly leaky) instrument $Z$.    With $h_{\target}(\cdot)\coloneqq \bm\theta^\top_{\target} \log (\cdot)$, the symmetry implies $\1^\top\bm\theta_{\target}\approx 0$ and DA multiplies $X$ by a random scalar $T$ with $\log T$ Gaussian.
\Cref{fig:cigarettes} shows that imposing the symmetry makes the neighbor-price effect clearly positive unless the IV is heavily leaky, suggestive evidence of cross-border shopping.

\label{sec:do-mnist-dataset}
\begin{wrapfigure}{r}{0.48\textwidth}
  \centering
    \vspace{-\baselineskip}
  \includegraphics[
    width=\linewidth,
    trim=0pt 35pt 15pt 25pt,
    clip
  ]{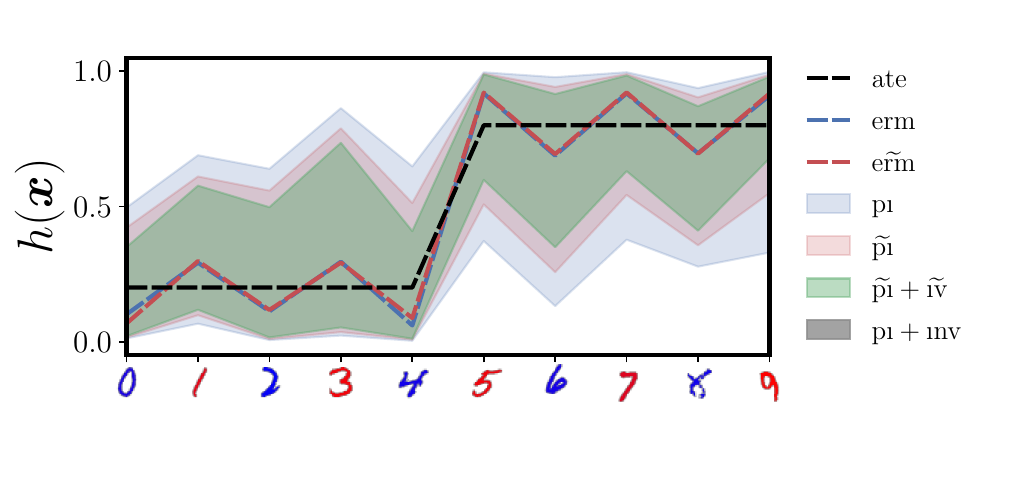}
  \caption{
    \textbf{$\rm do$-MNIST data.}
    Bounds on 10 exemplar images with alternating colors. ERM follows color; symmetries sharpen PI. Invariance shape constraint $\pid+\inv$ is infeasible here~(\cref{app:experiments}).
  }
  \label{fig:colored_mnist}
  \vspace{-\baselineskip}
\end{wrapfigure}
\textbf{$\rm do$-MNIST.}\;
We build a colored-MNIST variant~\citep{arjovsky2020invariant,mnist}, dubbed $\rm do$-MNIST, where a hidden $U$ both tints the digit image $X$ red or blue and corrupts its binary label $Y$, so color spuriously predicts $Y$, while $h_{\target}(\vx)$ depends only on whether the digit is $\geq 5$. We bound $h_{\target}$ using an off-the-shelf CopSens model~\citep{CopSens} with a convolutional network backend and a latent factor $q_{U | X}$. Small translations, rotations and jitter to contrast/saturation/hue defines our DA, none of which alter the digit. In \cref{fig:colored_mnist}, ERM tracks color rather than digit, and symmetries sharpen PI around $h_{\target}$. \Cref{tab:placeholder} aggregates metrics over $2,000$ test queries, showing that DA reduces average width by $19.08\%$ and worst-case error by $34.81\%$, while imposing $T$-as-IV reduces them by $29.05\%$ and $47.10\%$ respectively. The shape constraint $\color{deep_grey}\pid+\inv$, by contrast, is infeasible for any reasonably small $\eps$ as CopSens adapts only the last layer of a frozen, non-invariant feature extractor (\cref{app:experiment-setup}). This is precisely the operational limitation that motivates \cref{sec:implicit-constraint}. Finally, readers may notice that coverage dips by $0.40\%$ and $1.31\%$ for $\color{deep_red}\tpid$ and $\color{deep_green}\tpid+\tiv$ respectively. We blame finite samples and realizability, as with all sins in PI~\citep[\href{https://proceedings.neurips.cc/paper/2020/file/e8b1cbd05f6e6a358a81dee52493dd06-Review.html}{Reviewer 4}]{iv_niki}.

\textbf{Ablations.}\;
Under known $h_{\target}$,~\cref{app:ablations} sweeps each quantity from our theory in turn, aggregating PI metrics over $10$ seeds.
Post-DA sets attain coverage at smaller confounding budgets than baseline, as predicted by \cref{theorem:valid-bounds-with-da}~(\cref{fig:validity}). Width and worst-case error shrink with observable $\Omega$ (\cref{proposition:da-sharper-bounds}, \cref{fig:sharpness-informativeness}). Under deliberately misspecified symmetry, $\eps$-padding restores coverage once $\eps\geq\eps^\star$, and intersecting with the baseline set prevents regression; surprisingly, improvements hold under symmetry violations as large as confounding itself, even with up to $16\times$ over-padding (Thm.~\hyperlink{thm-general-validity}{3}, \cref{remark:robust-bounds-with-da}, \cref{fig:robustness}). Finally, gains persist even without budget re-calibration and across sample sizes $m, n$ (\cref{fig:recalibration,fig:n-efficiency,fig:m-efficiency}).

\begin{table}[t]
\centering
\begin{tabular}{lrrr}
\toprule
 & coverage & width & worst-case error \\
\midrule
$\color{deep_blue}\operatorname{p}\!\textnormal{\i}$ & $\bm{0.993 \pm 0.001}$ & $0.692 \pm 0.001$ & $0.293 \pm 0.001$ \\
$\color{deep_red}\widetilde{\operatorname{p}\!\textnormal{\i}}$ & $\mathit{0.989 \pm 0.001}$ & $\mathit{0.560 \pm 0.001}$ & $\mathit{0.191 \pm 0.001}$ \\
$\color{deep_green}\widetilde{\operatorname{p}\!\textnormal{\i}}+\widetilde{\textnormal{\i}\!\operatorname{v}}$ & $0.980 \pm 0.004$ & $\bm{0.491 \pm 0.019}$ & $\bm{0.155 \pm 0.010}$ \\
\bottomrule
\end{tabular}
\caption{Metrics aggregate over $2,000$ $\rm do$-MNIST test queries $\pm$ one standard error across~10~seeds.}
\label{tab:placeholder}
\vspace{-1\baselineskip}
\end{table}

\section{Discussion}
\label{sec:limitations}
\textbf{Limitations.}
Our proposed framework provides a practical hierarchy of symmetry constraints for PI: should an invariance shape constraint pose implementation challenges, practitioners can resort to the DA operationalization. If they suspect misspecification by some $\eps$, they need only to $\eps$-pad bounds. Which leads us to the main limitation of our work: while the soft intervention framing in~\cref{lemma:da-soft-intervention} makes composability of DA with a broad class of disparate PI solvers plausible under well-specified symmetries, it is yet unclear whether this should also be the case under $\eps$-misspecification. Having addressed this question under two canonical, arguably the most popular PI classes (\cref{assumption:partial-r2} bounded confounding and \cref{assumption:iv-model} leaky IVs), further investigation of additional models is left as future work.

\textbf{Broader applicability.}
In principle, symmetry as a shape constraint is compatible with \emph{all} PI models. Furthermore, since DA dilutes confounding, it also extends to other bounded-confounding models, e.g.\ MSM~\citep{MSM}, Wasserstein~\citep{implicit-g-constrained-pid}, total-variation~\citep{hallucinatedDRO,dro-pid} and $f$-divergence~\citep{f-sensitivity} balls. All of these, like \cref{assumption:partial-r2}, admit conditioning on pre-treatment $Z$~\citep{making-sensemakr}, omitted here only for clarity of exposition. Under well-specified symmetry, DA leaves $h_\target$ unchanged, so other shape constraints remain valid. And as DA preserves the causal graph up to relabeling, \citet{balke1997bounds} style graphical constraints carry over too; with response-function invariance $H_{\target}(\vx) =^d H_{\target}(\vtau\vx)$, this covers IVs with latent, non-additive $U$~\citep{iv_niki}, response-function models~\citep{causal-programming}, proxy variables~\citep{proxy-pi,prox-causal-learning,pmlr-v258-sun25d} and others~\citep{nonparametric-graphical}.

\textbf{Related work.}
\citet{cocycle} use symmetry to improve estimation of an identifiable target, and concurrent work uses symmetry as a point-identification assumption~\citep{geometricCausalModels,symmetryInCausalInference}. Counterfactual DA likewise imposes causal priors via augmentation for point-estimation~\citep{ilse2020selecting,counterfactualResidualDA}, or uses causal structure to inform DA design~\citep{causalResidualBootstrap}. All of these are target-identifiable settings; we consider the weaker setting where point-identification itself fails. Closer to our work is~\citet{causal-data-augmentation}, where additive Gaussian DA under a stylized, linear-Gaussian DGP improves non-identifiable point-estimation.~\Cref{proposition:da-causal-estimation} shows that this is generally fragile, which motivates the PI~lens~taken~here.

\textbf{Future work.}
We aim to generalize our change-of-measure framework to 
(\rnum{1}) general causal queries on $p_{Y | \intervention{\vx}}$~\citep{niki-targeted-sequential-iv}, 
(\rnum{2}) equi-variance in the response function $H_\target(\vtau\vx) =^d \beta(\vx, \vtau) H_\target(\vx)$ under cocycle $\beta$~\citep{cocycle}, and
(\rnum{3}) latent $U$ under a $\wp$-Wasserstein notion of equi-variance misspecification $\sup_{\vx, \vtau} W_\wp(p_{H_\target(\vtau\vx)}, \beta(\vx, \vtau)_\# p_{H_\target(\vx)})\leq\eps$~\citep{group-theory-da-jmlr}. Also, it may be interesting to demonstrate application across causal settings like genomics~\citep{geometricCausalModels} and medicine~\citep{medA,medB,med1,cheema2026necklearn}.

\textbf{Conclusion.}
Researchers are advised to incorporate established domain symmetries as a standard component of their partial-identification toolkit. Integration is effortless with DA pre-processing.

% \subsection*{Reproducibility statement}
% Implementation details, supplementary experiments, and complete proofs are provided in the appendices.

\newpage
{\small
\bibliography{bibliography}
}

%%%%%%%%%%%%%%%%%%%%%%%%%%%%%%%%%%%%%%%%%%%%%%%%%%%%%%%%%%%%

\newpage
\appendix

\section*{Appendix---Symmetry-Informed Causal Partial Identification}
{
    \startcontents[appendices]
    \printcontents[appendices]{l}{1}{\section*{Contents}\setcounter{tocdepth}{3}}
}

\newpage
\section{List of Symbols}
\label{app:notation}
\vspace{-1\baselineskip}
\begin{longtable}[c]{@{}p{0.25\textwidth}p{0.75\textwidth}@{}}
\label{tab:notation} \\
\toprule
\textbf{Symbol} & \textbf{Description} \\
\midrule
\endfirsthead
\multicolumn{2}{c}{\tablename\ \thetable\ -- \textit{Continued from previous page}} \\
\toprule
\textbf{Symbol} & \textbf{Description} \\
\midrule
\endhead
\midrule
\multicolumn{2}{r}{\textit{Continued on next page}} \\
\endfoot
\bottomrule
\endlastfoot

\multicolumn{2}{@{}l}{\textbf{Variables and Spaces}} \\[0.5em]
$X, \sX, \vx$ & Treatment, its support, and a realization \\[0.4em]
$Y, \sY, y$ & Outcome, its support, and a realization \\[0.4em]
$Z, \sZ, \vz$ & Pre-treatment co-variates / instruments (IVs), their support, and realization \\[0.4em]
$U, \residual$ & Unobserved confounder, exogenous noise (mean-zero, $\residual\indep(U,X,Z)$) \\[0.4em]
$\tX, \tZ$ & Post-DA treatment $\tX\coloneqq TX$ and instrument $\tZ\coloneqq (Z, T)$  \\[1em]

\multicolumn{2}{@{}l}{\textbf{Hypotheses and Estimators}} \\[0.5em]
$h, h_{\target}, \sH$ & Candidate hypothesis, causal target, hypothesis class \\[0.4em]
$\sH_X, \sH_{\stX}, k$ & $X$-, $\tX$-centered $\sH$ with covariance forms $\langle\cdot,\cdot\rangle_X, \langle\cdot,\cdot\rangle_{\stX}$ and $k\coloneqq{\rm dim}(\sH_X)$ \\[0.4em]
$h_{\erm}, h_{\term}, b, \tb$ & ERM estimators $h_\bullet\coloneqq \argmin_{h\in\sH}R_\bullet(h)$, their respective biases $h_{\bullet}-h_{\target}$ \\[0.4em]
$h_{\iv}, h^{p}_{\target}$ & IV estimator, and back-door adjustment over joint $p$ \\[0.4em]
$g_{\vx}, \tg_{\vx}$ & Riesz representers for point-evaluation $h\mapsto h(\vx)$ in $\sH_X$ and $\sH_{\stX}$ \\[0.4em]
$\sH_{\inv}, \sI, \sPi$ & $\sPi$ orthogonally projects onto ambient invariant space $\sI$, and $\sH_{\inv}\coloneqq \sH\cap\sI$ \\[1em]

\multicolumn{2}{@{}l}{\textbf{Symmetries, Interventions and Change of Measure}} \\[0.5em]
$T, \sT, \vtau, \alpha$ & Random transformation, support, realization, and defining map $\vtau\vx\!=\!\alpha(\!\!\;\vx,\!\!\;\!\vtau\!\!\;)$ \\[0.4em]
$\intervention{\vx}, \intervention{X}{\stX}$ & Hard intervention $\operatorname{do}(X=\vx)$, and DA-simulated soft intervention \\[0.4em]
$\eps, \mNoise$ & Invariance residual $\mNoise\coloneqq h_{\target}(X)-h_{\target}(\tX)$ uniformly bounded $\abs{\eta}\leq\eps$ a.s. \\[0.4em]
$\shift, \varShift, \meanShiftAdj\meanShift$ & Shift operator on $\sH_{\stX}$, splits into shape/variance $+$ rank-1 level/mean shifts \\[0.4em]
$\sB, D_{\sB}$ & MMD $D_{\sB}(p_X, p_{\stX})$ over a unit $\tX$-ball in $\sH_{\stX}$, with $D^2_{\sB} = \tr{\delta^\ast \delta}$~(\cref{lemma:shift-decomposition}) \\[0.4em]
$\sigma^2, \tsigma^2, \rho$ & Baseline / post-DA MMSE, and the noise-inflation factor $\rho\coloneqq\tsigma^2/\sigma^2 \geq 1$ \\[0.4em]
$\Omega$ & $\Omega \coloneqq \rho \cdot \left.\pr*{\tr{\varShift} + D^2_{\sB}} \right/ k$; post-DA bounds sharpen if $\Omega \leq 1$~(\cref{proposition:da-sharper-bounds}) \\[1em]

\multicolumn{2}{@{}l}{\textbf{Constraints and Partial Identification}} \\[0.5em]
$p, q$ & True observational joint, and a candidate joint distribution in $\sQ_{\pid}$ \\[0.4em]
$\IPM_{\sF}, \sF$ & IPM and its witness-function class defining $\sQ_{\pid}$ \\[0.4em]
$\gamma_{\vx}, \gamma_y, \gamma_{\vz}$ & Assumed budgets for confounding in $X, Y$, and leaky IV $Z$ violation \\[0.4em]
$\gamma, \gamma_{\bullet}^{\target}, \tilde{\gamma}_{\bullet}$ & Effective confounding budget $\gamma\coloneqq\frac{\gamma_{\vx}\gamma_y}{1-\gamma_{\vx}}$, true and re-calibrated budgets \\[0.4em]
$\sP_{\!\pid}, \sP_{\!\pid+\inv}, \sP_{\!\pid+\iv}$ & PI constraint sets: baseline, augmented with invariance / IV constraints \\[0.4em]
$\sQ_{\pid}, \sQ_{\tpid}, \sQ_{\tpid+\tiv}$ & Candidate joints: baseline, post-DA, and DA-as-IV \\[0.4em]
$\sH_{\pid}, \sH_{\tpid}, \func{\sH_{\pid}}{\vx}, \func{\sH_{\tpid}}{\vx}$ & Baseline and post-DA identified sets, their respective intervals at query $\vx$ \\[1em]

\multicolumn{2}{@{}l}{\textbf{Risks and Evaluation Metrics}} \\[0.5em]
$\Risk{\erm}{}{h}, \Risk{\term}{}{h}$ & Observational risks, pre- and post-DA \\[0.4em]
$R_{\vx}(y), \Risk{}{}{h}$ & Causal / interventional risks, point-wise and averaged \\[0.4em]
$E_{\vx}(y), E(h)$ & Causal excess risk/ estimation error $E(h)\!=\!\Risk{}{}{h}-\!\;\!\Risk{}{}{h_{\target}\!\;\!}$, pointwise~via~$R_{\vx}$ \\[0.4em]
$E^{-}(\sH), \overline{E}(\sH), E^{+}(\sH)$ & Approximation (min), averaged, and worst-case (max) errors $E$ over set $\sH$ \\[0.4em]
$E^-_{\vx}(\sY), \overline{E}_{\vx}(\sY), E^+_{\vx}(\sY)$ & Point-wise versions of the above metrics via $E_{\vx}$ over interval $\sY$ \\[0.4em]
$E_{\inv}(h)$ & Invariance error $\E_p\sqPr[\big]{ \abs{ h(X) - h(\tX) }^2 }$ \\[0.4em]
\end{longtable}

\newpage
\section{Misspecified DA under IV Constraints}
\label{app:misspecified-da-under-iv-constraints}

For completeness, we explore the limits of the $\eps$-padding approach from \cref{thm:general-validity} under the IV constraint in \cref{assumption:iv-model}. Notice that $\sT$-invariance violation manifests as violated IV exclusion / un-confoundedness. This violation may be accommodated by either DA reducing confounding via the DPI $\sigma^2\leq \tsigma^2$, or a sufficient slack in the baseline IV budget $\gamma_{\vz}$.

\begin{minipage}{\columnwidth}
\begin{restatable}[robust bounds with DA]{subtheorem}{bRobustBoundsWithDA}
\label{thm:general-validity-b}
Let a valid model following \cref{assumption:iv-model} yield intervals~$\sH_{\pid+\iv}(\vx)$ under the setting of \cref{thm:general-validity}. Then, $\sH_{\tpid+\tiv}(\vx)\pm\eps$ is also valid for $\tZ\in\{Z, (Z, T)\}$ on $p_{X}$-a.e. $\vx$ if
\begin{align*}
    \eps \leq \tsigma \sqrt{\gamma_{\vz}} - \sigma \sqrt{\gamma_{\vz}^{\target}}
    ,
    &&
    \text{
        where $\gamma_{\vz}^{\target}\leq \gamma_{\vz}$ is the true IV budget.
    }
\end{align*}
\end{restatable}
\end{minipage}

The above inequality allows for some tolerance to $\eps$-misspecified symmetry under DA so long as $\gamma_{\vz}$ is positive. In practice this is often true even for perfect instruments to make the optimization problem more tractable. Importantly, practitioners can use the following observable diagnostic for \cref{thm:general-validity-b}:
\begin{align*}
    \eps
    \leq
    (\tsigma - \sigma)
    \sqrt{
        \gamma_{\vz}
    }
    .
\end{align*}
Alternatively, if the sensitivity parameter $\gamma_{\vz}$ is exposed, one can simply re-calibrate it to $\tilde{\gamma}_{\vz}$ satisfying
\begin{align*}
    \tgamma_{\vz}(\eps)
    =
    \pr*{
        \eps / \tsigma + \sqrt{\gamma_{\vz} / \rho}
    }^2
    .
\end{align*}
Meaning the new budget $\tilde{\gamma}_{\vz}$ should accommodate the IV leak as well as the misspecified symmetry. If one plausibly assumes symmetry violation is ``orthogonal'' to confounding~(\cref{remark:conservative-iv}), this tightens~to
\begin{align*}
    \tgamma_{\vz}(\eps)
    =
    \eps^2 / \tsigma^2
    +
        \gamma_{\vz} / \rho
    .
\end{align*}

\newpage

\section{Shape Constraint vs.\ Change of Measure}
\label{app:explicit-vs-implicit}

In this section we aim to compare the two operationalizations of symmetry informed PI; via an invariance shape constraint in~\cref{sec:explicit-constraint}, and via change-of-measure through DA pre-processing in~\cref{sec:implicit-constraint}. Throughout, we assume no observable co-variates $Z = \varnothing$ for simplicity, and a mean-centered $Y$ w.l.o.g. with the mean absorbed into the intercept of $h_{\target}$. Define the ambient $\sT$-inv. subspace $\sI$ as
\begin{align*}
    \sI\coloneqq\br[\Big]{
    \,
    h
    \in
    L^0_2(X)
    \cap
    L^0_2(\stX)
    \;
    \colon
    \;
    h(X)=h(\stX)\ \text{a.s.}
    \,
    }
    ,
    \qquad
    \sH_{\inv}
    \coloneqq
    \sH\cap\sI
    .
\end{align*}
Notice that, by construction, all forms $\langle\cdot, \cdot\rangle_X$, $\langle\cdot, \cdot\rangle_{\stX}$ $\langle\cdot, \cdot\rangle_{L_2(X)}$ and $\langle\cdot, \cdot\rangle_{L_2(\stX)}$ coincide on $\sI$, $\sH_{\inv}$.
Next, define the operator $\sE_{A | B}: L^0_2(A) \rightarrow L^0_2(B)$ as conditional expectation on centered subspaces
\begin{align*}
    (\sE_{A | B}\, h)(\cdot) = \EE{}{}{h(A) \given B = \cdot} .
\end{align*}
Well defined because $\EE{}{}{\EE{}{}{h(A) \given B}} = \EE{}{}{h(A)}=0$ by the tower property, and $\sE_{A | B} h = 0$ iff $\EE{}{}{h(A) \given B}=0$ a.s..
Of particular interest are $\sE_{X | \stX}$ and $\sE_{\stX | T}$, both of which we discuss below.

Now, under a $\sT$-inv. $h_{\target}$, $\eps = 0$ and the invariance shape constraint in~\cref{sec:explicit-constraint} yields $\sH_{\pid+\inv} = \sH_{\pid}\cap\sH_{\inv}$ (see~\cref{corollary:sharp-inv-constraints}). By contrast, DA pre-processing yields $\sH_{\tpid}$ and/or $\sH_{\tpid+\tiv}$. To make their comparison tractable in observables, we make two standard assumptions. First, we take $\sT$ to be a compact group:

\vspace{1pt}
\begin{assumption}[compact group action DA]
\label{assumption:compact-group}
Let $\sT$ be a compact group with action $\alpha : \sX\times\sT\rightarrow \sX$. Furthermore, $\sT$ is endowed with a (unique) normalized Haar measure $p_T$, i.e. $p_T$ is uniform over $\sT$.
\end{assumption}

\Cref{assumption:compact-group} encompasses several standard DA transformations, including some used in our experiments such as rotations, translations, etc. Under~\cref{assumption:compact-group},  $p_{\smash{\tX}}$ itself becomes $\sT$-invariant. Importantly, $\sE_{X | \stX}$ is now the Reynolds operator; an $L_2(X)$-orthogonal projector onto~$\sI$~\citep{reynolds}. Under~{\asmOneShortAB}, the post-DA bias $\tb$ is then a projection of $b$ onto $\sI$~(\cref{lemma:haar-group-da}). This now allows us to compare identification metrics entirely in observables, e.g. in~\cref{proposition:da-causal-estimation},
$$
\sqNorm{e_{\tb}}_{L_2(X)}
=
\frac{\sqNorm{\stb}_{L_2(X)}\vphantom{\tb}}{\sqNorm{\stb}_{\tX}\vphantom{\tb}}
=
\frac{\sqNorm{\stb}_{\tX}\vphantom{\tb}}{\sqNorm{\stb}_{\tX}\vphantom{\tb}}
=
1 \vphantom{\tb}
.
$$
Second, since $T$-as-IV imposes~\cref{eq:iv-problem} as a constraint, it filters functions with $\mathbb{E}[ h(\tX)|T]$~a.s.~constant. This holds for all $h\in\sH_{\inv}$, since $\mathbb{E}[h(\tX) | T] = \mathbb{E}[h(X) | T] = \mathbb{E}[h(X)]=0$ a.s. following $T\indep X$. Hence $\sH_{\inv}\subseteq \Null{ \sE_{\smash{\tX} | T} }$. We say that DA is \emph{relevant}, or \emph{complete}, when this holds with equality:

\vspace{1pt}
\begin{assumption}[relevant / complete DA]
\label{assumption:da-relevance-completeness}
    Let $T$ be \emph{relevant / complete}:\hfill $\Null{ \sE_{\smash{\tX} | T} \given_{\sH_{\stX}} } = \sH_{\inv}$.\hfill\vphantom{.}
\end{assumption}

\Cref{assumption:da-relevance-completeness} is akin to the standard injective $\sE_{X | Z}$ assumption for relevant/complete IVs $Z$~\citep{niki-targeted-sequential-iv,pmlr-v258-sun25d}. Similarly,~\cref{assumption:da-relevance-completeness} assumes $\sE_{\stX | T}$ injective on $\tX$-orthogonal complement of $\sH_{\inv}$ within $\sH_{\stX}$. Meaning, the only directions whose data-averaged response is zero across augmentations are the invariant~ones.

The inv. shape constraint and change-of-measure routes are now comparable under \cref{assumption:compact-group,assumption:da-relevance-completeness}:

\begin{restatable}[shape constraint vs. change-of-measure]{proposition}{invConstraintVsDA}
\label{proposition:inv-constraint-vs-da}
Under {\asmOneShortAB}, and~\cref{assumption:partial-r2,assumption:iv-model,assumption:compact-group,assumption:da-relevance-completeness} with $\sT$-invariant $h_{\target}$ and $Z = \varnothing$. 
Set $\eps = \gamma_{\vz}=0$, and suppose the baseline PI set is valid $h_{\target}\in\sH_{\pid}$.
Then, $h_{\term} = \mathcal{\Pi}h_{\erm} \in \sH_{\inv}$, where $\mathcal{\Pi}$ is the $L_2(X)$-orthogonal projection onto $\sI$~(cf.~\cref{app:proofs}).~Further,
\begin{align*}
    \sH_{\tpid} &= \br*{
        h\in
        \sH_{\stX}
        \;\;\!
        \colon
        \sqNorm{\vphantom{\big|}
            h - h_{\term}
        }_{\stX}
        \leq
        \tsigma^2\gamma
    }
    ,
    \\
    \sH_{\tpid+\tiv} &= \br*{
        h\in
        \sH_{\inv}
        \colon
        \sqNorm{\vphantom{\big|}
            h - h_{\term}
        }_{\stX}
        \leq
        \tsigma^2\gamma
    }
    ,
    \\
    \sH_{\pid+\inv} &= \br*{
        h\in
        \sH_{\inv}
        \colon
        \sqNorm{\vphantom{\big|}
            h - h_{\term}
        }_{X}
        \leq
        \sigma^2(1 + \gamma - \rho)
    }
    .
\end{align*}
All identified sets are balls centered about $h_{\term}$. Both $\sH_{\tpid+\tiv}$ and $\sH_{\pid+\inv}$ live on the slice $\sH_{\inv}$, the latter with a smaller radius $\sigma \sqrt{ 1 + \gamma - \rho } \leq \tsigma \sqrt{\gamma}$ with equality iff $\rho = 1$. It trivially follows that,
\begin{align*}
    h_{\target}
    \in
    \sH_{\pid+\inv}\subseteq\sH_{\tpid+\tiv}\subseteq\sH_{\tpid}
    ,
\end{align*}
and the same dominance hierarchy holds for sharpness and worst-case errors over $\sH_\bullet$~and~$\sH_\bullet(\vx)$.
\end{restatable}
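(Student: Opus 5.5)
The plan is to reduce every claim to \cref{lemma:characterizing-identified-sets} applied under the appropriate measure, plus one Pythagorean identity coming from the Reynolds projector. Throughout, $Y$ is centered and $Z=\varnothing$.

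\emph{Step 1: the center.} I first show $h_{\term}=\sPi h_{\erm}\in\sH_{\inv}$.
\begin{itemize}[leftmargin=*,itemsep=1pt,topsep=2pt]
    \item By {\asmOneShortA}, $h_{\erm}=\E[Y\mid X]$ and $h_{\term}=\E[Y\mid \tX]$.
    \item Since $T$ is drawn independently of $(U,X,\residual)$, we have $Y\indep \tX\mid X$. The tower property then gives $h_{\term}=\E[\E[Y\mid X]\mid \tX]=\sE_{X\mid\tX}h_{\erm}$.
    \item Under \cref{assumption:compact-group}, $\sE_{X\mid\tX}$ is the Reynolds operator, i.e.\ the $L_2(X)$-orthogonal projector $\sPi$ onto $\sI$ (\cref{lemma:haar-group-da}). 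Hence $h_{\term}=\sPi h_{\erm}\in\sI$.
    \item Realizability in {\asmOneShortA} places $h_{\term}$ in $\sH$, so $h_{\term}\in\sH_{\inv}$.
\end{itemize}
As a by-product I record the noise identity. Since $Y-h_{\erm}(X)$ is orthogonal to every square-integrable function of $X$,
\[
\tsigma^2=\E[(Y-h_{\term})^2]=\sigma^2+\sqNorm{h_{\erm}-h_{\term}}_{L_2(X)},
\qquad\text{so}\qquad
\sqNorm{h_{\erm}-h_{\term}}_X=\sigma^2(\rho-1).
\]
The $L_2$ and covariance norms agree here because both functions are centered.

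\emph{Step 2: the three sets.}
\begin{itemize}[leftmargin=*,itemsep=1pt,topsep=2pt]
    \item \textbf{$\sH_{\tpid}$.} By \cref{lemma:da-soft-intervention}(A), the post-DA data obey \cref{eq:canonical-sem} with $X$ relabeled to $\tX$, and {\asmOneShortA} supplies post-DA realizability. Applying \cref{lemma:characterizing-identified-sets} to $p_{\stX Y}$ then gives the ball in $\sH_{\stX}$ centered at $h_{\term}$ with radius $\tsigma\sqrt{\gamma}$.
    \item \textbf{$\sH_{\tpid+\tiv}$.} Take $\tZ=T$ and $\gamma_{\vz}=0$. The IV constraint reads $\E[Y-h(\tX)\mid T]=0$. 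Since $T\indep Y$ and $\E[Y]=0$, this is equivalent to $\sE_{\stX\mid T}h=0$. By \cref{assumption:da-relevance-completeness} this holds iff $h\in\sH_{\inv}$, so $\sH_{\tpid+\tiv}$ is the previous ball sliced by $\sH_{\inv}$.
    \item \textbf{$\sH_{\pid+\inv}$.} With $\eps=0$, \cref{corollary:sharp-inv-constraints} gives $\sH_{\pid+\inv}=\sH_{\pid}\cap\sH_{\inv}$, and $\sH_{\pid}$ is the $X$-ball of radius $\sigma\sqrt\gamma$ about $h_{\erm}$. For $h\in\sH_{\inv}$, split
    \[
    h-h_{\erm}=(h-h_{\term})-(\Id-\sPi)h_{\erm},
    \]
    where the first term lies in $\sI$ and the second is orthogonal to $\sI$ in $L_2(X)$. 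Pythagoras and Step 1 give
    \[
    \sqNorm{h-h_{\erm}}_X=\sqNorm{h-h_{\term}}_X+\sigma^2(\rho-1).
    \]
    The constraint $\sqNorm{h-h_{\erm}}_X\le\sigma^2\gamma$ therefore becomes $\sqNorm{h-h_{\term}}_X\le\sigma^2(1+\gamma-\rho)$.
\end{itemize}
All forms coincide on $\sI$, so on the slice $\sH_{\inv}$ the $X$- and $\tX$-norms are interchangeable. This makes the last two balls directly comparable.

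\emph{Step 3: radii and hierarchy.*} Compare the squared radii:
\[
\sigma^2(1+\gamma-\rho)\le\tsigma^2\gamma=\sigma^2\rho\gamma
\iff (1-\rho)(1+\gamma)\le 0.
\]
This holds because $\rho\ge1$, with equality iff $\rho=1$. The resulting inclusions are then immediate:
\begin{itemize}[leftmargin=*,itemsep=1pt,topsep=2pt]
    \item Concentric balls on the same slice give $\sH_{\pid+\inv}\subseteq\sH_{\tpid+\tiv}$.
    \item Slicing gives $\sH_{\tpid+\tiv}\subseteq\sH_{\tpid}$.
    \item $h_{\target}\in\sH_{\pid}$ by the validity assumption, and $h_{\target}\in\sH_{\inv}$ by $\sT$-invariance, so $h_{\target}\in\sH_{\pid+\inv}$.
\end{itemize}
Sharpness and worst-case error are monotone under set inclusion, since a max over a subset is no larger. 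Evaluation at $\vx$ preserves inclusion, so the same ordering holds for $\sH_\bullet(\vx)$.

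The main obstacle is Step 1 together with the centering bookkeeping. One must justify $Y\indep\tX\mid X$, and verify that $\sE_{X\mid\tX}$ acting on functions of $X$ is exactly $\sPi$, using the Haar invariance of $p_{\stX}$. One must also check that no level (mean) term breaks the Pythagorean split. This last point works only because every function involved lies in the centered space on which all four forms agree.
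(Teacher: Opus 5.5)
Your proposal is correct and follows the paper's proof. Both arguments take the shared center from \cref{lemma:haar-group-da}, obtain $\sH_{\tpid}$ from \cref{lemma:characterizing-identified-sets} under $p_{\stX Y}$, and get the $T$-as-IV slice from \cref{assumption:da-relevance-completeness}. Both also use the same Pythagorean split of $\sqNorm{h-h_{\erm}}_X$ along $\sPi$ before comparing radii.

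The one real difference is how you evaluate the offset $\sqNorm{(\Id-\sPi)h_{\erm}}_X$.
\begin{itemize}
\item The paper passes through the biases. It writes $(\Id-\sPi)h_{\erm}=(\Id-\sPi)b$, which uses $\sT$-invariance of $h_{\target}$. It then combines $\stb=\sPi b$ with the attenuated-confounding identity of \cref{lemma:da-soft-intervention}(C), giving $\sqNorm{b}_X-\sqNorm{\stb}_{\stX}=\tsigma^2-\sigma^2$.
\item You read the offset directly off the MMSE gap. Since $h_{\term}\in\sI$, you have $\tsigma^2=\E[(Y-h_{\term}(\stX))^2]=\E[(Y-h_{\term}(X))^2]$. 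Orthogonality of the ERM residual then gives $\tsigma^2=\sigma^2+\sqNorm{h_{\erm}-h_{\term}}_{L_2(X)}$.
\end{itemize}

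You should state the first of those equalities explicitly, since your display writes $\E[(Y-h_{\term})^2]$ without an argument. That equality is exactly where the invariance of $h_{\term}$ from your Step 1 enters. Without it, the residual $Y-h_{\erm}(X)$ is not orthogonal to $h_{\term}(\stX)$, and the identity fails.

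Your route is more elementary and self-contained. For this step it needs neither invariance of $h_{\target}$ nor the soft-intervention lemma, and it gives $\rho\ge 1$ for free as a squared norm rather than through the data-processing inequality. The paper's route instead makes an interpretive point: the squared radius lost by the invariance-constrained ball is exactly the confounding variance that DA dilutes. This ties the proposition back to \cref{lemma:da-soft-intervention}.
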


Under~{\asmOneC}, closed form volume, width and avg. errors for the balls in~\cref{proposition:inv-constraint-vs-da} can now be derived in observables similar to~\cref{proposition:da-sharper-bounds} and~\cref{theorem:informative-bounds-with-da}. We leave it as an exercise for interested readers.

\newpage
\section{Implementation Details}
\label{app:implementation-details}
\paragraph{Identification program.}

\setcounter{table}{2}
\begin{table}[t]
  \centering
  \footnotesize
  \setlength{\tabcolsep}{3.5pt}
  \caption{Every object in the paper as one setting of the same program~\eqref{eq:P2};
  labels are those used in the legends of \cref{fig:optical_device,fig:cigarettes,fig:colored_mnist,fig:linear_simulation,fig:validity,fig:sharpness-informativeness,fig:robustness,fig:recalibration,fig:n-efficiency,fig:m-efficiency}. The label may drop $\dots+\iv$ whenever $Z = \varnothing$ in the respective experiment. $\gamma_{\min}$ represents the minimum feasible budget, whereas $\tgamma_\bullet$ represents a post-DA, re-calibrated budget. And $\color{deep_blue}\erm+\iv$ is the anchor regression~\citep{anchor}.
}
  \label{table:estimator-grid}
\begin{tabular}{@{}cc|lccc|lcccl@{}} \toprule 
    & & & \multicolumn{3}{c|}{Sensitivity params.} & & \multicolumn{3}{c}{Sensitivity params.} & \\ 
    \cline{4-6} \cline{8-10} 
    \rule{0pt}{1.1em}Dataset & Instruments & Point estimate & $\gamma$ & $\gamma_{\vz}$ & $\eps$ & PI interval & $\gamma$ & $\gamma_{\vz}$ & $\eps$ & Post-proc. \\
\midrule
    $\sD$             & $Z$      & ${\color{deep_grey}\iv}$
                      & $\infty$ & $0$ & $\infty$
                      & ${\color{deep_grey}\pid+\inv+\iv}$
                      & $\gamma$ & $\gamma_{\vz}$ & $\eps$ & \\

    $\sD$             & $Z$      & ${\color{deep_blue}\erm+\iv}$
                      & $\gamma_{\min}$ & $0$ & $\infty$
                      & ${\color{deep_blue}\pid+\iv}$
                      & $\gamma$ & $\gamma_{\vz}$ & $\infty$ & \\

    $\color{deep_red}\widetilde{\sD}$ & $Z$
                      & ${\color{deep_red}\term+\iv}$
                      & $\gamma_{\min}$ & $0$ & $\infty$
                      & ${\color{deep_red}\tpid+\iv}$
                      & $\tgamma$ & $\stgamma_{\vz}$ & $\infty$ & $\pm\eps$ \\

    $\color{deep_red}\widetilde{\sD}$
                      & ${\color{deep_green}\tZ} \coloneqq (Z, {\color{deep_green}T})$
                      & ${\color{deep_green}\term+\tiv}$
                      & $\gamma_{\min}$ & $0$ & $\infty$
                      & ${\color{deep_green}\tpid+\tiv}$
                      & $\tgamma$ & $\stgamma_{\vz}$ & $\infty$ & $\pm\eps$ \\
    \bottomrule
  \end{tabular}
\end{table}

Write the finite-$n$ sample version of the abstract PI program from~\cref{sec:partial-identification-and-sensitivity-analysis} as:
\begin{equation}\tag{P1}\label{eq:P1}
\begin{aligned}
\min_{q} / \max_{q} \ & h^q_\star(\vx) && \text{// interval lower / upper bounds} \\
\text{s.t.} \quad & D_{\sF}(q_{XYZ},\ \widehat{p}_{XYZ}) \le \gamma_n, && \text{// observational consistency} \\
& q \in \mathcal{P}_{\pid}, \quad h^q_\star \in \sH, \quad \widehat{E}_{\inv}(h^q_\star) \le \eps^2. && \text{// structural constraints}
\end{aligned}
\end{equation}
Here, $\widehat{p}_{XYZ}$ is the empirical distribution of $n$ observations. The IPM $D_{\sF}$ matches $q_{XYZ}$ to $\widehat{p}_{XYZ}$ within tolerance $\gamma_n\geq 0$ to account for sampling error. Further, $\widehat{E}_{\inv}$ is a Monte Carlo approx.~of~$E_{\inv}$:
\begin{equation*}
\widehat{E}_{\inv}(h) := \frac{1}{m n} \sum_{i = 1}^n \sum_{j = 1}^m \big\lvert h(\vx _i) - h(\vtau_{i,j}\, \vx _i) \big\rvert^2 ,
\qquad
(\vx _i, \cdot)\in\sD,
\quad
\vtau_{i, j}\sim p _T .
\end{equation*}
In practice, the partial-$r^2$ model~(\cref{assumption:partial-r2}) directly imposes~\cref{lemma:characterizing-identified-sets} via $\widehat{R}_{\erm}(h) - \widehat{R}_{\erm}( \widehat{h}_{\erm} ) \le \widehat{\sigma}^2 \gamma$,
\begin{equation*}
\widehat{R}_{\erm}(h) - \widehat{R}_{\erm}( \widehat{h}_{\erm} )
=
\underbrace{
    \frac{1}{n}\sum_{i=1}^{n}
    \bigl(
    h(\vx_i)
    -
    \widehat{h}_{\erm}(\vx_i)
    \bigr)^2
}_{\leq \widehat{\sigma}^{2}\gamma\,\text{ by \cref{lemma:characterizing-identified-sets}}}
,
\qquad
\widehat{\sigma}^{2}
:=
\frac{1}{n - k}\sum_{i=1}^{n}
\bigl(y_i-\widehat{h}_{\erm}(\vx_i)\bigr)^2
.
\end{equation*}
For leaky-IV, we
implement a generalized method of moments (GMM) relaxation of the conditional moment in~\cref{assumption:iv-model}~\citep{causal-data-augmentation,niki-targeted-sequential-iv}. For IV test functions
$\phi_{\vz}:\mathcal{Z}\to\mathbb{R}^{d_{\phi_{\vz}}}$,~define
\begin{equation*}
\widehat{R}_{\iv}(h)
:=
\left\|
\frac{1}{n}\sum_{i=1}^{n}
\bigl(y_i-h(\vx_i)\bigr)
\cdot \phi_{\vz}(\vz_i)
\right\|_{\bm W}^{2},
\qquad
\lVert\boldsymbol{v}\rVert_{\bm W}^{2}
:=
\boldsymbol{v}^{\top}{\bm W}\boldsymbol{v}
.
\end{equation*}
We use $\phi_{\vz}(\vz)=\vz$ and
$\bm W=\widehat{\mSigma}_{\vz}^{\dagger}$~\citep{deep-gmm}, where $\dagger$ denotes the Moore--Penrose pseudoinverse. For~point-estimation $\widehat{h}_{\iv} \coloneqq \argmin\limits_{h\in\sH} \widehat{R}_{\iv}(h)$, whereas for PI, impose~$\widehat{R}_{\iv}(h) \le \widehat{\sigma}^2 \gamma_{\vz}$.

We instantiate the hypothesis class by a parameterized family $\sH:=\{\Phi \circ \bm\theta^\top \phi_{\vx} :\vtheta\in\mathcal{\Theta}\}$ with some basis features $\phi_{\vx}:\sX\to\R^k$, e.g. polynomial features for optical device, log transform in cigarette demand, and penultimate layer of a convolutional neural network in $\rm do$-MNIST~\citep{iv_niki,causal-programming,CopSens}. Program \eqref{eq:P1} under~\cref{assumption:partial-r2,assumption:iv-model} now reduces~to:
\begin{equation}\tag{P2}\label{eq:P2}
    \begin{aligned}
        \min_{\vtheta \in \mathcal{\Theta}} / \max_{\vtheta \in \mathcal{\Theta}} \ & h_{\vtheta}(\vx) \\
        \text{s.t.}
        \quad & 
        \widehat{R}_{\erm}(h_{\vtheta}) - \widehat{R}_{\erm}(\widehat{h}_{\erm}) \le \widehat{\sigma}^2 \widehat{\gamma}, && \text{// bounded confounding (\cref{lemma:characterizing-identified-sets})} \\
        & 
        \widehat{R}_{\iv}(h_{\vtheta}) \le \widehat{\sigma}^2 \widehat{\gamma}_{\vz}, && \text{// leaky IV constraints (\cref{assumption:iv-model})} \\
        & \widehat{E}_{\inv}(h_{\vtheta}) \le \eps^2 ,\qquad && \text{// invariance  constraint (\cref{sec:explicit-constraint})}
        \\
        & 
        \pr*{ 
            \frac{1}{n} \displaystyle\sum_i h_{\vtheta}(\vx_i) - y_i 
        }^{\!\! 2 } \leq \widehat{\sigma}^2 \widehat{\gamma}_{\mu} , && \text{// center / mean-match}
    \end{aligned}
\end{equation}
where we relax population budgets $\gamma_\bullet$ into the finite-sample tolerances $\widehat{\gamma}_\bullet$ elaborated below. 
Finally,~\cref{table:estimator-grid} lists how programs~\eqref{eq:P1} or~\eqref{eq:P2} are invoked under different approaches of symmetry-informed PI.

\paragraph{Finite-sample tolerances $\widehat{\gamma}_\bullet$.}
Evaluated at $h_{\target}$, every empirical constraint in~\eqref{eq:P2} has the form $\|\widehat{\bm{v}}_n\|^2 \le \sigma^2\gamma_\bullet$,
where $\widehat{\bm{v}}_n\in\mathbb R^d$ is a sample statistic (table below). The population counterpart $\bm v \coloneqq \operatorname*{plim}_{n\to\infty} \widehat{\bm{v}}_n$ satisfies $\|\bm v\|^2 \le \sigma^2\gamma_\bullet$. At finite $n$, $\widehat{\bm{v}}_n$ fluctuates about $\bm v$ at
rate $n^{-1/2}$, so $h_\star$ may violate the empirical constraint. We therefore relax each budget $\gamma_\bullet$ just enough that $h_\star$ satisfies the corresponding empirical
constraint with asymptotic probability $\geq 1-\alpha/3$.

\begin{assumption}[asymptotic normality]
    \label{assumption:asymptotic-normal}
    At $h = h_{\target}$, let $\sqrt{n} ( \widehat{\bm{v}}_n  - \bm{v} ) \rightsquigarrow \sN(0, s_0^2 \mI_d)$ and $0 < s^2_0 \leq s^2$.
\end{assumption}

\begin{restatable}[finite-sample budgets]{proposition}{proptolerances}\label{prop:tolerances} Under~\cref{assumption:asymptotic-normal} and the population budget $\|\bm v\|^2 \le \sigma^2\gamma_\bullet$, set
\[
\widehat{\gamma}_\bullet := \frac{s^2}{\sigma^2}\,\Gamma_n\Bigl(d;\, \frac{\sigma^2}{s^2}\gamma_\bullet\Bigr),
\qquad
\Gamma_n(d;\lambda) := \frac{1}{n}\, F^{-1}_{\chi^2_d(n\lambda)}\Bigl(1-\frac{\alpha}{3}\Bigr),
\]
where $\chi^2_d(\kappa)$ is the noncentral $\chi^2$ distribution with $d$ degrees of freedom and noncentrality value $\kappa$. Then $\liminf_{n\to\infty}\Pr\bigl(\|\widehat{\bm{v}}_n\|^2 \le \sigma^2\widehat{\gamma}_\bullet\bigr) \ge 1-\alpha/3$, such that $0 < \widehat{\gamma}_\bullet - \gamma_\bullet = O\bigl(d/n + \sqrt{\gamma_\bullet/n}\bigr)$.
\end{restatable}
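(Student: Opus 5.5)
We will prove \cref{prop:tolerances} by reducing the event to a noncentral $\chi^2$ quantile and then expanding that quantile for large $n$. Put $\bm W_n\coloneqq\sqrt n\,\widehat{\bm v}_n/s$, so that $\|\widehat{\bm v}_n\|^2\le\sigma^2\widehat\gamma_\bullet$ is equivalent to $\|\bm W_n\|^2\le n\sigma^2\widehat\gamma_\bullet/s^2=F^{-1}_{\chi^2_d(n\lambda)}(1-\alpha/3)\eqqcolon q_n$, where $\lambda\coloneqq\sigma^2\gamma_\bullet/s^2$. \Cref{assumption:asymptotic-normal} gives $\bm W_n=\sqrt n\,\bm v/s+(s_0/s)\bm G_n$ with $\bm G_n\rightsquigarrow\sN(0,\mI_d)$. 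The population budget bounds the centering: $\|\sqrt n\,\bm v/s\|^2\le n\lambda$.

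\textbf{Step 1 (monotonicity and comparison).} We use two standard facts. First, for a fixed radius, the probability that $\bm\mu+c\bm G$ lands in a centered ball is nonincreasing in $\|\bm\mu\|$ (Anderson's inequality, or monotonicity of the noncentral $\chi^2$ cdf in the noncentrality). Second, shrinking the noise scale $c=s_0/s\le1$ can only help when $\|\bm\mu\|$ already lies inside the ball. Together these give, for the limiting Gaussian,
\begin{equation*}
\Pr\bigl(\|\bm\mu+c\bm G\|^2\le q_n\bigr)\;\ge\;\Pr\bigl(\chi^2_d(n\lambda)\le q_n\bigr)=1-\alpha/3,
\end{equation*}
where the center $\bm\mu$ is placed on the boundary $\|\bm\mu\|^2=n\lambda$, which is the worst case.

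\textbf{Step 2 (passing the CLT to the limit).} This is the main obstacle. Both the center $\sqrt n\,\bm v/s$ and the threshold $q_n$ diverge, so weak convergence of $\bm G_n$ cannot be applied to a fixed set. We would first handle the case $\bm v=0$. Then the center vanishes and $q_n\ge F^{-1}_{\chi^2_d}(1-\alpha/3)$, so the liminf bound follows directly from the continuous mapping theorem.

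For $\bm v\neq0$, write $\|\bm W_n\|^2=n\|\bm v\|^2/s^2+2\sqrt n\,(s_0/s)\,\bm v^\top\bm G_n/s+O_p(1)$. The term $\bm v^\top\bm G_n$ is asymptotically normal with variance $\|\bm v\|^2$. The noncentral quantile expands as
\begin{equation*}
q_n=n\lambda+d+2\sqrt{n\lambda}\,z_{1-\alpha/3}+o(\sqrt n).
\end{equation*}
If $\|\bm v\|^2<\sigma^2\gamma_\bullet$, the probability tends to $1$, because the threshold exceeds the center by order $n$. In the boundary case the event reduces, after dividing by $\sqrt n$, to $(s_0/s)\,N(0,1)\le z_{1-\alpha/3}$, whose probability is at least $1-\alpha/3$ since $s_0\le s$. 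Combining the two cases with Portmanteau yields the liminf claim.

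\textbf{Step 3 (rate).} The same quantile expansion gives
\begin{equation*}
\widehat\gamma_\bullet-\gamma_\bullet=\frac{s^2}{\sigma^2 n}\Bigl(d+2\sqrt{n\lambda}\,z_{1-\alpha/3}+O(1)\Bigr)=O\bigl(d/n+\sqrt{\gamma_\bullet/n}\bigr).
\end{equation*}
Positivity holds because $F^{-1}_{\chi^2_d(\kappa)}(1-\alpha/3)>\kappa$ for $\alpha<3/2$, since the mean is $\kappa+d$. For a bound uniform in $d$, the quantile term can instead be controlled with the Birgé/Laurent–Massart tail bound for the noncentral $\chi^2$.
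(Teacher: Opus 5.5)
Your proof is correct, but it reaches the liminf by a different route from the paper.

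\textbf{How the paper does it.} There is no case split. The paper transfers the CLT to the moving ball $B_n=\{\bm w:\|\bm w+\sqrt n\,\bm v/s\|^2\le q_n\}$ in one step via Ranga Rao's theorem: weak convergence to an absolutely continuous limit holds uniformly over convex sets. It then applies your two Step~1 comparisons exactly, at each $n$, to the Gaussian limit:
\begin{itemize}
\item $\Pr(t\bm\zeta\in B_n)\ge\Pr(\bm\zeta\in B_n)$, because $B_n$ is convex with $0\in B_n$ and $t=s_0/s\le1$;
\item $\Pr(\bm\zeta\in B_n)\ge\Pr(\chi^2_d(n\lambda)\le q_n)$, by stochastic monotonicity in the noncentrality.
\end{itemize}

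\textbf{How yours differs.} Your Step~2 sidesteps Rao's theorem. It uses a first-order expansion of $\|\bm W_n\|^2$ and the $\kappa\to\infty$ normal approximation to the noncentral quantile, which is legitimate because $\bm v$ is fixed. What it buys is the exact limiting coverage in every regime:
\begin{itemize}
\item $1$ if $\|\bm v\|^2<\sigma^2\gamma_\bullet$ or $\bm v=0<\gamma_\bullet$;
\item $\Phi(z_{1-\alpha/3}\,s/s_0)$ on the boundary with $\gamma_\bullet>0$;
\item $\Pr\bigl(t^2\chi^2_d\le F^{-1}_{\chi^2_d}(1-\alpha/3)\bigr)$ if $\bm v=0=\gamma_\bullet$.
\end{itemize}
This shows the nominal level is attained in the limit only when $\|\bm v\|^2=\sigma^2\gamma_\bullet$ and $s_0=s$.

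The costs are:
\begin{itemize}
\item the case split;
\item reliance on a non-drifting $\bm v$ (the paper's uniform-over-convex-sets step would also cover $\bm v=\bm v_n$);
\item Step~1 is then never actually used. The boundary case only needs $\Phi(z/t)\ge\Phi(z)$, which requires $z_{1-\alpha/3}\ge0$, i.e.\ $\alpha\le 3/2$.
\end{itemize}

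\textbf{The rate.} The paper bounds $q_n-n\lambda\le d+c_\alpha\sqrt{2(d+2n\lambda)}$ by Cantelli with $c_\alpha=\sqrt{3/\alpha-1}$. That bound is non-asymptotic and uniform in $(d,\gamma_\bullet)$. Your quantile expansion gives the rate only for fixed $(d,\gamma_\bullet)$. Also, Step~2 justifies an $o(\sqrt n)$ remainder, not the $O(1)$ you write in Step~3, though $o(\sqrt n)$ suffices. The Birg\'e/Laurent--Massart bound you mention is the natural substitute for the paper's Cantelli step.

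\textbf{One justification must be repaired.} The fact that $\chi^2_d(\kappa)$ has mean $\kappa+d$ does not place its $(1-\alpha/3)$-quantile above $\kappa$. Knowing the mean says nothing about $F_{\chi^2_d(\kappa)}(\kappa)$, and for this right-skewed law the mean sits above the median. Use instead
\[
\chi^2_d(\kappa)\overset{d}{=}\|\bm\zeta+\sqrt\kappa\,\bm e_1\|^2=\kappa+2\sqrt\kappa\,\zeta_1+\|\bm\zeta\|^2, \qquad \bm\zeta\sim\mathcal{N}(0,I_d),
\]
which exceeds $\kappa$ whenever $\zeta_1>0$. Hence $F_{\chi^2_d(\kappa)}(\kappa)\le 1/2<1-\alpha/3$, so the quantile is strictly above $\kappa$. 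This is exactly how the paper obtains both $\widehat\gamma_\bullet>\gamma_\bullet$ and $0\in B_n$.
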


Now, simply plug the respective $(s, d, \gamma_\bullet)$ values to get finite-sample budgets for each~\eqref{eq:P2} constraint:
\begin{center}
\small
\begin{tabular*}{\textwidth}{@{\extracolsep{\fill}}lclccl@{}}
\toprule
constraint & $\widehat{\bm v}_n$ & $d$ & $s_0^2$ & $s^2/\sigma^2$ & $\gamma_\bullet$ \\
\midrule
ERM \citep{seber2003linear}
& $\widehat{\mSigma}^{1/2}_{\phi_{\vx}}\big(\widehat{\vtheta}_{\erm} - \vtheta_{\target}\big)$
& $k$
& $\sigma^2$
& $1 + \gamma$
& $\gamma$ \\

IV \citep{anderson1949estimation}
& $\widehat{\bm{W}}^{1/2} \frac{1}{n} \sum_i \pr*{y_i - h_{\target}(\vx_i)} \vz_i$
& $d_{\vz}$
& $\var(U + \xi)$
& $1 + \gamma$
& $\gamma_{\vz}$ \\

mean-match
& $\frac{1}{n}\sum_i (y_i - h_{\target}(\vx_i))$
& $1$
& $\var(U + \xi)$
& $1 + \gamma$
& $0$ \\
\bottomrule
\end{tabular*}
\end{center}
where $s^2 / \sigma^2$ column follows from~\cref{lemma:characterizing-identified-sets}: $\var(U + \residual) = \sigma^2 + \sqNorm{h_{\erm} - h_{\target}}_X \leq \sigma^2(1 + \gamma)$. Note that the invariance budget $\eps$ does not require finite-sample adjustment since it is assumed to hold a.s. Therefore, the final budgets adjusted for finite-samples that~\eqref{eq:P2} uses in our experiments are
\begin{align*}
    % \boxed{
        \widehat{\gamma} \coloneqq (1+\gamma)\,\Gamma_{\! n}\Big(k; \frac{\gamma}{ 1+\gamma }\Big),
        &&
        \widehat{\gamma}_{\vz}&\coloneqq (1+\gamma)\,\Gamma_{\! n}\Big(d_z;\,  \frac{ \gamma_z }{ 1+\gamma } \Big),
        &&
        \widehat{\gamma}_{\mu} \coloneqq (1+\gamma)\,\Gamma_{\! n}(1;0),
    % }
\end{align*}
Consequently,~\eqref{eq:P2} is asymptotically valid with probability $\geq 1-\alpha$ by a union bound over the three constraints; we use $\alpha = 0.05$.
Post-DA, $\tilde{\gamma}, \tilde{\gamma}_{\vz}, \tilde{\sigma}$ replace
$\gamma, \gamma_{\vz}, \sigma$, and we keep $n = |\mathcal{D}|$ throughout since augmented
copies are dependent~\citep{cameron2015practitioner}.

\paragraph{Solvers and link $\Phi$.}
The simulation, optical device and cigarette demand experiments solve~\eqref{eq:P2}~via~QCQP under an identity link $\Phi = \Id$, while ${\rm do}$-MNIST uses SLSQP with $5$ multi-restarts across seeds to account for a non-linear probit link $\Phi$. The PI interval then reports the $\max$ over restarts for the upper-bound, and the $\min$  over restarts for the lower-bound.

\paragraph{Change of measure.}
Take $m$ samples $\vtau_{i, j}\sim p _T$ for each $i\in[n]$ with $n = \lvert\sD\rvert$. Construct dataset $\widetilde{\sD} \coloneqq \{(\tilde{\vx}_{i, j},\, y _i,\, \tilde{\vz}_{i, j})\}_{i\in[n], j\in[m]}$ with $\tilde{\vx}_{i, j} \coloneqq \vtau_{i,j}\vx_i$ and $\tilde{\vz}_{i, j} \coloneqq (\vz_{i}, \vtau_{i, j})$.

\paragraph{Budget re-calibration.} The post-DA budgets $\tgamma_\bullet$ are set as below per the attenuated confounding~(\cref{lemma:da-soft-intervention}) and symmetry misspecification~(\cref{thm:general-validity-b}), and swept across $[\tgamma_\bullet, \gamma_\bullet]$ in \cref{app:ablations} ablations,
\begin{align*}
    \tgamma = \left.\gamma\right/ \widehat{\rho}
    ,
    &&
    \tgamma_{\vz}(\eps)
    &=
    \pr*{
        \eps / \tsigma + \sqrt{\gamma_{\vz} / \rho}
    }^2
    .
\end{align*}

\clearpage

\section{Experiment Supplement}
\label{app:experiments}

\subsection{Experiment Setup}
\label{app:experiment-setup}
% \paragraph{Simulation.}
\textbf{Simulation.}\;
\label{sec:simulation-experiment}
\begin{figure*}[t]
    \centering
    \includegraphics[width=1.0\linewidth]{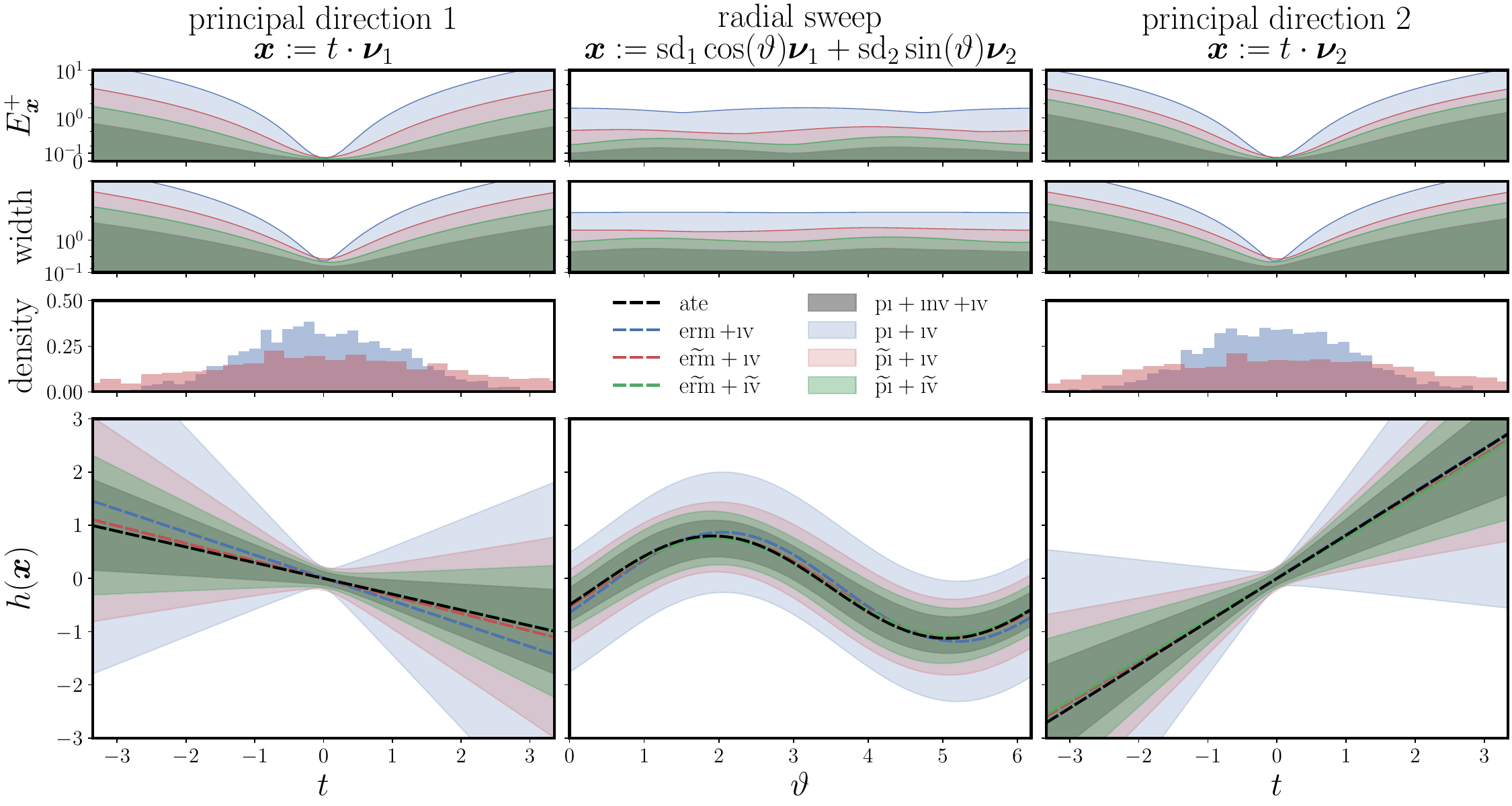}
    \captionof{figure}{
      \textbf{Simulation data.}
        Across queries $\vx$ along principal components $\bm\nu_1, \bm\nu_2$ of $\var(X)$, interval width and worst-case error {\figtop} decrease over baseline while maintaining validity {\figbottom}.
    }
    \label{fig:linear_simulation}
    \vspace{-1\baselineskip}
\end{figure*}
Fix $\vtheta_{\target}\sim\sN( {\0{}}, {\mI{}} )$ and sample $\mSigma_{u \vx}$ uniformly over a sphere of radius $\sqrt{\gamma^\target}$. Generate
\begin{align*}
    h_{\target}(X) \coloneqq \vtheta_{\target}^\top X,
    \qquad
    (U, X, Z) \sim \sN\pr*{{{\0{}}},{
         \begin{bmatrix}
            1 + \gamma^\target & \mSigma_{u \vx} & 0
            \\
            \mSigma_{u \vx}^\top & \mI & \mSigma_{\vx z}
            \\
            0 & \mSigma_{\vx z}^\top & 1
        \end{bmatrix}
    }},
\end{align*}
where $\mSigma_{\vx z}$ is sampled on a unit sphere orthogonal to $\mSigma_{u \vx}$, and $Y$ follows~\cref{eq:canonical-sem} with $\residual = 0$. Hence $Z$ is a valid but an incomplete instrument, so $h_{\target}$ is not point-identified. Following~\citet{causal-data-augmentation}, we keep each basis of $\Null{\vtheta_{\target}}$ with probability $0.8$, stack the $d_{\vtau}$ basis into $\mA\in\R^{d_{\vtau}\times d_{\vx}}$, and set
\begin{align*}
    \smash[t]{\tX} = \alpha(X, T) = X + \smash[t]{\mA^\top} T ,
    &&
    T\sim\mathcal{N}({\0{d_{\vtau}}}, {\mI_{d_{\vtau}}}) .
\end{align*}
$\smash[t]{\tX}$ varies only in $\Null{\vtheta_{\target}}$, so $h_{\target}$ is exactly $\sT$-invariant. Although this construction uses $\vtheta_{\target}$, $\mA$ stands in for domain knowledge of plausible symmetries; alone, it does not identify $\vtheta_{\target}$ without observing~$U$.

We take $d_{\vx}=k=32$ and $n=2048$. PI uses the program~\eqref{eq:P2} with an oracle budget $\gamma = \gamma^\target = 1$, and $\eps = \gamma_{\vz} = 2^{-8}$ as numerical tolerances. \Cref{fig:linear_simulation} shows the resulting intervals, where $\color{deep_blue}\erm+\iv$ is the anchor regression~\citep{anchor}, i.e. least squares within an IV solution set.

% \paragraph{Optical device.}
\textbf{Optical device.}\;
The benchmark of~\citet{optical-device} comprises 12 datasets, each with $n = 1000$ samples of $3\times3$ pixel images $X$ displayed on a laptop screen and the resulting photodiode voltages $Y$. A physical latent $U$ drives two LEDs, one affecting the webcam that captures $X$ and the other the photodiode that measures $Y$. $U$ is recorded but withheld from all methods; we use it only to estimate the target $h_{\target}(\vx)=\vtheta_{\target}^\top \phi_{\vx}(\vx)$ by back-door adjustment on $(\phi_{\vx}(X), U)$, where $\phi_{\vx}$ are polynomial features of the degree ${\rm deg}(h_{\target})\in\{1, \cdots, 5\}$ that best explains the data.\footnote{Lowest $5$-fold cross-validation error of back-door adjustment, e.g.\ ${\rm deg}(h_{\target})=2$ on dataset~$\#8$.} Post-DA features are $\phi_{\vx}(\tX)$, with DA composing random vertical/horizontal flips, $90^{\circ}$ rotations, each with probability $0.25$, and zero-sum translations that preserve the pixel sum $\bm{1}^\top\tilde{\vx} = \bm{1}^\top\vx$ via
\begin{align*}
    \tilde{\vx} = \vx + \vtau_{\rm 0},
    &&
    \vtau_{\rm 0}
    \sim
    \sN\pr*{
        0,
        \;
        0.25
        \,
        \var(x_{\rm pixels})
        \pr*{\mI - \tfrac{1}{9}\bm{1}\cdot\bm{1}^\top}
    }
    .
\end{align*}
The DA parameters $\vtau = [ \tau_{\rm vflip}, \tau_{\rm hflip}, \tau_{\rm rot}, \vtau_0^\top ]^\top$ with $\tau_{\rm vflip}, \tau_{\rm hflip}, \tau_{\rm rot}\in\{0, 1\}$ record the composition. As $Z=\varnothing$, the IV constraint enters only through $T$-as-IV with $\tZ = T$.~\Cref{fig:optical_device} uses dataset $\#8$, whereas~\cref{app:ablations} ablations aggregate all metrics across 10~datasets, each using oracle budgets $\gamma,\eps$ for~\eqref{eq:P2} unless stated otherwise. We exclude dataset $\#6$ (no confounding) and $\#11$ (pure confounding).

\newpage
\textbf{US cigarette demand.}
We use annual data for the 48 contiguous states and DC, 1970--2019 ($n=2450$), from the CDC, BEA and FRED~\cite{orzechowski2019taxburden,bea2025sainc1,bls_cpiaucsl}.
$Y$ is log cigarette packs sold per capita; $X$ is the state's cigarette price, the cheapest neighboring state's price, income, and the consumer price index (CPI), with $\phi_{\vx}=\log$ elementwise, so $h_{\target}(\vx)=\vtheta_{\target}^\top\phi_{\vx}(\vx)$.
State and neighbor price are confounded; income and CPI are assumed exogenous and, with the state cigarette tax, form the instrument set $Z$.
We partial out state fixed effects and a cubic time trend.
Symmetry assumes demand is unchanged when all prices and income scale by a common factor, i.e.\ $h_{\target}(\tau\vx)=h_{\target}(\vx)$ for $\tau>0$, which holds iff $\1^\top\vtheta_{\target}=0$ (imposed with $\eps=2^{-8}$); DA draws $\tX=TX$ with $\log T$ Gaussian.
\Cref{fig:cigarettes} pins $\gamma = 0.31$ per \citet{making-sensemakr}, benchmarking
against last year's sales, an omitted confounder capturing addiction \citep{becker-addiction},
and sweeps $\gamma \in [0.15, 0.45]$ at $\gamma_z = 0.01$. It then sweeps
$\gamma_z \in [10^{-4}, \gamma]$, as a plausible IV should leak less than the confounder
\citep{nevo-rosen}. Ablations (\cref{app:ablations}) keep the real $(X, Z)$ and
simulate $Y$ from a known $h_\star$ (the 2SLS fit projected onto $\bm{1}^\top\vtheta = 0$)
plus a synthetic confounder orthogonal to $Z$ with $\gamma^\star = 0.25$.

\begin{figure}[t]
    \centering
    \begin{subfigure}[b]{0.32\linewidth}
        \centering
        \begin{tikzpicture}[baseline=(Y.center)]
    \pgfmathsetmacro{\s}{1.5}
        \node[latent, thick, label={[font=\footnotesize]left:color}] at (0, \s) (C) {$C$};
        \node[latent, thick] at (1*\s, \s) (U) {$U$};
        \node[obs, thick, label={[font=\footnotesize]left:image}] at (0, 0) (X) {$X$};
        \node[obs, thick, label={[font=\footnotesize]right:label}] at (1*\s, 0) (Y) {$Y$};
    \path (X) edge [->, >={triangle 45}, thick, deep_red] node[above] {$h_{\target}\;$} (Y);
      \edge[thick] {U} {C};
      \edge[thick] {C} {X};
      \edge[thick] {U} {Y};
\end{tikzpicture}
        \caption{${\rm do}$-MNIST}
        \label{fig:do-mnist-sem}
    \end{subfigure}
    \hfill
    \begin{subfigure}[b]{0.32\linewidth}
        \centering
        \begin{tikzpicture}[baseline=(Y.center)]
    \pgfmathsetmacro{\s}{1.5}
        \node[latent, thick, label={[font=\footnotesize]left:camera}] at (0, \s) (C) {$C$};
        \node[latent, thick, label={[font=\footnotesize]right:LED}] at (1*\s, \s) (U) {$U$};
        \node[obs, thick, label={[font=\footnotesize]left:image}] at (0, 0) (X) {$X$};
        \node[obs, thick, label={[font=\footnotesize]right:photodiode}] at (1*\s, 0) (Y) {$Y$};
    \path (X) edge [->, >={triangle 45}, thick, deep_red] node[above] {$h_{\target}\;$} (Y);
      \edge[thick] {U} {C};
      \edge[thick] {C} {X};
      \edge[thick] {U} {Y};
\end{tikzpicture}
        \caption{Optical device}
        \label{fig:optical-sem}
    \end{subfigure}
    \hfill
    \begin{subfigure}[b]{0.32\linewidth}
        \centering
        \begin{tikzpicture}[baseline=(U.center)]
    \pgfmathsetmacro{\s}{1.5}
        \node[obs, thick, label={[font=\footnotesize]left:prices}] at (0, 0.5*\s) (X1) {$X_1$};
        \node[obs, thick, label={[font=\footnotesize]left:tax}] at (0.9*\s, \s) (Z) {$Z$};
        \node[latent, thick] at (0.9*\s, 0) (U) {$U$};
        \node[obs, thick, label={[font=\footnotesize]above:sales}] at (1.8*\s, 0.5*\s) (Y) {$Y$};
        \node[obs, thick, label={[font=\footnotesize]right:income}] at (2.7*\s, \s) (X2) {$X_2$};
        \node[obs, thick, label={[font=\footnotesize]right:CPI}] at (2.7*\s, 0) (X3) {$X_3$};
    \path (X1) edge [->, >={triangle 45}, thick, deep_red] node[fill=white, inner sep=1pt] {$h_{\target}$} (Y);
    \path (X2) edge [->, >={triangle 45}, thick, deep_red] (Y);
    \path (X3) edge [->, >={triangle 45}, thick, deep_red] (Y);
      \edge[thick] {Z} {X1};
      \edge[thick, dashed] {Z} {Y};
      \edge[thick] {U} {X1};
      \edge[thick] {U} {Y};
\end{tikzpicture}
        \caption{US cigarette demand}
        \label{fig:cigarette-sem}
    \end{subfigure}
    \caption{Causal graphs of the three real / semi-synthetic datasets used in our experiments.}
    \label{fig:sems}
    \vspace{-1\baselineskip}
\end{figure}
\textbf{${\rm do}$-MNIST.}\,
For a MNIST image $D$, we generate color $C$, colored image $X$ and label $Y\in\{0,1\}$ as
\begin{align*}
    C \coloneqq U \oplus {\rm Bern}(\mu_C),
    \qquad
    U \sim {\rm Bern}(\mu_U),
    \qquad
    \residual \sim {\rm Bern}(\mu_\xi),
    \\
    X \coloneqq {\rm color}(D, C),
    \qquad
    Y \coloneqq (1 - \residual)\cdot \1[{\rm digit}(X)\geq5] + \residual \cdot U,
\end{align*}
with $\mu_C=0.25,\mu_\residual=0.4,\mu_U = 0.5$, and ${\rm color}(\cdot, c)$ tints the $28\times 28$ digit red ($c=1$) or blue ($c=0$). Then, the ERM $\EE{}{}{Y\given X}\in\{0.1, 0.3, 0.7, 0.9\}$ tracking the color whereas the target ATE is
\begin{align*}
    h_{\target}(\vx)
    \;
    =
    \;
    0.6\cdot\1[{\rm digit}(\vx)\geq5] + 0.2
    \;
    \in
    \;
    \{0.2, 0.8\}
    .
\end{align*}
ERM uses the network architecture of~\citet{arjovsky2020invariant}, trained on $40$k images, and serves as the centroid of the CopSens PI model~\citep{CopSens}. CopSens sets the radius from a latent factor model $q_{U | X}$ (dimension $32$), fit on a further $10$k images. All methods share a single $\gamma=0.062$: the smallest budget at which baseline $\color{deep_blue}\pid$ covers $h_{\target}$ on $99.5\%$ of 10k samples, i.e.\ the baseline's tightest valid budget. DA composes $\pm10\%$ translations, $\pm10^{\circ}$ rotations, and contrast, saturation and hue jitter. The symmetry is exact, so $\eps=0.0475$ serves only as finite-sample slack. $5\%$ of augmented rows are left untransformed so that~\asmOneB{} holds. \Cref{tab:placeholder} reports mean $\pm$ one standard error over 10 seeds. % TODO label

\emph{Infeasibility of the shape constraint.} $\color{deep_grey}\pid+\inv$ is infeasible on $\rm do$-MNIST for any reasonably small $\eps$. This is because CopSens fixes all but the last layer of the ERM network~\citep{CopSens}, a standard device in neural PI for keeping bounds non-vacuous~\citep{causal-programming,iv_niki}. The frozen features are not invariant, so no last-layer weights can satisfy the invariance constraint. Unfreezing earlier layers would restore feasibility, but at the cost of vacuity and compute. DA, in contrast, needs no access to the solver, which is precisely the operational case made for it in~\cref{sec:implicit-constraint}.

\subsection{Ablations}
\label{app:ablations}
The following sweeps are repeated over 10 independent runs, each sampling from a different DGP. We report coverage of $h_\star(\bm x)$, interval width, and worst-case error $E^+_{\bm x}$, averaged over test queries $\bm x\sim p_X$; width and worst-case error are normalized by the baseline $\color{deep_blue}\pid+\iv$. Lines are aggregated over runs (arithmetic mean for coverage, geometric mean for width / worst-case error); bands span the $95\%$ CI across 1,000 bootstrapped replicates over runs. When one quantity is swept, all others are held at their oracle values $\gamma=\gamma^\star$, $\gamma_z=\gamma_z^\star$, $\varepsilon=\varepsilon^\star$. Here $\varepsilon^\star$ is the 95th percentile of $\abs{h_\star(X)-h_\star(\widetilde X)}$ on a fixed calibration sample, a heuristic since the sup is unbounded for linear $h_\star$. Unless stated otherwise, $m=1$, $n$ is the full dataset, and post-DA budgets are re-calibrated to $\tilde\gamma=\gamma/\widehat\rho$~and~$\tilde\gamma_z(\varepsilon)$~per~\cref{app:implementation-details}.

\newpage

\begin{figure}[!htbp]
    \centering
    \includegraphics[width=1.0\linewidth]{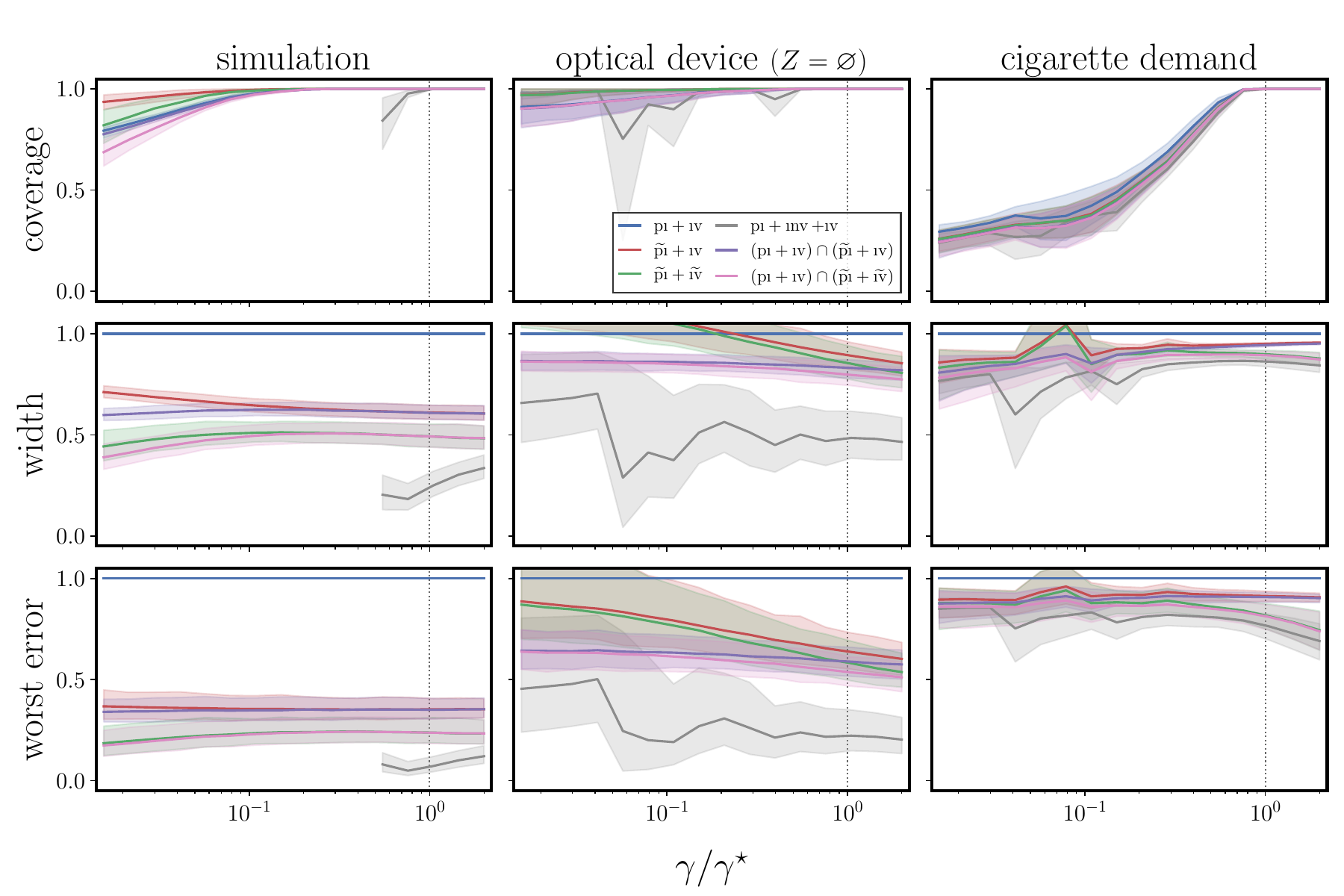}
    \caption{\textbf{Validity~(\cref{theorem:valid-bounds-with-da}).} Coverage, width, and worst-case error as the confounding budget $\gamma$ varies relative to its oracle $\gamma^\star$ (dotted). Post-DA sets attain coverage at smaller budgets than baseline.}
    \label{fig:validity}
    \vspace{-1\baselineskip}
\end{figure}

\begin{figure}[!htbp]
    \centering
    \includegraphics[width=1.0\linewidth]{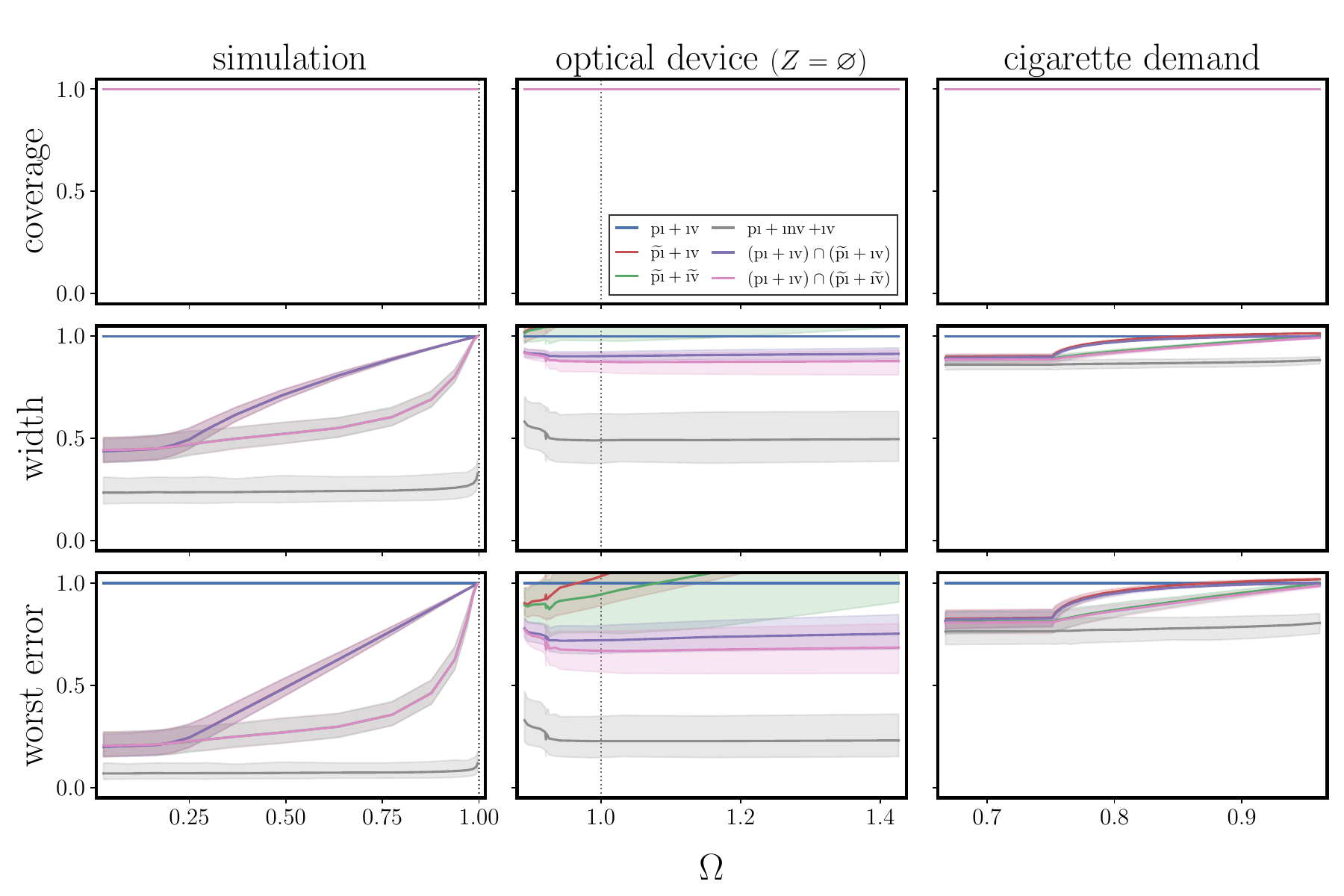}
    \caption{\textbf{Sharpness / informativeness~(\cref{proposition:da-sharper-bounds,theorem:informative-bounds-with-da}).} $\Omega$ is varied by applying augmentations of increasing strength, then computing $\widehat\Omega$ post hoc and sorting along the horizontal axis. For basis features $\phi_{\vx}$, let $\widehat\mSigma_V=\widehat{\operatorname{Cov}}(\phi_{\vx}(V))$ and $\widehat\delta=\widehat{\mathbb E}[\phi_{\vx}(X)]-\widehat{\mathbb E}[\phi_{\vx}(\widetilde X)]$. Then, we compute $\widehat\Omega$ by summing $\operatorname{tr}(\widehat\varShift) / k = \operatorname{tr}(\widehat\mSigma_{\widetilde X}^{-1}\widehat\mSigma_X) / k$ and $\widehat D_B^2 / k = \widehat\delta^\top\widehat\mSigma_{\widetilde X}^{-1}\widehat\delta / k$. Dotted: the threshold $\Omega=1$.}
    \label{fig:sharpness-informativeness}
    \vspace{-1\baselineskip}
\end{figure}

\begin{figure}[!htbp]
    \centering
    \includegraphics[width=1.0\linewidth]{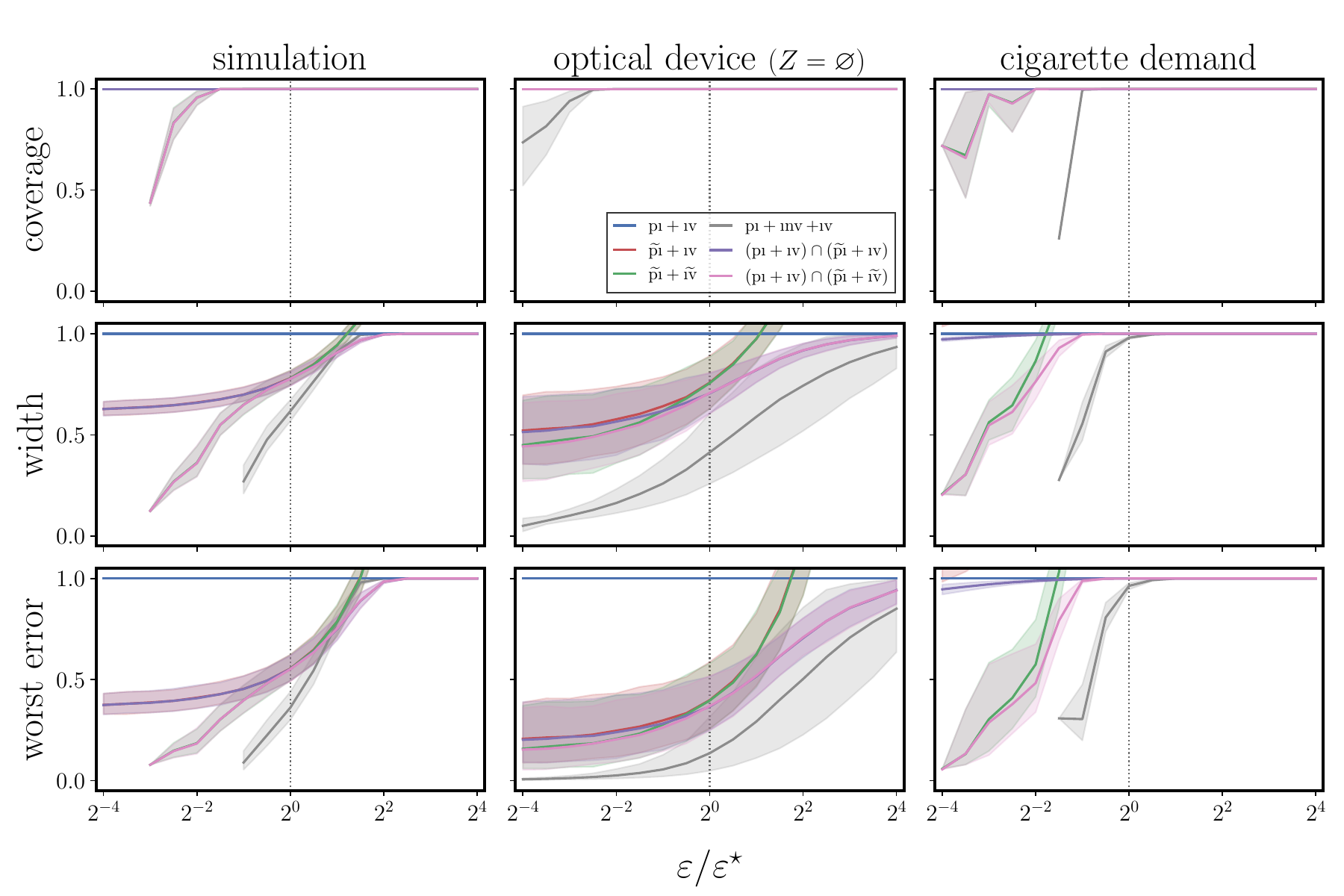}
    \caption{
        \textbf{Robustness (Thm. \protect\hyperlink{thm-general-validity}{3}).} We deliberately set $\varepsilon^\star$ to $\sigma\sqrt{\gamma^\star}$ to make symmetry misspecification as large as confounding itself. To this end, we add a symmetry-breaking corruption of strength $s$ to the DA (Gaussian noise for simulation and optical device, an extra translation off $\bm 1$ for cigarette demand), and bisect $s$ until the 95th percentile of $\abs{h_\star(X)-h_\star(\widetilde X)}$ matches $\sigma\sqrt{\gamma^\star}$. The assumed $\varepsilon$ is swept relative to $\varepsilon^\star$ (dotted); $\varepsilon$-padding restores coverage for $\varepsilon\ge\varepsilon^\star$, and intersecting with the baseline set prevents regression (\cref{remark:robust-bounds-with-da}). Surprisingly, all symmetry-informed bounds maintain improvement on both simulation and optical device up to $\eps = 16 \eps^\target$. For cigarette demand, $\color{deep_grey}\pid+\inv+\iv$ still improves at $\eps = \eps^\target$, but note that this is an abnormally large violation for a demand~system.
    }
    \label{fig:robustness}
    \vspace{-1\baselineskip}
\end{figure}

\begin{figure}[!htbp]
    \centering
    \includegraphics[width=1.0\linewidth]{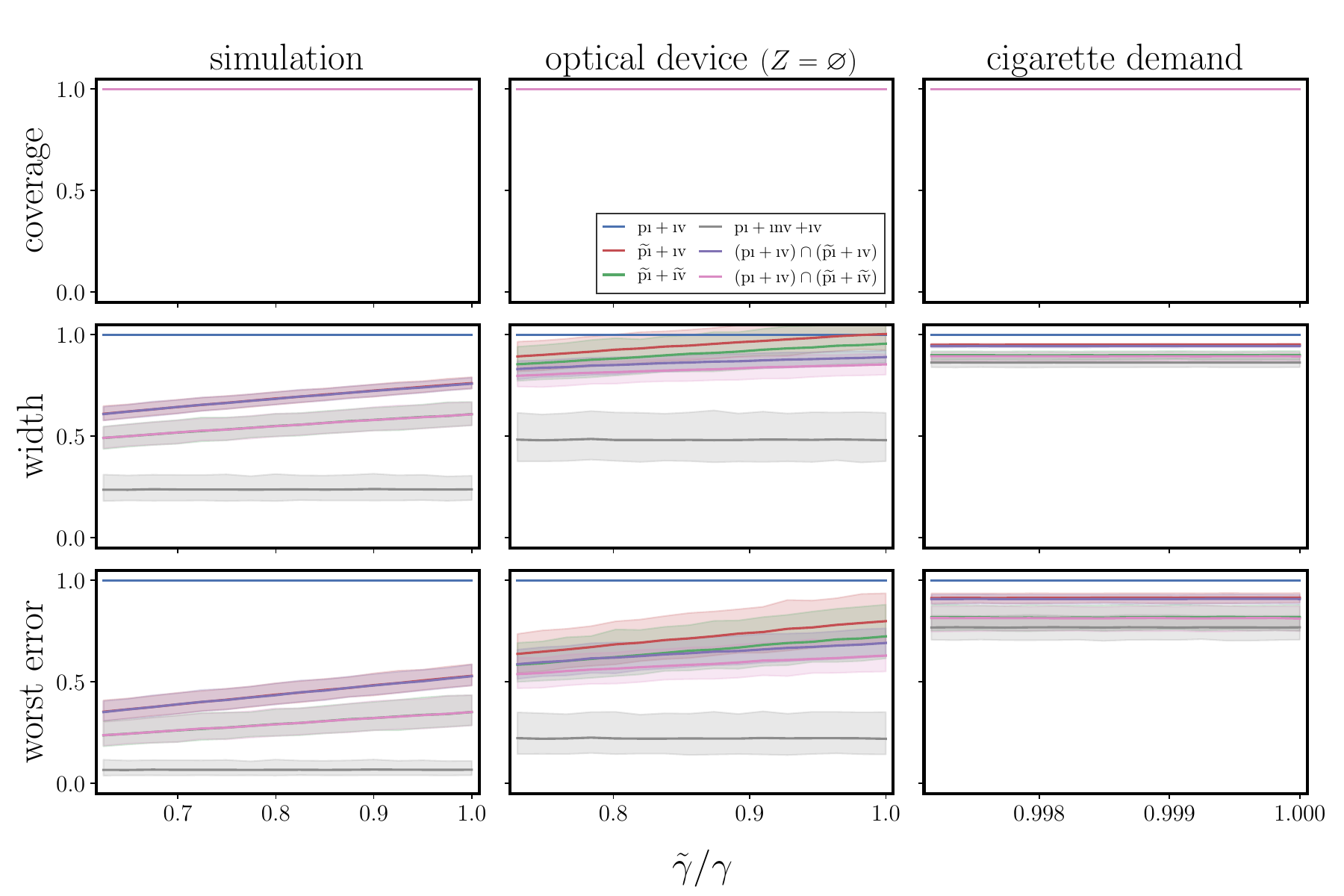}
    \caption{\textbf{Re-calibration.} The post-DA budget is swept over $\tgamma\in[\gamma/\widehat\rho,\,\gamma]$. Gains persist without re-calibration, particularly for intersected bounds. For cigarette demand $\widehat\rho\approx 1$, hence~the~narrow~range.}
    \label{fig:recalibration}
    \vspace{-1\baselineskip}
\end{figure}

\begin{figure}[!htbp]
    \centering
    \includegraphics[width=1.0\linewidth]{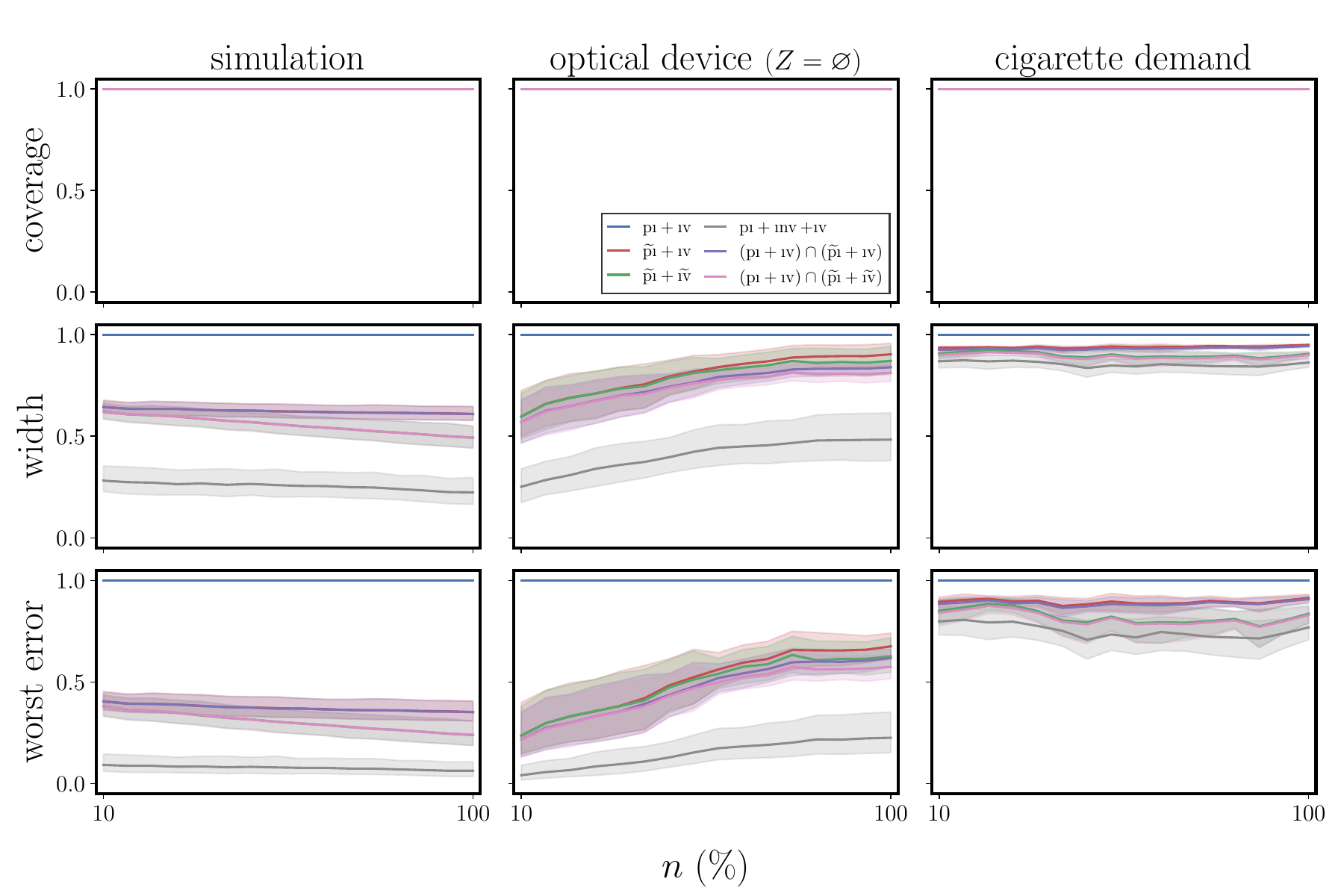}
    \caption{
        \textbf{Statistical efficiency in $n$.} The dataset is subsampled to $n\%$ of its full size; the augmented size matches $n$ since $m=1$. Symmetry-informed PI gives sharper bounds across sample sizes.
    }
    \label{fig:n-efficiency}
    \vspace{-1\baselineskip}
\end{figure}

\begin{figure}[!htbp]
    \centering
    \includegraphics[width=1.0\linewidth]{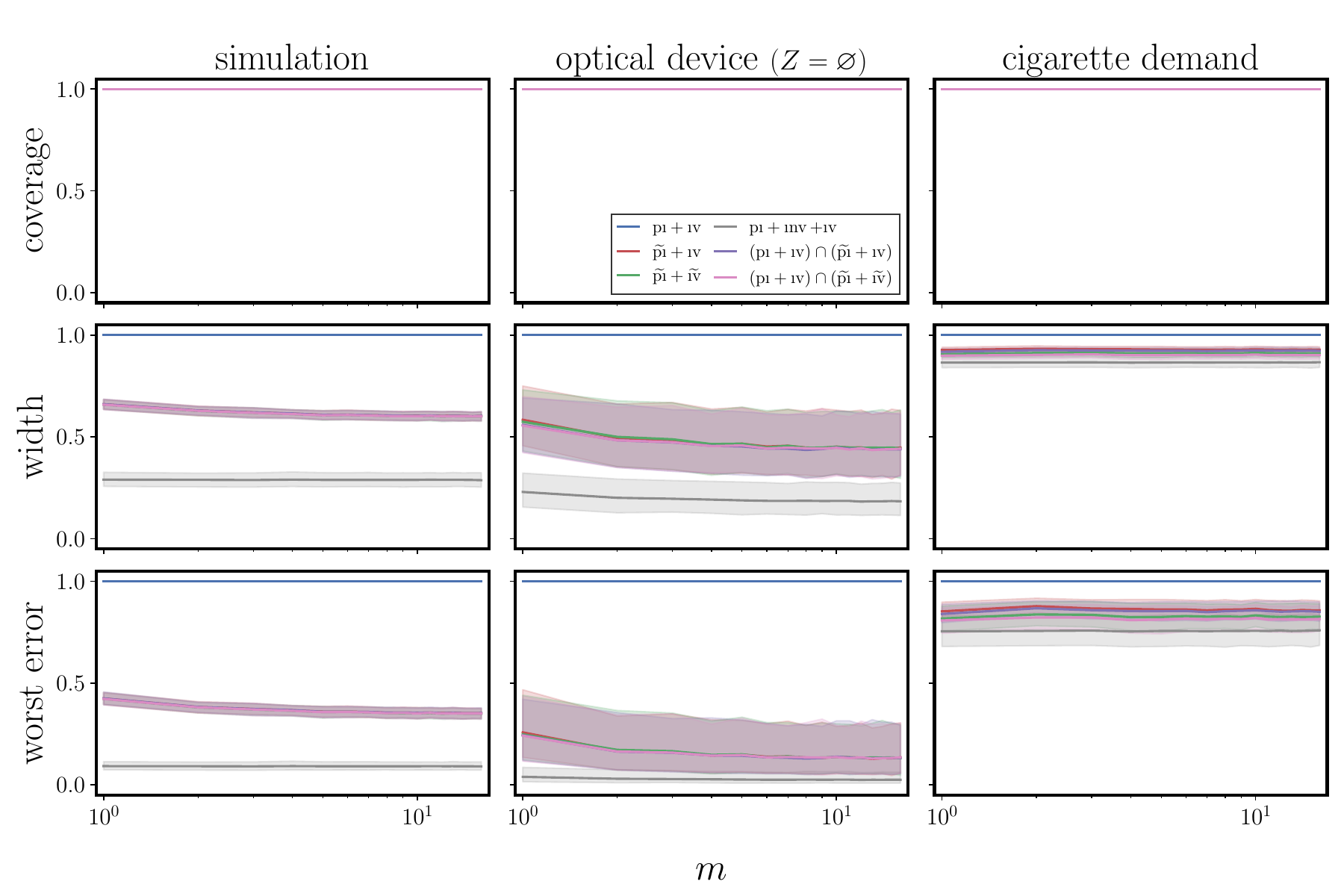}
    \caption{
        \textbf{Statistical efficiency in $m$.} Augmentation folds $m$ at $n=10\%$ of the data. Unlike elsewhere, the augmented dataset grows to $mn$, isolating the finite-sample benefit of DA. This effect is noticeable, albeit insignificant for optical device. Significant, but modest for simulation. And invisible in cigarette demand; we conjecture it scales with the number of directions excited by the symmetry.
    }
    \label{fig:m-efficiency}
\end{figure}

\clearpage
\section{Proofs}\label{app:proofs}

\section*{Technical Preliminaries}

\paragraph{Finite dimension.}
For $V\in\br*{X, \stX}$, define the centering operator $\sC_V\coloneqq h - \E[h(V)]$. Now, since $\sH + \R1 = \sH$~(\cref{assumption:hypothesis-class}), therefore $\sC_V h \in \sH_V$. These maps define a bijection between $\sH_X$ and $\sH_{\stX}$,
\begin{align*}
    \sH_X
    \overset{\sC_{\tilde{X}}}{\underset{\sC_X}{\rightleftarrows}}
    \sH_{\stX}
\end{align*}
therefore, $\dim(\sH_{\stX})=\dim(\sH_X)=k$~({\asmOneC}).

\paragraph{Non-degeneracy.}
$p_X \ll p_{\stX}$~({\asmOneB}) transfers Hilbert structure~({\asmOneC}) to $\sH_{\stX}$. For~$h\in\sH_{\stX}$,
\begin{align*}
    \tag{$0$ is the only a.s. constant $h\in\sH_{\stX}$.}
    \norm{h}_{\stX}
    =0
    \;\;
    &\implies
    \;\;
    h(\stX) = 0 \;\text{a.s.}
    \\
    % \tag{{\asmOneB}.}
    &\overset{\text{\asmOneShortB}}{\implies}
    \;\;
    h(X) = 0 \;\text{a.s.}
    \;\;
    \implies
    \;\;
    \norm{h}_X = 0
    \;\;
    \overset{\text{\asmOneShortC}}{\implies}
    \;\;
    h \in \R1
    \;\;
    \overset{h\in\sH_{\tX}}{\implies}
    \;\;
    h = 0
    .
\end{align*}
Hence both \(\langle\cdot,\cdot\rangle_X\) and \(\langle\cdot,\cdot\rangle_{\stX}\) are positive-definite Hilbert norms on \(\mathcal H_{\stX}\), but so is \(\langle\cdot,\cdot\rangle_{L_2(X)}\) because
\begin{align*}
    \sqNorm{ h }_{L_2(X)}
    =
    \sqNorm{h}_X
    +
    \E[h(X)]^2
    % (\E[h(X)] - \E[h(\tX)] )^2
    \geq 
    \sqNorm{h}_X
    .
\end{align*}

\paragraph{Shift operator $+$ decomposition.}
We give a basis-free form of the var.-/mean-shift identity in~\cref{sec:implicit-constraint}.

Under~{\asmOneShortBC}, Riesz gives a unique, positive-definite, self-adjoint \emph{shift operator} $\shift : \sH_{\stX}\to \sH_{\stX}$ that pulls the evaluation $L_2(X)$-inner product back into the solver's native $\pr*{\sH_{\stX}, \langle \cdot,\cdot\rangle_{\stX}}$-geometry,
\begin{align*}
    \langle f, \shift g\rangle_{\stX} = \langle f, g\rangle_{L_2(X)}
    ,
    \quad
    \forall
    f, g
    \in
    \sH_{\stX}
    .
\end{align*}
$\shift$ captures the fit vs. evaluation gap. And like $L_2(X)$-norm, its pull-back $\shift$ also admits a shape / level split into operator $\varShift\colon\sH_{\stX}\to\sH_{\stX}$ and linear functional $\meanShift\colon\sH_{\stX}\to\R$ defined respectively as
\begin{align*}
    \langle f, \varShift g\rangle_{\stX} = \langle f,g\rangle_X,
    \qquad
    \meanShift h = \EE{}{}{h(X)} - \EE{}{}{h(\stX)},
    \qquad
    \forall
    f, g, h
    \in
    \sH_{\stX}
    .
\end{align*}

\begin{minipage}{\columnwidth}
\begin{restatable}[shift decomposition]{lemma}{shiftDecomposition}
\label{lemma:shift-decomposition}
Under {\asmOneShortBC}, $\shift = \varShift + \meanShiftAdj\meanShift$,~with~$\varShift\mgt 0$,~$\meanShiftAdj\meanShift\mgeq~0$~s.t. $\func{\rm rank}{\meanShiftAdj\meanShift}\!\leq\! 1$, and $\tr{\meanShiftAdj\meanShift}\!=\!D_{\sB}^2(p_X, p_{\stX})$ is $\text{MMD}^2$ over the unit ball~$\sB\!\coloneqq\!\br*{h\in\sH_{\stX} :\norm{h}_\stX\leq 1}$.
\end{restatable}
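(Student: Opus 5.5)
The plan is to expand the $L_2(X)$ inner product of two elements of $\sH_{\stX}$ into its covariance and mean parts, and read off each piece as an operator on $(\sH_{\stX},\langle\cdot,\cdot\rangle_{\stX})$. For $f,g\in\sH_{\stX}$ we have
\begin{align*}
    \langle f,g\rangle_{L_2(X)} = \Cov(f(X),g(X)) + \E[f(X)]\,\E[g(X)] .
\end{align*}
Because $f,g$ are $\tX$-centered, $\E[f(\tX)]=0$, so $\E[f(X)] = \E[f(X)]-\E[f(\tX)] = \meanShift f$, and likewise for $g$. Therefore
\begin{align*}
    \langle f,\shift g\rangle_{\stX} = \langle f,\varShift g\rangle_{\stX} + (\meanShift f)(\meanShift g).
\end{align*}

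Next I identify the second term. By the technical preliminaries, $\sH_{\stX}$ is a $k$-dimensional Hilbert space under $\langle\cdot,\cdot\rangle_{\stX}$. So the linear functional $\meanShift$ has a Riesz representer $d\in\sH_{\stX}$ with $\meanShift f=\langle f,d\rangle_{\stX}$. Its adjoint is $\meanShiftAdj c = c\,d$. Hence $(\meanShift f)(\meanShift g)=\langle f,\meanShiftAdj\meanShift g\rangle_{\stX}$ with $\meanShiftAdj\meanShift = d\otimes d$. This operator is self-adjoint and positive semidefinite, $\langle f,\meanShiftAdj\meanShift f\rangle_{\stX}=(\meanShift f)^2\ge 0$, and has rank at most one.

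Since the identity above holds for all $f,g$ and the Riesz representation is unique, $\shift=\varShift+\meanShiftAdj\meanShift$.

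For $\varShift\mgt 0$, note $\langle f,\varShift f\rangle_{\stX}=\norm{f}_X^2$. The non-degeneracy paragraph showed $\norm{\cdot}_X$ is positive-definite on $\sH_{\stX}$: if $\norm{f}_X=0$, then $f\in\R1$ by \asmOneC, and $f=0$ because $f$ is $\tX$-centered. Hence $\varShift$ is strictly positive.

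Finally, for the trace I take any $\tX$-orthonormal basis $\{e_i\}$ and compute
\begin{align*}
    \tr{\meanShiftAdj\meanShift}=\sum_i\langle e_i,d\rangle_{\stX}^2=\norm{d}_{\stX}^2 .
\end{align*}
This equals $\sup_{h\in\sB}\langle h,d\rangle_{\stX}^2=\sup_{h\in\sB}\abs{\E[h(X)]-\E[h(\tX)]}^2$ by Cauchy--Schwarz, attained at $h=d/\norm{d}_{\stX}$, or trivially when $d=0$. This is exactly the squared MMD $D^2_\sB(p_X,p_{\stX})$ over the unit ball.

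Basis independence of $\sum_i\norm{e_i}^2_{L_2(X)}=\tr{\shift}$ then also follows, which justifies the identity used in the main text.

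The main obstacle is bookkeeping rather than substance. We must be careful that every element of $\sH_{\stX}$ is square-integrable under both measures (guaranteed by \cref{assumption:hypothesis-class}), and that \asmOneB{} is what makes $\norm{\cdot}_{\stX}$ positive-definite, so that Riesz applies in the $\stX$ geometry.
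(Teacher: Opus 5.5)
Your proposal is correct and matches the paper's proof essentially step for step. You split $\langle f,g\rangle_{L_2(X)}$ into covariance plus product of means, using $\tX$-centering to get $\E[f(X)]=\meanShift f$. You then Riesz-represent $\meanShift$ by a vector $d$ so that $\meanShiftAdj\meanShift=d\otimes d$ is rank-one and positive semidefinite, conclude $\shift=\varShift+\meanShiftAdj\meanShift$ by uniqueness, and identify $\tr{\meanShiftAdj\meanShift}=\sqNorm{d}_{\stX}$ with $D^2_\sB$ via Cauchy--Schwarz. Your explicit check that $\varShift\mgt 0$ and your basis-independence remark for $\sum_i\sqNorm{e_i}_{L_2(X)}=\tr{\shift}$ are minor additions that the paper handles in its preliminaries or leaves implicit.
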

\end{minipage}

The \emph{variance-shift operator} $\varShift$ governs how the estimation geometry changes between $\pr*{\sH_X, \langle \cdot,\cdot\rangle_X}$ and $\pr*{\sH_{\stX}, \langle \cdot,\cdot\rangle_{\stX}}$. It differs from $\shift$ precisely by the rank-1 \emph{mean-shift operator} $\meanShiftAdj\meanShift$ with trace $D^2_{\sB}$.

\paragraph{Invariant slice.}
Recall from~\cref{app:explicit-vs-implicit},
\[
\sI\coloneqq
\br*{\; f\in L^0_2(X)\cap L^0_2(\stX) \;\;\colon\;\; f(X)=f(\tX)\ {\rm a.s.} \;}
.
\]
Here \(\sI\) is a closed linear subspace, irrespective of any linearity assumed of
\(\sH\). Now define
\begin{align*}
\sH_{\inv}\coloneqq\sH\cap\sI
.
\end{align*}
By construction, all four forms coincide on $\sI$ and $\sH_{\inv}$,
\begin{align}
\label{eq:iv-vs-da-star}
    % \tag{$\square$}
    \langle f,g\rangle_{L_2(X)}
    \;\;=\;\;
    \langle f,g\rangle_{L_2(\stX)}
    \;\;=\;\;
    \langle f,g\rangle_X
    \;\;=
    \;\;
    \langle f,g\rangle_{\stX}
    ,
    &&
    \forall\,f,g\in\sI
    .
\end{align}
Covariance forms agree with $L_2$ as $\sI$ is both $X$- and $\tX$-centered. And $X$-, $\tX$-forms agree by~$\sT$-inv.

Throughout, we denote \(\sPi\) as the \(L_2(X)\)-orthogonal (equivalently, $X$-orthogonal)
projection onto \(\sI\).

\newpage
\subsection{Proof of \cref{theorem:valid-bounds-with-da}---Valid bounds with DA}
\label{app:theorem-valid-bounds-with-da}
{
\allowdisplaybreaks

\validBoundsWithDA*
\begin{proof}
We proceed as: (\rnum{1}) express baseline approx. error as point-to-ball distance \eqref{eq:point-to-ball}, (\rnum{2}) certify post-DA membership \eqref{eq:ball-def} by comparing bias magnitudes \eqref{eq:da-soft} using~\cref{lemma:da-soft-intervention}.

First, write the~\cref{lemma:characterizing-identified-sets} pre- and post-DA partially identified sets in their respective $L_2$ geometries as:
\begin{align*}
    \sH_{\pid} = \br*{
        h\in\sH_X : \sqNorm{
            h - h_{\erm}
        }_{L_2(X)} \leq \gamma \sigma^2
    }
    ,
    \\
    \sH_{\tpid} = \br*{
        h\in\sH_{\stX} : \sqNorm{
            h - h_{\term}
        }_{L_2(\stX)} \leq \gamma \tsigma^2
    }
    .
\end{align*}
Note that target $h_{\target}$ lies in both centered classes: $\EE{}{}{h_{\target}(X)} = \EE{}{}{Y}=0\implies h_{\target}\in\sH_X$, and by $\sT$-inv., $\E[ h_{\target}(\tX) ]=\EE{}{}{h_{\target}(X)}=0\implies h_{\target}\in\sH_{\stX}$. Define the pre- and post-DA bias magnitudes:
\begin{equation*}
    A\coloneqq\norm{h_{\erm}-h_{\target}}_{L_2(X)},
    \qquad
    B\coloneqq\norm{h_{\term}-h_{\target}}_{L_2(\stX)}.
\end{equation*}
For an $L_2(X)$-round ball, approx. error $E^-(\bullet) = \min\limits_{h\in\bullet}\Big\|h - h_{\target}\Big\|^2_{L_2(X)}$
\!\!\!\!\!\!\!\!\!\!\!\!\!\! reduces to point-to-ball~distance. 
Since $\sH_{\pid}$ is essentially $\sH_X \cap L_2(X)$-ball, approx. error is lower-bounded by point-to-ball distance,
\begin{equation}
    \tag{$\ast$}
    \label{eq:point-to-ball}
    \func{E^-}{\sH_{\pid}}
    \geq \pr[\big]{A-\sigma \sqrt{\gamma}}_+^2 ,
    \qquad
    h_{\target}\in\sH_{\pid}
    \iff A\le\sigma \sqrt{\gamma} .
\end{equation}
As for $\sH_{\tpid}$, the following validity criterion gives $\func{E^-}{\sH_{\tpid}}=0$ exactly,
\begin{equation}
    \tag{$\dagger$}
    \label{eq:ball-def}
    h_{\target}\in\sH_{\tpid}
    \iff B\le\tsigma \sqrt{\gamma}.
\end{equation}
For $C^2\coloneqq \tsigma^2 - \sigma^2$,~\cref{lemma:da-soft-intervention}~(\asmOneA, $\sT$-inv. $h_{\target}$) supplies the following two identities with $C^2\geq 0$,
\begin{align*}
    \tag{$\ddagger$}
    \label{eq:da-soft}
    A^2 = B^2 + C^2
    ,
    &&
    \tsigma^2
    =
    \sigma^2
    +
    C^2
    .
\end{align*}
Now suppose $\sH_{\pid}$ is $\epsilon$-valid with
\begin{equation*}
    \epsilon \;\le\; \epsilon^+
    \;\coloneqq\; \pr[\Big]{\sqrt{C^2 + \gamma\tsigma^2} - \sigma\sqrt{\gamma}}^2
    \;=\; \sigma^2 \pr[\Big]{\sqrt{\rho - 1 + \gamma\rho} - \sqrt{\gamma}}^2 ,
\end{equation*}
the second equality substituting $C^2 + \gamma\tsigma^2 = \sigma^2(\rho - 1 + \gamma\rho)$ from~\cref{eq:da-soft}. Note $\epsilon^+ > 0$ iff $\rho > 1$,~and
$$
\sqrt{\epsilon^+} = \sqrt{C^2 + \gamma\tsigma^2} - \sigma\sqrt{\gamma}
\geq 0
,
$$
since the difference defining $\epsilon^+$ is non-negative $C^2 + \gamma\tsigma^2 - \gamma\sigma^2 = (1+\gamma)C^2 \geq 0$. From~\cref{eq:point-to-ball},
\begin{align*}
    \tag{$\epsilon$-validity.}
    (A - \sigma\sqrt{\gamma})_+
    &\leq \sqrt{\epsilon},
    \\[2pt]
    \tag{$x\leq (x)_+$ and $\sqrt{\epsilon}\leq\sqrt{\epsilon^+}$.}
    \implies\quad
    A - \sigma\sqrt{\gamma}
    &\leq \sqrt{\epsilon^+} ,
    \\[2pt]
    \tag{Def. of $\epsilon^+$, and $\sqrt{\epsilon^+}\geq 0$.}
    \implies\quad
    A - \sigma\sqrt{\gamma}
    &\leq \sqrt{C^2 + \gamma\tsigma^2} - \sigma \sqrt{\gamma},
    \\[2pt]
    \tag{Add $\sigma\sqrt{\gamma}$, then square.}
    \implies\quad
    A^2
    &\leq C^2 + \gamma\tsigma^2 .
    \\[2pt]
    \tag{From \cref{eq:da-soft}.}
    \implies\quad
    B^2 &= A^2 - C^2
    \leq \gamma\tsigma^2 .
\end{align*}
Because $B\le\tsigma \sqrt{\gamma}$, thus by~\cref{eq:ball-def} $h_{\target}\in\sH_{\tpid}$, and therefore
\begin{equation*}
    \func{E^-}{\sH_{\tpid}}
    = 0
    \le \func{E^-}{\sH_{\pid}} .
\end{equation*}
Finally, equality holds precisely when $\func{E^-}{\sH_{\pid}}=0$, which by~\cref{eq:point-to-ball} is equivalent to $h_{\target}\in\sH_{\pid}$.
\end{proof}

% %SEND MEETS LINK

}

\newpage
\subsection{Proof of \cref{theorem:informative-bounds-with-da}---Informative bounds with DA}
\label{app:theorem-informative-bounds-with-da}
{
\allowdisplaybreaks

\informativeBoundsWithDA*
\begin{proof}
Gist of the proof: (\rnum{1}) reduce avg. error to second moment of a uniform law over a centered-ball, (\rnum{2}) connect it to post-DA ball via operator $\shift$, and (\rnum{3}) repeat for intervals via representers and~\cref{lemma:avg-representer-norms}.

\paragraph{Setup.}
Write the~\cref{lemma:characterizing-identified-sets,corollary:characterizing-identified-sets} pre- and post-DA PI sets in their respective $L_2$ geometries:
\begin{align*}
    \sH_{\pid} &= \br*{
        h\in\sH_X : \sqNorm{
            h - h_{\erm}
        }_{L_2(X)} \leq \sigma^2\gamma
    }
    ,
    \qquad
    \sH_{\pid}(\vx) = \br*{h_{\erm}(\vx)}
    \pm
    \sigma\sqrt{\gamma}\norm{g_{\vx}}_{L_2(X)},
    \\
    \sH_{\tpid} &= \br*{
        h\in\sH_{\stX} : \sqNorm{
            h - h_{\term}
        }_{L_2(\stX)} \leq \tsigma^2\gamma
    }
    ,
    \qquad
    \sH_{\tpid}(\vx) = \br*{h_{\term}(\vx)}
    \pm
    \tsigma\sqrt{\gamma}\norm{\tg_{\vx}}_{L_2(\stX)}
    ,
\end{align*}
with $L_2(X)$-representers $g_{\vx},\tg_{\vx}$ well defined on Hilbert $\sH_X$~({\asmOneC}) and $\sH_{\stX}$~({\asmOneB}, cf.~\cref{app:proofs}). Now, suppose $u$ is uniform on a centered $d$-dim. ball of radius $r$ w.r.t. \emph{any} inner product $\langle\cdot,\cdot\rangle$, then
\begin{align*}
    \EE{}{}{u} = 0,
    &&
    \EE{}{}{\langle u, \sA u\rangle} = \frac{r^2}{d+2} \tr{\sA} ,
\end{align*}
for any self-adjoint operator $\sA$ in $\langle\cdot,\cdot\rangle$. We show that avg. errors reduce to these identities~up~to~offset.

\paragraph{Set version ($d=k$).}
Recall $\overline{E}(\cdot) = \EE{h\sim{\rm Unif}(\cdot)}{}{ \sqNorm{ h - h_{\target}}_{L_2(X)} }$. Take the re-parameterizations
\begin{align*}
    h &= h_{\erm} + u, \qquad
    u\sim{\rm Unif}\br*{
        u\in\sH_X : \norm{u}_{L_2(X)}\leq\sigma\sqrt{\gamma}
    },
    \\
    h &= h_{\term} + \tilde{u}, \qquad
    \tilde{u}\sim{\rm Unif}\br*{
        u\in\sH_{\stX} : \norm{u}_{L_2(\stX)}\leq\tsigma\sqrt{\gamma}
    },
\end{align*}
for $h\sim{\rm Unif}(\sH_{\pid})$ and $h\sim{\rm Unif}(\sH_{\tpid})$ respectively. Substituting back into avg. error $\overline{E}(\sH_{\pid})$ def.:
\begin{align*}
    \overline{E}(\sH_{\pid}) &= \EE{u}{}{
        \sqNorm{
            h_{\erm} - h_{\target} + u
        }_{L_2(X)}
    },
    \\
    \tag{$\tr{\Id}=k$.}
    &= \sqNorm{
        h_{\erm} - h_{\target}
    }_{L_2(X)}
    + \EE{u}{}{
        \sqNorm{
            u
        }_{L_2(X)}
    }
    =
    E(h_{\erm}) + \frac{\sigma^2\gamma k}{k+2},
\end{align*}
where $\EE{}{}{u}=0$ kills the cross-term.
Likewise, for post-DA avg. error it follows from the def. of $\shift$:
\begin{align*}
    \overline{E}(\sH_{\tpid}) &= \EE{\tilde{u}}{}{
        \sqNorm{
            h_{\term} - h_{\target} + \tilde{u}
        }_{L_2(X)}
    },
    \\
    \tag{
        $\sqNorm{h}_{L_2(X)} = \langle h, \shift h \rangle_{L_2(\stX)}$ for $h\in\sH_{\stX}$.
    }
    &= \sqNorm{
        h_{\term} - h_{\target}
    }_{L_2(X)}
    + \EE{\tilde{u}}{}{
        \langle
            \tilde{u},
            \shift
            \tilde{u}
        \rangle_{L_2(\stX)}
    },
    \\
    &=
    E(h_{\term}) + \frac{\tsigma^2\gamma}{k+2} \tr{\shift}.
\end{align*}
Again, cross-terms vanish because $\EE{}{}{\tilde{u}}=0$. Subtracting, and with $\rho = \left. \tsigma^2 \right/ \sigma^2$ gives the~result~$f(k)$.

\paragraph{Interval version $(d = 1)$.}
A uniform draw from $[-w,w]$ has mean zero and variance $\left. w^2 \right/ 3$. Thus,
\begin{align*}
\func{\overline{E}_{\vx}}{\sH_{\pid}(\vx)} = \func{E_{\vx}}{h_{\erm}(\vx)} + \frac{1}{3}\sigma^2\gamma\sqNorm{g_{\vx}}_{L_2(X)}
,
\\
\func{\overline{E}_{\vx}}{\sH_{\tpid}(\vx)} = \func{E_{\vx}}{h_{\term}(\vx)} + \frac{1}{3}\tsigma^2\gamma\sqNorm{\tg_{\vx}}_{L_2(\stX)}.
\end{align*}
Averaging over $X\sim p_X$ via~\cref{lemma:avg-representer-norms} gives $\EE{X}{}{\sqNorm{g_X}_{L_2(X)}}=k$ and $\EE{X}{}{\sqNorm{\tg_X}_{L_2(\stX)}}=\tr{\shift}$. Substituting into pointwise approx. errors and subtracting gives the desired result $f(1)$.
\qedhere
\end{proof}
}

\newpage
\subsection{Proof of \cref{thm:general-validity,thm:general-validity-b}---Robust bounds with DA}
\label{app:general-validity}
{
\allowdisplaybreaks

\setcounter{theorem}{2}
\begin{theorem}[robust bounds with DA]
\label{thm:robust-da-iv}
Under {\asmOneShortAB}, suppose valid PI models following \cref{assumption:partial-r2,assumption:iv-model} yield respective intervals $\sH_{\pid}(\vx)$ and $\sH_{\pid+\iv}(\vx)$ covering an $(\eps, \sT)$-invariant $h_{\target}$. Then,
\begin{enumerate}[label=(\Alph*), leftmargin=*]
    \item $\eps$-padded post-DA intervals $\sH_{\tpid}(\vx)\pm\eps$ continue to remain valid for all $\eps\geq 0$ on $p_{X}$-a.e. $\vx$.
    \item $\eps$-padded $T$-as-IV intervals $\sH_{\tpid+\tiv}(\vx)\pm\eps$ are also valid for $\smash[t]{\tZ}\in\{Z, (Z, T)\}$ on $p_{X}$-a.e. $\vx$ if
\begin{align*}
    \eps \leq \tsigma \sqrt{\gamma_{\vz}} - \sigma \sqrt{\gamma_{\vz}^{\target}}
    ,
    &&
    \text{
        where $\gamma_{\vz}^{\target}\leq \gamma_{\vz}$ is the true IV budget.
    }
\end{align*}
\end{enumerate}
\end{theorem}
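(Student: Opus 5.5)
Both parts rest on one reduction. Rewrite the DGP under DA with the residual $\mNoise\coloneqq h_{\target}(X)-h_{\target}(\tX)$, which satisfies $\abs{\mNoise}\le\eps$ almost surely. Then introduce the \emph{$\tX$-surrogate target} $\bar h\coloneqq \sE_{X\mid\stX}h_{\target}$, i.e. $\bar h(\tX)=\E[h_{\target}(X)\mid\tX]$. This lies in $\sH$ by \asmOneA, after re-centering using $\sH+\R1=\sH$.

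\textbf{Step 1 (surrogate is close to the target).} Since $h_{\target}(X)\in[h_{\target}(\tX)-\eps,\,h_{\target}(\tX)+\eps]$ almost surely, conditioning on $\tX$ gives
\begin{align*}
\abs{\bar h(\tX)-h_{\target}(\tX)}\le\eps \quad\text{a.s.}
\end{align*}
By \asmOneB, $p_X\ll p_{\stX}$, so the same bound $\abs{\bar h(\vx)-h_{\target}(\vx)}\le\eps$ holds for $p_X$-a.e.\ $\vx$. This transfer is why both \asmOneShortA\ and \asmOneShortB\ are needed, and why the conclusion is only $p_X$-a.e. The padding step is then mechanical: if $\bar h\in\sH_{\tpid}$ (resp.\ $\sH_{\tpid+\tiv}$), then $h_{\target}(\vx)\in\sH_{\tpid}(\vx)\pm\eps$ almost everywhere.

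\textbf{Step 2, part (A): show $\bar h\in\sH_{\tpid}$.} Post-DA the structural equation reads $Y=\bar h(\tX)+U'+\residual$ with $U'\coloneqq U+h_{\target}(X)-\bar h(\tX)$. This is centered, so $\bar h$ plays the role of the causal function with confounder $U'$, and the decomposition is orthogonal: $\bar h(\tX)$ is $\tX$-measurable, so
\begin{align*}
\E[Y\mid\tX]-\bar h(\tX)=\E[U\mid\tX].
\end{align*}
Mirroring the Pythagorean argument behind \cref{lemma:da-soft-intervention}(C), I would show
\begin{align*}
\sqNorm{h_{\term}-\bar h}_{\stX}=\Var\bigl(\E[U\mid\tX]\bigr)\le\Var\bigl(\E[U\mid X]\bigr)=\sqNorm{b}_X\le\sigma^2\gamma\le\tsigma^2\gamma.
\end{align*}
The inequalities come from: the DPI on $U\leftrightarrow X\to\tX$ (valid because $T\indep(U,X)$); validity of the baseline set via \cref{lemma:characterizing-identified-sets}, which gives $\sqNorm{b}_X\le\sigma^2\gamma$; and $\rho\ge1$. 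The middle equality $\sqNorm{b}_X=\Var(\E[U\mid X])$ is where care is needed when $h_{\target}$ is only approximately invariant. I would rely on realizability to write $h_{\erm}=\E[Y\mid X]$ and $h_{\term}=\E[Y\mid\tX]$, so no invariance enters there. Since \cref{lemma:characterizing-identified-sets} applied to $p_{\stX Y Z}$ describes $\sH_{\tpid}$ as the $\tX$-ball of radius $\tsigma\sqrt\gamma$ around $h_{\term}$, this gives $\bar h\in\sH_{\tpid}$. Combined with Step 1, part (A) holds for every $\eps\ge0$.

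\textbf{Step 3, part (B).} Here the partial-$r^2$ ball is handled exactly as in Step 2. The additional requirement is the leaky-IV constraint at $\bar h$:
\begin{align*}
\bigl\|\E[Y-\bar h(\tX)\mid\tZ]\bigr\|_{\tZ}\le\tsigma\sqrt{\gamma_{\vz}}.
\end{align*}
Decompose
\begin{align*}
Y-\bar h(\tX)=\bigl(Y-h_{\target}(X)\bigr)+\bigl(h_{\target}(X)-\bar h(\tX)\bigr).
\end{align*}
For the first term, $T\indep(X,Y,Z)$ gives $\E[\,\cdot\mid Z,T]=\E[\,\cdot\mid Z]$, whose norm is at most $\sigma\sqrt{\gamma_{\vz}^{\target}}$ by the true IV budget. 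The second term is bounded almost surely by $\abs{\mNoise}+\abs{h_{\target}(\tX)-\bar h(\tX)}$. Conditional expectation is an $L_2$ contraction, and a triangle inequality would then give the stated condition $\eps\le\tsigma\sqrt{\gamma_{\vz}}-\sigma\sqrt{\gamma_{\vz}^{\target}}$.

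\textbf{Main obstacle.} The crude bound on the second term is $2\eps$, not $\eps$, so the constant does not match the statement. To obtain $\eps$, I would use that $\E[h_{\target}(X)-\bar h(\tX)\mid\tX]=0$, so its projection onto $\sigma(\tZ)\subseteq\sigma(\tX,Z)$ is controlled. The fallback is to bound $\norm{h_{\target}(X)-\bar h(\tX)}_{L_2}$ directly by $\norm{\mNoise}_{L_2}\le\eps$, using the fact that $\bar h(\tX)$ is the $L_2$-best $\tX$-measurable approximation of $h_{\target}(X)$ while $h_{\target}(\tX)$ is a competitor at distance $\norm{\mNoise}_{L_2}$. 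I expect this best-approximation argument to give exactly $\eps$ and to be the crux of the proof.
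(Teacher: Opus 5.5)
Your proposal is correct and follows the paper's proof essentially step for step. It uses the same surrogate $\E[h_{\target}(X)\mid\tX=\cdot\,]$ with the conditional-Jensen $\eps$-bound transferred to $p_X$-a.e.\ $\vx$ via \asmOneB, and the same chain $\Var(\E[U\mid\tX])\le\Var(\E[U\mid X])\le\sigma^2\gamma\le\tsigma^2\gamma$ for (A). For (B), your best-approximation fallback is exactly the paper's bound $\norm{\zeta}_{L_2}^2=\E[\Var(\mNoise\mid\tX)]\le\eps^2$ for $\zeta\coloneqq h_{\target}(X)-\E[h_{\target}(X)\mid\tX]$, followed by contraction under $\E[\,\cdot\mid\tZ]$ and Minkowski.

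Drop the first idea in your obstacle paragraph, though. For $\tZ=(Z,T)$ the inclusion $\sigma(\tZ)\subseteq\sigma(\tX,Z)$ fails in general, since $T$ is not recoverable from $(\tX,Z)$. Moreover, $\E[\zeta\mid\tX]=0$ says nothing about projections onto $\sigma$-algebras not contained in $\sigma(\tX)$. The constant $\eps$ therefore comes solely from $\E[\Var(\mNoise\mid\tX)]\le\E[\mNoise^2]$.
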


\begin{proof}
Sketch: (\rnum{1}) build a surrogate target $h_{\dagger}$ within $\eps$ of $h_{\target}$, (\rnum{2}) certify $h_{\dagger}\in\sH_{\tpid}$ by DPI (Part A), and (\rnum{3}) verify the IV moment via Minkowski (Part B); $\eps$-pad to convert $h_{\dagger}$ membership into validity

\paragraph{Surrogate.}
Define the surrogate as the smoothed target $h_{\dagger}\coloneqq \EE{}{}{h_{\target}(X)\given \stX = \cdot}\in\sH$~(\asmOneA).
Define $\mNoise\coloneqq h_{\target}(X)-h_{\target}(\tX)$, with $\abs{\mNoise}\leq\eps$ a.s.\ by $(\eps, \sT)$-invariance. Now, $h_{\dagger}$ obeys two identities:
\begin{align*}
    \tag{Take $\E[\,\cdot\, | \tX]$ in~\cref{eq:canonical-sem}.}
    h_{\dagger}(\tX) 
    &=h_{\term}(\tX)-\EE{}{}{U\given\stX}\vphantom{\tX}
    ,
    \\
    \tag{Take $\E[\,\cdot\, | \tX]$ in def. of $\eta$.}
    &=h_{\target}(\tX) + \EE{}{}{\mNoise\given\stX}\vphantom{\tX}
    ,
    \\
    \implies
    \tag{$\P$}
    \label{eq:eps-approx-inv}
    \big\| h_{\dagger} &- h_{\target} \big\|_{L_\infty(X)}
    \overset{\text{\asmOneB}}{\leq}
    \big\| h_{\dagger}-h_{\target} \big\|_{L_\infty(\stX)}
    \overset{\text{Jensen}}{\leq}
    \eps
    .
\end{align*}
Furthermore, by tower property $\E[h_{\dagger}(\tX)]=\EE{}{}{h_{\target}(X)}=0$, so surrogate is $\tX$-centered $h_{\dagger}\in\sH_{\stX}$.

\paragraph{Part (A): $h_{\dagger}\in\sH_{\tpid}$.}
The surrogate's offset from the post-DA center is $h_{\term} - h_{\dagger} = \EE{}{}{U\given\stX}$, so
\begin{align*}
    \tag{$\tX$-centered $h_{\dagger},\ h_{\term}$, $\Rightarrow \sqNorm{\cdot}_{L_2} = \Var(\cdot)$.}
    \sqNorm{h_{\term}-h_{\dagger}}_{L_2(\stX)}
    &=
    \var\big(\EE{}{}{U\given\stX}\big)
    ,
    \\%[-1ex]
    \tag{Contraction over Markov $U\!-\!X\!-\!\stX$.}
    &\leq \var\big(\EE{}{}{U\given X}\big)
    ,
    \\
    \tag{Baseline valid $h_{\target}\in\sH_{\pid}$~(\asmOneA,~\ref{assumption:partial-r2}).}
    &\leq \gamma\sigma^2
    ,
    \\
    \tag{DPI $\rho\geq 1$.}
    &\leq \gamma\tsigma^2
    .
\end{align*}
Post-DA,~\cref{lemma:characterizing-identified-sets} gives $\sH_{\tpid}$ as the $\sH_{\stX}\cap L_2(\tX)$-ball of sq.-radius $\tsigma^2\gamma$ about $h_{\term}$. So $h_{\dagger}\in\sH$ ({\asmOneA}) gives $h_{\dagger}\in\sH_{\tpid} \Rightarrow h_{\dagger}(\vx)\in\sH_{\tpid}(\vx)$. Hence $h_{\target}(\vx)\in\sH_{\tpid}(\vx)\pm\eps$ for $p_{X}$-a.e. $\vx$ by~\cref{eq:eps-approx-inv}.
% \vspace{-1.5em}
\paragraph{Part (B): $h_{\dagger}\in\sH_{\tpid+\tiv}$.}
To verify membership, we need to test the surrogate's residual $Y - h_{\dagger}(\tX)$ against the IV moment constraint in~\cref{assumption:iv-model}.
Note that this residual is mean-zero and reads as
\begin{align*}
    Y - h_{\dagger}(\stX)
    =
    \zeta + U + \residual,
    &&
    \zeta\coloneqq
    \eta - \EE{}{}{\eta\given\stX},
    &&
    \sqNorm{\zeta}_{L_2}
    =
    \EE{}{}{\var(\eta\given\stX)}
    \leq\eps^2
    .
\end{align*}
For any choice of IV $\stZ\in\br*{Z, (Z, T)}$, the conditional $\EE{}{}{U + \residual \given Z} = \EE{}{}{U + \residual \given \stZ}$ is conserved since $T\indep (\residual, U, Z)$. Writing the true budget $\left. \gamma^{\target}_{\vz}\coloneqq\sqNorm{\EE{}{}{U+\residual\given Z}}_{L_2} \right/ \sigma^2 \leq \gamma_{\vz}$~(\cref{assumption:iv-model}), we have
\begin{align*}
    \tag{Minkowski.}
    \norm*{\E[ Y-h_{\dagger}(\tX)\mid\tZ]}_{L_2}
    &\leq \norm*{\E[\zeta\mid\tZ]}_{L_2} + \norm*{\E[ U+\residual\mid\tZ]}_{L_2},
    \\
    \tag{Contraction; $T\indep(\residual,U,Z)$.}
    &\leq \eps + \sigma\sqrt{\gamma_{\vz}^{\target}},
    \\
    \tag{$\eps\leq\tsigma\sqrt{\gamma_{\vz}}-\sigma\sqrt{\gamma_{\vz}^{\target}}$.}
    &\leq \tsigma\sqrt{\gamma_{\vz}} .
\end{align*}
This is exactly the leaky-IV constraint of~\cref{assumption:iv-model} under the augmented law, so together with Part (A), $h_{\dagger}\in\sH_{\tpid+\tiv}$, and $h_{\target}(\vx)\in\sH_{\tpid+\tiv}(\vx)\pm\eps$ for $p_{X}$-a.e. $\vx$ uniformly over $\tZ\in\br*{Z, (Z, T)}$.
\end{proof}
\vspace{1\baselineskip}
\begin{remark}[conservativeness of~\cref{thm:general-validity-b}.]
\label{remark:conservative-iv}
    Minkowski assumes the worst-case where the two residual components are perfectly aligned. Instead, one~can~define
    $$
    \kappa := \left. \Big\langle \mathbb E[\zeta|\widetilde Z],\, \mathbb E[U{+}\xi|\widetilde Z]\Big\rangle_{\!\smash{L_2}} \right/ \pr*{\norm[\big]{\mathbb E[\zeta|\widetilde Z]}\cdot\norm[\big]{\mathbb E[U{+}\xi|\widetilde Z]} } \in [-1,1] ,
    $$
    as an alignment sensitivity parameter. The sufficient condition for post-DA leaky-IV validity is now
    \[
\varepsilon^2
+
2\kappa
\,
\varepsilon
\,
\sigma
\sqrt{\gamma_z^\star}
+
\sigma^2
\gamma_z^\star
\;\le\;
\tilde\sigma^2\gamma_z.
\]
    A conservative $\kappa = 1$ then recovers~\cref{thm:general-validity-b}, while a plausible $\kappa = 0$ (misspecification uncorrelated to confounding) gives the Pythagorean budget $\varepsilon \le \sqrt{\tilde\sigma^2\gamma_z - \sigma^2\gamma_z^\star}$, strictly dominating the smaller $\tilde\sigma\sqrt{\gamma_z} - \sigma\sqrt{\gamma_z^\star}$. The observable diagnostic $\varepsilon \le (\tilde\sigma - \sigma)\sqrt{\gamma_z}$ of~\cref{app:misspecified-da-under-iv-constraints} remains assumption-free.
\end{remark}

}

\newpage
\subsection{Proof of \cref{proposition:da-causal-estimation}---Estimation with DA}
\label{app:proposition-da-causal-estimation}
{
\allowdisplaybreaks

\causalEstimation*
\begin{proof}
Under {\asmOneA}, $h_{\erm}=\EE{}{}{Y|X}$ and $h_{\term}=\EE{}{}{Y|\stX}$. Taking conditional means in \cref{eq:canonical-sem},
\begin{align*}
    b = \EE{}{}{Y|X}-h_{\target} = \EE{}{}{U|X},
    \qquad\qquad
    \tb = \EE{}{}{Y|\stX}-h_{\target} = \EE{}{}{U|\stX},
\end{align*}
where the latter follows from $\sT$-inv. of $h_{\target}$. Note that $b,\tb\in\sH$ since $h_{\erm},h_{\term},h_{\target}\in\sH$~({\asmOneA}). Further,
$\EE{}{}{U}=0$ implies $b\in\sH_X$ and $\tb\in\sH_{\stX}$.
Now, re-write the error terms in respective~biases:
\begin{align*}
    E(h_{\erm}) = \sqNorm{b}_{L_2(X)},
    &&
    E(h_{\term}) = \sqNorm{\stb}_{L_2(X)}.
\end{align*}
On $\tX$-centered functions $h\in L_2^0(\tX)$, define the shift ratio $s(h)\coloneqq \left.\sqNorm{h}_{L_2(X)}\right/\! \sqNorm{h}_{\stX}$ ($s(0)\coloneqq 0$). Substitute into $E(h_{\term})$:
\begin{align*}
    E(h_{\term})
    &=
    \sqNorm{\stb}_{L_2(X)},
    \\
    \tag{Multiply and divide by $\sqNorm{\stb}_{\stX}$.}
    &=
    s(\stb) \cdot \sqNorm{\stb}_{\stX},
    \\
    \tag{$\tb\in\sH_{\stX}\implies \sqNorm{\stb}_{L_2(\stX)} = \sqNorm{\stb}_{\stX}$.}
    &=
    s(\stb) \cdot \sqNorm{\stb}_{L_2(\stX)} ,
    \\
    \tag{\cref{lemma:da-soft-intervention}.}
    &=
    s(\stb) \cdot \pr*{
        \sqNorm{b}_{L_2(X)} - (\tsigma^2 - \sigma^2)
    },
    \\
    \tag{Substitute $E(h_{\erm}) = \sqNorm{b}_{L_2(X)}$ and $\rho=\tsigma^2 / \sigma^2$.}
    &=
    s(\stb) \cdot \pr*{
        E(h_{\erm}) - \sigma^2 \cdot (\rho - 1)
    },
\end{align*}
Finally, subtract this from $E(h_{\erm})$ and re-arrange to get a general form of the proposition statement:
\begin{align*}
    E(h_{\erm})
    -
    E(h_{\term})
    =
    \sigma^2 s(\stb) \cdot (\rho - 1)
    +
    E(h_{\erm})
    \cdot
    (
        1 - s(\stb)
    )
    .
\end{align*}

\paragraph{Operator version under {\asmOneShortBC}.}
Recall from~\cref{app:proofs} that {\asmOneShortBC} gives $\sqNorm{h}_{L_2(X)} = \langle h, \shift h\rangle_{\stX}$ where operator $\shift$ has a Rayleigh quotient $\left.\shift[h] \coloneqq \langle h, \shift h\rangle_{\stX}^{\phantom{2}}\right/ \sqNorm{h}_{\stX}
\;(\shift[0]\coloneqq 0)$ and decomposition $\shift = \varShift + \meanShiftAdj\meanShift$~by~\cref{lemma:shift-decomposition}.

Trivially, substituting $s(\stb) = \shift[\stb]$ into the claimed identity now gives an operator version where $\shift[\stb] = \varShift[\stb] + (\meanShiftAdj\meanShift)[\stb]$ (\cref{lemma:shift-decomposition} and Rayleigh quotient is linear in the operator).
\end{proof}

}

\newpage
\subsection{Proof of \cref{proposition:da-sharper-bounds}---Sharper bounds with DA}
\label{app:proposition-da-sharper-bounds}
{
\allowdisplaybreaks

\daSharperBounds*
\begin{proof}
Sketching our steps: (\rnum{1}) \textbf{volume:} write the post-DA ball in an $L_2(X)$-orthonormal basis of $\sH_{\stX}$ and compare via AM-GM; (\rnum{2})~\textbf{width:}~write widths as representer norms and average via~\cref{lemma:avg-representer-norms}.
\paragraph{Setup.}
Write~\cref{lemma:characterizing-identified-sets,corollary:characterizing-identified-sets} pre- and post-DA identified sets in their respective $L_2$ geometries,
\begin{align*}
    \sH_{\pid} &= \br*{
        h\in\sH_X : \sqNorm{
            h - h_{\erm}
        }_{L_2(X)} \leq \sigma^2\gamma
    }
    ,
    \qquad
    \sH_{\pid}(\vx) = \br*{h_{\erm}(\vx)}
    \pm
    \sigma\sqrt{\gamma}\norm{g_{\vx}}_{L_2(X)},
    \\
    \sH_{\tpid} &= \br*{
        h\in\sH_{\stX} : \sqNorm{
            h - h_{\term}
        }_{L_2(\stX)} \leq \tsigma^2\gamma
    }
    ,
    \qquad
    \sH_{\tpid}(\vx) = \br*{h_{\term}(\vx)}
    \pm
    \tsigma\sqrt{\gamma}\norm{\tg_{\vx}}_{L_2(\stX)}
    .
\end{align*}
We measure both sets in ambient $L_2(X)$-volume and -width by bridging with $L_2(\stX)$ using operator~$\shift$.

\paragraph{Volume.}
Define the $L_2(X)$-volume of a set inside a $k$-dim. subspace as the Lebesgue volume of its coordinates in \emph{any} $L_2(X)$-orthonormal basis of that subspace; in particular $V_k\coloneqq \left.\pi^{k/2}\right/ \Gamma(k/2+1)$, the Euclidean unit-ball volume, is the $L_2(X)$-volume of a $k$-dim. unit ball in any $L_2(X)$-orthonormal basis. The baseline set is an $L_2(X)$-round ball of radius $\sigma\sqrt{\gamma}$ within $\sH_X$, so directly
\begin{align*}
    \abs{
        \sH_{\pid}
    } = V_k (\sigma\sqrt{\gamma})^k .
\end{align*}
For the post-DA set, take an $L_2(\stX)$-orthonormal eigenbasis $\br*{\psi_i}_{i=1}^k$ of $\shift$ on $\sH_{\stX}$ with eigenvalues $\lambda_i > 0$~({\asmOneShortBC}), and rescale $e_i\coloneqq \psi_i / \sqrt{\lambda_i}$. First, notice $\br*{e_i}\subset\sH_{\stX}$ is $L_2(X)$-orthonormal:
\begin{align*}
    \tag{Def.~of $\shift$.}
    \langle e_i, e_j\rangle_{L_2(X)}
    &=
    \langle e_i, \shift e_j\rangle_{L_2(\stX)},
    \\
    \tag{Def. of $e_i$, and $\shift\psi_j=\lambda_j\psi_j$.}
    &=
    \frac{\lambda_j}{\sqrt{\lambda_i\lambda_j}}
    \langle \psi_i, \psi_j\rangle_{L_2(\stX)}
    = \Id_{ij}
    .
\end{align*}
Therefore we can measure $L_2(X)$-volume in the coordinates $h - h_{\term} = \sum_i v_i e_i$. Second, we have:
\begin{align*}
    \tag{$L_2(\stX)$-orthonormality of $\psi_i$.}
    \langle e_i, e_j\rangle_{L_2(\stX)}
    &=
    \frac{1}{\sqrt{\lambda_i\lambda_j}}
    \langle \psi_i, \psi_j\rangle_{L_2(\stX)}
    = \frac{1}{\lambda_i}\,\Id_{ij}
    .
\end{align*}
Using this identity and linearity of inner-product, substitute $h - h_{\term} = \sum_i v_i e_i$ into the def. of $\sH_{\tpid}$:
\begin{align*}
    \sqNorm{
        h - h_{\term}
    }_{L_2(\stX)}
    \;=\;
    \left\langle
        \sum_i v_i e_i
        ,\,
        \sum_j v_j e_j
    \right\rangle_{\!\!\! L_2(\stX)}
    \!\!\!\!\!\!\!\!=
    \;
    \sum_i \sum_j v_i v_j
    \langle
        e_i
        ,
        e_j
    \rangle_{L_2(\stX)}
    \;=
    \boxed{
    \sum_{i=1}^k \frac{v_i^2}{\lambda_i}
    \leq \tsigma^2\gamma
    .
    }
\end{align*}
Out pops an ellipsoid with semi-axes $\tsigma \sqrt{\gamma}\cdot\sqrt{\lambda_i}$. Now, taking its volume and subsequently the ratio:
\begin{equation*}
\begin{gathered}
    \abs*{\sH_{\tpid}}
    \;
    =
    \;
    V_k(\tsigma\sqrt{\gamma})^k\prod_i\sqrt{\lambda_i}
    \;
    =
    \;
    V_k(\tsigma\sqrt{\gamma})^k(\det\shift)^{1/2},
    \\
    \implies
    \;
    \frac{\abs*{\sH_{\tpid}}}{\abs*{\sH_{\pid}}}
    \;
    =
    \;
    \rho^{\frac{k}{2}}\,(\det\shift)^{\frac{1}{2}}
    \;
    =
    \;
    \rho^{\frac{k}{2}}\pr[\Big]{\prod_i\lambda_i}^{\frac{1}{2}}
    \overset{\text{AM--GM}}{\leq}
    \rho^{\frac{k}{2}}\pr[\Big]{\frac{1}{k}\sum_i\lambda_i}^{\frac{k}{2}}
    \;
    =
    \;
    \pr[\Big]{\rho\cdot\frac{\tr{\shift}}{k}}^{\frac{k}{2}}
    ,
\end{gathered}
\end{equation*}
with equality iff eigenvalues $\lambda_i$ coincide, i.e. iff $\shift = \lambda \cdot \Id$ for some $\lambda > 0$.
\paragraph{Width.}
\cref{corollary:characterizing-identified-sets} gives half-widths at a query $\vx$ as $\sigma\sqrt{\gamma}\norm{g_{\vx}}_{L_2(X)}$ and $\tsigma\sqrt{\gamma}\norm{\tg_{\vx}}_{L_2(\stX)}$. Averaging their squares over $X\sim p_X$ using~\cref{lemma:avg-representer-norms},
\begin{align*}
    \EE{X}{}{\abs*{\func{\sH_{\pid}}{X}}^2}
    &= 4\sigma^2\gamma\,\EE{X}{}{\sqNorm{g_X}_{L_2(X)}} = 4\sigma^2\gamma\,k ,
    \\
    \EE{X}{}{\abs*{\func{\sH_{\tpid}}{X}}^2}
    &= 4\tsigma^2\gamma\,\EE{X}{}{\sqNorm{\tg_X}_{L_2(\stX)}} = 4\tsigma^2\gamma\,\tr{\shift} .
\end{align*}
Splitting $\tr{\shift}=\tr{\varShift}+\tr{\meanShiftAdj\meanShift}$ (\cref{lemma:shift-decomposition}), their ratio gives the claimed identity.
\end{proof}

}

\newpage
\subsection{Proof of \cref{proposition:inv-constraint-vs-da}---Shape constraint vs. change-of-measure}
\label{app:inv-constraint-vs-da}
{
\allowdisplaybreaks

\invConstraintVsDA*
\begin{proof}
Fix the common center and geometry, characterize the three sets as balls, then compare radii:

\paragraph{Shared center.} Under {\asmOneShortAB} and~\ref{assumption:compact-group}, \cref{lemma:haar-group-da} gives:\hfill $h_{\term} = \sPi h_{\erm} \in \sH_{\inv}$.\hfill\vphantom{.}

\paragraph{The post-DA set.} $\sH_{\tpid} = \br*{h\in\sH_{\stX} \colon \sqNorm{h - h_{\term}}_{\stX} \leq \tsigma^2\gamma}$ immediate from~\cref{lemma:characterizing-identified-sets} under $p_{\stX Y}$.
\paragraph{The $T$-as-IV set.} For $\tZ=T$, $\gamma_{\vz}=0$, the IV constraint of~\cref{corollary:sharp-iv-constraints} under $p_{\stX Y \stZ}$ filters $h\in\sH_{\stX}$~via
\begin{align*}
          \!\!\!\!\!\!\func{\Var}{\EE{}{}{Y - h(\stX) \given T }} &= 0 
             \;\iff{} \func{\Var}{\E[Y] - \EE{}{}{h(\stX) \given T }} = 0 
             \tag{$T\!\!\indep\! Y$ by construction.} \\
    \!\!\!\!\!\!\iff{}\quad\;\;\,\func{\Var}{\EE{}{}{h(\stX) \given T }}     &= 0 
             \;\iff{} \sE_{\stX | T} h                                     = 0 
             \qquad\qquad\qquad\qquad\qquad\quad
             \tag{Def. of $\sE_{\stX | T}$.} \\
    &\qquad\;\!\iff{}h\in\Null{\sE_{\stX | T} \given_{\sH_{\stX}}}  = \sH_{\inv} ,
             \tag{\cref{assumption:da-relevance-completeness}.}
    \\
    &\qquad\;\implies{}\sH_{\tpid+\tiv} = \sH_{\tpid} \cap \sH_{\inv}
    \tag{\cref{corollary:sharp-iv-constraints} under $p_{\stX Y \stZ}$.}
    .
\end{align*}

\paragraph{The inv. constraint set.} By \cref{corollary:sharp-inv-constraints}, $\sH_{\pid+\inv}=\sH_{\pid}\cap\sH_{\inv}$, with $\sH_{\pid}$ the $X$-ball of radius $\sigma\sqrt{\gamma}$ about $h_{\erm}$ (\cref{lemma:characterizing-identified-sets}). For invariant $h\in\sH_{\inv}$, using Pythagoras along $\sPi$ followed by $h=\sPi h$,
\begin{align*}
    \sqNorm{
        h - h_{\erm}
    }_X
    &=
    \sqNorm{
        \sPi (h -  h_{\erm})
    }_X
    +
    \sqNorm{
        (\Id - \sPi)(h - h_{\erm})
    }_X,
    \\
    \label{eq:iv-vs-da-dagger}
    \tag{$\triangle$}
    &=
    \sqNorm{
         h -  \sPi h_{\erm}
    }_X
    +
    \sqNorm{
        (\Id - \sPi)h_{\erm}
    }_X
    .
\end{align*}
Writing pre- and post-DA biases $b\coloneqq h_{\erm} - h_{\target}$ and $\tb\coloneqq h_{\term} - h_{\target}$, the second term reduces to:
\begin{align*}
    \tag{Def. of bias $b$, $\sT$-inv. $h_{\target}$.}
    \sqNorm{
        (\Id - \sPi)h_{\erm}
    }_X
    &=
    \sqNorm{
        (\Id - \sPi)(h_{\target} + b)
    }_X
    =
    \sqNorm{
        (\Id - \sPi)b
    }_X
    ,
    \\
    \tag{\cref{lemma:haar-group-da} $\tb=\sPi b$, Pythagoras.}
    &=
    \sqNorm{
        b - \stb
    }_X
    =
    \sqNorm{b}_X - \sqNorm{\stb}_X
    ,
    \\
    \tag{$\tb\in\sI\Rightarrow$~\cref{eq:iv-vs-da-star},~\cref{lemma:da-soft-intervention}.}
    &=
    \sqNorm{b}_X - \sqNorm{\stb}_{\stX}
    =
    \tsigma^2 - \sigma^2
    .
\end{align*}
Substituting into~\eqref{eq:iv-vs-da-dagger}, together with $\sPi h_{\erm} = h_{\term}$ and baseline ball constraint $\sqNorm{h-h_{\erm}}_{X}\le\sigma^2\gamma$,
\begin{align*}
    \implies
    \sqNorm{h - h_{\term}}_{X} \;\le\; \sigma^2\gamma - (\tsigma^2-\sigma^2) \;=\; \sigma^2(1+\gamma-\rho) .
\end{align*}

\paragraph{Inclusion and dominance hierarchy.} Both $T$-as-IV and inv.-constrained sets $\sH_{\tpid+\tiv},\,\sH_{\pid+\inv}$ are balls in $\sH_{\inv}$ about the common center $h_{\term}$. Since $h - h_{\term}\in\sI$ for invariant $h$, their norms are measured in the common geometry of~\cref{eq:iv-vs-da-star}, with squared radii obeying
\begin{align*}
    \tsigma^2\gamma - \sigma^2(1+\gamma-\rho) = \sigma^2(1+\gamma)(\rho-1) \;\ge\; 0 ,
\end{align*}
where $\rho\ge 1$ by the DPI (\cref{lemma:da-soft-intervention}). Hence $\sH_{\pid+\inv}\subseteq\sH_{\tpid+\tiv}$, while $\sH_{\tpid+\tiv}\subseteq\sH_{\tpid}$ is immediate.

Finally, set inclusion directly implies a dominance hierarchy in sharpness and worst-case errors.
\end{proof}

}

\newpage
\subsection{Proof of \cref{prop:tolerances}---Finite-sample budgets}
\label{app:tolerances}
{
\allowdisplaybreaks
\proptolerances*

\begin{proof}
Sketch: (i) rewrite the constraint as a Gaussian ball probability (Part A), (ii) sandwich the noncentral $\chi^2$ quantile between its median and Cantelli's bound (Part B).

\textbf{Setup.} Standardize by the sampling scale and write the empirical constraint as a ball,
\begin{gather*}
\lambda := \frac{\sigma^2 \gamma_\bullet}{s^2}, \qquad
t := \frac{s_0}{s} \in (0, 1], \qquad
\bm{u}_n := \frac{\sqrt{n}\, \bm{v}}{s}, \qquad
W_n := \frac{\sqrt{n}\,(\widehat{\bm{v}}_n - \bm{v})}{s} \rightsquigarrow t\zeta, \\
q_n := n\, \Gamma_n(d; \lambda), \qquad
B_n := \big\{ \bm{w} \in \mathbb{R}^d : \|\bm{w} + \bm{u}_n\|^2 \le q_n \big\}, \qquad
\zeta \sim \mathcal{N}(0, I_d),
\end{gather*}
with the limit given by Asm.~\ref{assumption:asymptotic-normal}. The ball $B_n$ is convex and contains the origin, since
\begin{align*}
\|\bm{u}_n\|^2 &\le n\lambda, && \text{(Population budget.)} \\
&< q_n. && \text{($\|\zeta + \sqrt{n\lambda}\, \bm{e}_1\|^2 > n\lambda$ whenever $\zeta_1 > 0$, i.e.\ w.p.\ $\ge \tfrac{1}{2} > \tfrac{\alpha}{3}$.)}
\end{align*}

\textbf{Part (A): coverage.}
\begin{align*}
\Pr\big(\|\widehat{\bm{v}}_n\|^2 \le \sigma^2 \widehat{\gamma}_\bullet\big)
&= \Pr(W_n \in B_n), && \text{(Scale by $n/s^2$; def.\ of $\widehat{\gamma}_\bullet$.)} \\
&= \Pr(t\zeta \in B_n) + o(1), && \text{(Uniform over convex sets; $t\zeta$ abs.\ cont.\ \citep{rao1962relations}.)} \\
&\ge \Pr(\zeta \in B_n) + o(1), && \text{($B_n$ convex, $0 \in B_n$, $t \le 1 \Rightarrow B_n \subseteq B_n / t$.)} \\
&\ge \Pr\big(\chi^2_d(n\lambda) \le q_n\big) + o(1), && \text{($\chi^2_d(\cdot)$ stoch.\ increasing; $\|u_n\|^2 \le n\lambda$.)} \\
&= 1 - \tfrac{\alpha}{3} + o(1). && \text{(Def.\ of $q_n$.)}
\end{align*}
Taking the $\liminf$ gives the coverage claim.

\textbf{Part (B): vanishing slack.} The law $\chi^2_d(n\lambda)$ has mean $d + n\lambda$ and variance $2(d + 2n\lambda)$, so Cantelli caps its quantile $q_n$ from above, while the Setup bounds it from below,
\begin{align*}
0 < \widehat{\gamma}_\bullet - \gamma_\bullet
&= \tfrac{s^2}{\sigma^2 n}\, (q_n - n\lambda), && \text{(Def.\ of $\widehat{\gamma}_\bullet$, $\lambda$; Setup.)} \\
&\le \tfrac{s^2}{\sigma^2 n} \big(d + c_\alpha \sqrt{2(d + 2n\lambda)}\big), && \text{(One-sided Cantelli, $c_\alpha := \sqrt{3/\alpha - 1}$.)} \\
&= O\big(\tfrac{d}{n} + \sqrt{\gamma_\bullet / n}\big). && \text{($\sqrt{a + b} \le \sqrt{a} + \sqrt{b}$; $\lambda \propto \gamma_\bullet$.)}
\end{align*}
\end{proof}

}

\newpage
\begin{figure}[th]
\centering
\[
\begin{tikzcd}[column sep=huge,row sep=large]
\sQ_{\pid}
  \arrow[r, dashed, "\Psi(\Phi(q);p_0)"]
  \arrow[d, maps to, "\Phi"']
&
\sQ_{\pid}\cap\sP_{\Psi}
  \arrow[d, maps to, "\Phi"]
\\
\sH_{\pid}
  \arrow[r, dashed, "\Psi(h;p_0)"']
&
\sH_{\pid}\cap\sH_\Psi .
\end{tikzcd}
\]
\caption{
A picture of~\cref{proposition:sharp-fiberwise-constraints}: Filtering baseline models $q\in\sQ_{\pid}$ by the predicate $\Psi\pr*{\Phi(q);p_0}$ and then mapping through $\Phi$ yields the same hypothesis set as mapping $\sH_{\pid}=\Phi(\sQ_{\pid})$ first and then filtering by the corresponding
function-level predicate $\Psi(h;p_0)$. Dashed arrows denote set restriction.
}
\label{fig:sharp-fiberwwise-constraints}
\end{figure}
\subsection{Proof of \cref{proposition:sharp-fiberwise-constraints}---Sharp bounds with structural-function constraints}
\label{app:sharp-fiberwise-constraints}
{
\allowdisplaybreaks

\begin{proposition}[sharp bounds with structural-function constraints]
\label{proposition:sharp-fiberwise-constraints}
Fix some distribution $p_0$. Define
\begin{align*}
    \Phi
    \colon
    \sP_{\!\pid}
    \to
    \sH
    ,
    \;
    q
    \mapsto
    h^q_{\target}
    .
\end{align*}
For a set of baseline PI candidate distributions $\sQ_{\pid}\subseteq\sP_{\!\pid}$, their candidate hypotheses are given by
\begin{align*}
    \sH_{\pid}
    \coloneqq
    \func{\Phi}{\sQ_{\pid}}
    .
\end{align*}
Let $\Psi(h;p_0)$ be a predicate depending only on functions $h\in\sH$ and the fixed law $p_0$. Define the sets
\begin{align*}
    \sH_\Psi\coloneqq \br*{
        h\in\sH
        \colon
        \Psi(h;p_0)
    }
    ,
    &&
    \sP_\Psi
    \coloneqq
    \br*{
        q\in\sP_{\!\pid}
        \colon
        \func{\Psi}{
            \func{\Phi}{q}
            ;
            p_0
        }
    }
    .
\end{align*}
Then $\sP_\Psi=\func{\Phi^{-1}}{\sH_\Psi}$ is the pre-image of $\sH_{\Psi}$ under $\Phi$, and the constrained identified set is \emph{sharp}:
\begin{align*}
    \func{\Phi}{
        \sQ_{\pid}
        \cap
        \sP_\Psi
    }
    =
    \sH_{\pid}\cap\sH_\Psi
    .
\end{align*}
The same identity holds for any finite collection of such predicates.~\cref{fig:sharp-fiberwwise-constraints} states the prop.~in~a~picture.
\end{proposition}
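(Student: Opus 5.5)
The argument is pure set theory about images and pre-images under $\Phi$; no structure of $\sP_{\!\pid}$, $\sH$ or $p_0$ is used beyond $\Psi(\cdot\,;p_0)$ being a predicate on functions alone. We proceed in three steps: identify $\sP_\Psi$ as a pre-image, prove the two inclusions of the sharpness identity, then extend to finitely many predicates.

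\emph{Pre-image.} By definition, $q\in\sP_\Psi$ iff $q\in\sP_{\!\pid}$ and $\Psi(\Phi(q);p_0)$ holds. The latter says $\Phi(q)\in\sH_\Psi$, since $\Phi(q)\in\sH$. Hence
\[
\sP_\Psi=\{q\in\sP_{\!\pid}:\Phi(q)\in\sH_\Psi\}=\Phi^{-1}(\sH_\Psi).
\]

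\emph{Sharpness identity.} For any $A\subseteq\sP_{\!\pid}$ and $B\subseteq\sH$ we have $\Phi(A\cap\Phi^{-1}(B))=\Phi(A)\cap B$, which we apply with $A=\sQ_{\pid}$ and $B=\sH_\Psi$.
\begin{itemize}
\item ($\subseteq$) If $h=\Phi(q)$ with $q\in\sQ_{\pid}\cap\sP_\Psi$, then $h\in\Phi(\sQ_{\pid})=\sH_{\pid}$. Also $h\in\sH_\Psi$ by the pre-image step.
\item ($\supseteq$) If $h\in\sH_{\pid}\cap\sH_\Psi$, then $h=\Phi(q)$ for some $q\in\sQ_{\pid}$, by definition of $\sH_{\pid}$. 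Since $\Psi(\Phi(q);p_0)=\Psi(h;p_0)$ holds, $q\in\sP_\Psi$, so $h\in\Phi(\sQ_{\pid}\cap\sP_\Psi)$.
\end{itemize}

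The only point needing care is in the $\supseteq$ direction. The predicate must depend on $q$ only through $\Phi(q)$; this is what makes the chosen witness $q$ automatically satisfy the constraint. If $\Psi$ also depended on other features of $q$ (e.g.\ on $q_U$), the identity could fail, because image and intersection need not commute. Here the condition holds by hypothesis, since $p_0$ is fixed and does not vary with $q$. So the main obstacle is only to state this dependence explicitly.

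\emph{Finite collections.} For predicates $\Psi_1,\dots,\Psi_m$, take $\Psi\coloneqq\bigwedge_j\Psi_j$. This is again a predicate depending only on $(h;p_0)$, with $\sH_\Psi=\bigcap_j\sH_{\Psi_j}$ and $\sP_\Psi=\bigcap_j\sP_{\Psi_j}$, since pre-images commute with intersections. The single-predicate result then applies directly.
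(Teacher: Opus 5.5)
Your proof is correct and matches the paper's: both reduce the claim to the set identity $\Phi(Q\cap\Phi^{-1}(H))=\Phi(Q)\cap H$ and verify the two inclusions exactly as you do. You additionally spell out the pre-image identification $\sP_\Psi=\Phi^{-1}(\sH_\Psi)$ and the finite-collection extension via the conjunction $\bigwedge_j\Psi_j$, both of which the paper states but leaves implicit (your conjunction argument in fact works for arbitrary collections).
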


\begin{proof}
Re-write the claimed identity by substituting $\sH_{\pid} = \Phi(\cdot)$, $\sP_\Psi = \Phi^{-1}(\cdot)$,  $Q = \sQ_{\pid}$, $H = \sH_\Psi$,
\begin{align*}
    \func{\Phi}{
        Q\cap\func{\Phi^{-1}}{H}
    }
    =
    \Phi(Q)\cap H.
\end{align*}
\paragraph{$\Longrightarrow$ direction.}
Trivially, for any $h\in\func{\Phi}{
    Q\cap\func{\Phi^{-1}}{H}
}$, we have $h=\Phi(q)$ for some candidate $q\in Q\cap\func{\Phi^{-1}}{H}$. This directly implies $h\in\Phi(Q)$ and $h\in H$. Therefore, it holds that LHS $\subseteq$~RHS.

\paragraph{$\Longleftarrow$ direction.}
Let $h\in \Phi(Q)\cap H$. Then there exists $q\in Q$ such that $\Phi(q) = h$, which is in $H$. Since $\Phi(q)\in H$, then $q\in\Phi^{-1}(H)$. Thus, $q\in Q \cap \Phi^{-1}(H)$, giving $h = \Phi(q) \in \func{\Phi}{
    Q\cap\func{\Phi^{-1}}{H}
}$.

\end{proof}

\vspace{\baselineskip}

\begin{corollary}[sharp bounds with inv. constraints]
\label{corollary:sharp-inv-constraints}
Define hypotheses via the inv.-error functional~$E_{\inv}$
\begin{align*}
    \sH_{\inv}^{\eps} \coloneqq \br*{ h\in\sH : E_{\inv}(h)\le\eps^2 }
    .
\end{align*}
Then, the invariance shape constraint PI model from~\cref{sec:explicit-constraint} is \emph{sharp}, meaning $\sH_{\pid+\inv} = \sH_{\pid}\cap\sH_{\inv}^{\eps}$.
\end{corollary}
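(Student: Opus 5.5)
The plan is to apply \cref{proposition:sharp-fiberwise-constraints} directly, with the predicate $\Psi(h;p_0)\coloneqq \1[E_{\inv}(h)\le\eps^2]$ and the fixed law $p_0\coloneqq p_{XYZ}$ together with $p_T$. The key observation is that $E_{\inv}(h)=\mathbb{E}_p[\abs{h(X)-h(\tX)}^2]$ is a functional of $h$ and the \emph{observed} joint law of $(X,\tX)$ only. That law is fixed by $p_X$ and the exogenous $p_T$ via $\tX=TX$. It does not depend on the candidate $q$ except through $h^q_{\target}$. Hence $\Psi$ is a legitimate function-level predicate in the sense of \cref{proposition:sharp-fiberwise-constraints}, and $\sH_\Psi=\sH^{\eps}_{\inv}$.

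Next I would verify that the constrained candidate set of \cref{sec:explicit-constraint} is exactly $\sQ_{\pid}\cap\sP_\Psi$. By definition, $\sP_{\!\pid+\inv}=\br*{q\in\sP_{\!\pid}:E_{\inv}(\Phi(q))\le\eps^2}=\sP_\Psi$. The observational-consistency requirement $\IPM_{\sF}(p_V,q_V)=0$ does not involve the invariance constraint. Therefore
\[
\sQ_{\pid+\inv}=\br*{q\in\sP_{\!\pid+\inv}:\IPM_{\sF}(p_V,q_V)=0}=\sQ_{\pid}\cap\sP_\Psi .
\]
Applying $\Phi$ and invoking \cref{proposition:sharp-fiberwise-constraints} then gives
\[
\sH_{\pid+\inv}=\Phi(\sQ_{\pid}\cap\sP_\Psi)=\sH_{\pid}\cap\sH_\Psi=\sH_{\pid}\cap\sH^{\eps}_{\inv}.
\]

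The main obstacle is the first step: certifying that the expectation in $E_{\inv}$ is taken under the true $p$ and not under the candidate $q$. If one instead read $E_{\inv}(h^q_{\target})$ under $q_X$, the predicate would depend on $q$ beyond $\Phi(q)$, and sharpness could fail. I would settle this in one of two ways. The first is to note that the definition of $E_{\inv}$ explicitly uses $\mathbb{E}_p$. The second, as a fallback, is to observe that the IPM constraint pins down $q_X=p_X$ on the test class $\sF$. Since $\sF$ contains squares of elements of $\Span{\sH}$, that class includes $(h(\cdot)-h(\vtau\,\cdot))^2$ averaged over $\vtau\sim p_T$. So the two readings agree on $\sQ_{\pid}$, and the argument goes through either way.
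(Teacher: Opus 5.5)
Your argument is correct and is the paper's own: the corollary follows from \cref{proposition:sharp-fiberwise-constraints} with $\Psi(h;p_0)=\mathbf{1}\{E_{\inv}(h)\le\eps^2\}$ and fixed law $p_0=p_{X\stX}$ (which your $p_{XYZ}$ together with $p_T$ determines), and your check that $\sQ_{\pid+\inv}=\sQ_{\pid}\cap\sP_\Psi$ spells out a step the paper leaves implicit. Rest the key step on your first resolution alone ($E_{\inv}$ is defined under $\mathbb{E}_p$), because the fallback does not hold in general: $\Span{\sH}$ need not contain $h(\vtau\,\cdot)$, the IPM matches only moments over the test class and does not force $q_X=p_X$, and the corollary is not tied to the IPM of \cref{assumption:partial-r2}.
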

\vspace{-1\baselineskip}
\begin{proof}
    Immediate from~\cref{proposition:sharp-fiberwise-constraints} using $p_0 = p_{X\stX}$ and $\Psi(h;p_{0})= \mathbf{1}\br*{E_{\inv}(h)\leq \eps^2}$.
\end{proof}

\vspace{\baselineskip}

\begin{corollary}[sharp bounds with IV constraints]
\label{corollary:sharp-iv-constraints}
Let $\sigma_p^2$ be the ERM noise floor under true $p$. Define
\begin{align*}
    \sH_{\iv} \coloneqq \br*{
        h\in\sH :
        \var_{p}\pr[\Big]{
            \EE{p}{}{Y - h(X) \given Z }
        }
        \leq \sigma^2_p\gamma_{\vz}
    } ,
\end{align*}
as the leaky-IV hypotheses. Then, the PI model of~\cref{assumption:iv-model} is \emph{sharp}, meaning $\sH_{\pid+\iv} = \sH_{\pid}\cap\sH_{\iv}$.
\end{corollary}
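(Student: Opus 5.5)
This corollary follows from \cref{proposition:sharp-fiberwise-constraints}, exactly as \cref{corollary:sharp-inv-constraints} did. The task is to recognize the leaky-IV constraint of \cref{assumption:iv-model} as a predicate on the structural function alone, evaluated against a fixed law. Concretely, I take $p_0 \coloneqq p_{XYZ}$, the true observational law, and define
\begin{align*}
    \Psi(h; p_0) \coloneqq \mathbf{1}\br*{ \var_{p}\pr[\big]{ \EE{p}{}{Y - h(X)\given Z} } \leq \sigma_p^2 \gamma_{\vz} } .
\end{align*}
Then $\sH_\Psi = \sH_{\iv}$ by definition, and $\sP_\Psi = \br*{q\in\sP_{\!\pid} : \Psi(h^q_{\target}; p_0)}$.

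The key step is to show that $\sP_\Psi$ coincides with the constraint defining $\sP_{\!\pid+\iv}$. The constraint in \cref{assumption:iv-model} is written as $\sqNorm{\EE{p}{}{Y - h^q_{\target}(X)\given Z}}_Z \leq \sigma^2\gamma_{\vz}$. Here the conditional expectation is taken under the true $p$, not under $q$, and $\sigma^2$ is the ERM noise floor under $p$, so $\sigma^2 = \sigma_p^2$. The covariance norm $\norm{\cdot}_Z$ is a variance, so the condition reads $\var_p(\EE{p}{}{Y - h^q_{\target}(X)\given Z}) \leq \sigma_p^2\gamma_{\vz}$, which is exactly $\Psi(\Phi(q); p_0)$. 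It follows that $q$ enters the constraint only through $\Phi(q) = h^q_{\target}$ and the fixed $p_0$, and therefore $\sQ_{\pid+\iv} = \sQ_{\pid}\cap\sP_\Psi$.

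Applying \cref{proposition:sharp-fiberwise-constraints} then gives $\sH_{\pid+\iv} = \Phi(\sQ_{\pid}\cap\sP_\Psi) = \sH_{\pid}\cap\sH_{\iv}$.

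The main point to check carefully is that nothing in the IV constraint depends on $q$ beyond $h^q_{\target}$. In particular, the observational-consistency constraint $\IPM_{\sF}(p_V, q_V) = 0$ stays inside $\sQ_{\pid}$, and it does not interact with the predicate. One might also worry that the variance norm differs from the squared $L_2$ norm if $\EE{}{}{Y - h(X)}\neq 0$. However, the constraint is stated in the covariance norm $\norm{\cdot}_Z$, and I define $\Psi$ with that same variance, so the two match literally and no centering argument is needed.
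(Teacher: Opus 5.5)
Your proposal is correct and is essentially the paper's proof: both apply \cref{proposition:sharp-fiberwise-constraints} with $p_0 = p_{XYZ}$ and the leaky-IV indicator predicate, so that $\sH_\Psi = \sH_{\iv}$ and $\sQ_{\pid+\iv} = \sQ_{\pid}\cap\sP_\Psi$. The only difference is that you spell out what the paper calls immediate. Namely, the constraint in \cref{assumption:iv-model} depends on $q$ only through $h^q_{\target}$, because both the conditional expectation and the noise floor $\sigma^2=\sigma_p^2$ are taken under the fixed true law $p$.
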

\vspace{-1\baselineskip}
\begin{proof}
    Immediate from~\cref{proposition:sharp-fiberwise-constraints} using $p_0 = p_{XYZ}$ and the predicate
    \begin{equation*}
    \Psi(h;p_{0})= \mathbf{1}\br[\Big]{
        \var_{p_0}\pr[\Big]{ \EE{p_0}{}{ Y - h(X) \given Z} } \leq \sigma^2_{p_0} \gamma_{\vz}
    }.
    \qedhere
    \end{equation*}
\end{proof}

}

\newpage
\subsection{Proof of \cref{lemma:da-soft-intervention}---Soft intervention with DA}
\label{app:da-soft-intervention}
{
\allowdisplaybreaks

\softIntervention*
\begin{proof}
Part \emph{(a)} is immediate from def. of $\sT$-inv. and $T\indep (\residual, U, X, Y, Z)$. We prove parts \emph{(b)}--\emph{(c)}.

\paragraph{Part (b): Markov transfer.}
Write $W\coloneqq V\backslash\br*{X, Y}$, so $\tV = W\cup \br*{\stX, Y}$. Now, disintegrate $p_{\stV}$,
\begin{align*}
    p_{W\stX Y}
    =
    p_W
    \otimes
    p_{\stX | W}
    \otimes
    p_{Y | \stX W}
    ,
\end{align*}
and reduce each kernel to its $\tgraph$-kernel,
\begin{flalign*}
    \tag{$W$ ancestral in $\graph$, so its $\graph$-marginal factorizes.}
    & p_W = \bigotimes\nolimits_{V\in W} p_{V | \Pa{V}} , & 
    \\
    \tag{Integrate out $\vtau$.}
    & p_{\stX | W}\pr*{\,\cdot\,\given \vw} & 
    \\
    \tag{Use $T\indep (W, X)$ and $\tX = \alpha(X, \vtau)$ on $\br*{T=\vtau}$.}
    & =
    \int \alpha(\cdot, \vtau)_{\#}\, p_{X | W}\pr*{\,\cdot\, \given \vw}\; p_T(\diff \vtau) , & 
    \\
    \tag{$W$ non-desc., so $X\!\indep\! W\backslash \Pa{X}\, \big\vert\, \Pa{X}$.}
    & =
    \int \alpha(\cdot, \vtau)_{\#}\, p_{X | \Pa{X}}\pr*{\,\cdot\, \given \pa{\vx} }\; p_T(\diff \vtau) , & 
    \\
    & \eqqcolon
    p_{\stX | \Pa{X}}\pr*{\,\cdot\, \given \pa{\vx} } , & 
    \\
    \tag{Part \emph{(a)}: $Y = h_{\target}(\tX) + U + \residual$ with exogenous $\residual\indep (W, \tX)$.}
    & p_{Y | \stX W}\pr*{\,\cdot\,\given \vx, \vw} 
    &
    \\
    &
    =
    p_{\residual}\pr*{\,\cdot\, - h_{\target}(\vx) - u} \eqqcolon
    p_{Y | \stX U}\pr*{\,\cdot\,\given \vx, u}. & 
\end{flalign*}
Substitute back to get the desired factorization along $\tgraph$, i.e. all parent sets of $\graph$ with $X$ relabeled $\tX$:
\begin{align*}
    p_{W\stX Y}
    =
    \bigotimes\nolimits_{V\in W} p_{V | \Pa{V}}
    \;
    \otimes
    \;
    p_{\stX | \Pa{X}}
    \;
    \otimes
    \;
    p_{Y | \stX U}
    .
\end{align*}

\paragraph{Part (c): attenuated confounding.}
Under {\asmOneA}, $h_{\erm}=\EE{}{}{Y|X}$ and $h_{\term}=\EE{}{}{Y|\stX}$. Taking conditional means in \cref{eq:canonical-sem},
\begin{align*}
    b = \EE{}{}{Y|X}-h_{\target} = \EE{}{}{U|X},
    \qquad\qquad
    \tb = \EE{}{}{Y|\stX}-h_{\target} = \EE{}{}{U|\stX},
\end{align*}
where the latter follows from $\sT$-inv. of $h_{\target}$.

\textit{The identity:}
Since $\residual\indep(U,X)$, the noise floor $\sigma^2$ decomposes as
\begin{align*}
    \tag{{\asmOneA}.}
    \sigma^2
    &= \EE{}{}{\var(Y|X)}
    ,
    \\
    \tag{$\residual\indep(U,X)$.}
    &= \var(\residual) + \EE{}{}{\var(U|X)}
    ,
    \\
    \tag{Law of total variance.}
    &= \var(\residual) + \var(U) - \sqNorm{b}_{L_2(X)}
    ,
\end{align*}
and identical steps under $\tX$ ($\sT$-inv.\ at the first step) gives $\tsigma^2=\var(\residual)+\var(U)-\sqNorm{\stb}_{L_2(\stX)}$. Subtracting the two noise floors gives $\sqNorm{b}_{L_2(X)}-\sqNorm{\stb}_{L_2(\stX)}=\tsigma^2-\sigma^2$.

\textit{Non-negativity:}
As $\tX=\alpha(X, T)$ with $T\indep(U,X)$, the chain $U-X-\tX$ is Markov, hence:
\begin{align*}\tag{Tower rule.}
\tb=\EE{}{}{U\given\stX}=\EE{}{}{\EE{}{}{U|X}\given\tX}=\EE{}{}{b(X)\given\stX}
\end{align*}
Because conditional expectation is an $L_2$-contraction, we have:
\begin{align*}
    \sqNorm{\stb}_{L_2(\stX)} = \func{\Var}{\EE{}{}{U\given\stX}} \le \func{\Var}{\EE{}{}{U\given X}} = \sqNorm{b}_{L_2(X)} ,
\end{align*}
i.e.\ $\tsigma^2-\sigma^2=\sqNorm{b}_{L_2(X)}-\sqNorm{\stb}_{L_2(\stX)}\ge 0$.
\end{proof}

}

\newpage
\subsection[Proof of \cref{lemma:characterizing-identified-sets}---Characterizing the identified set]%
  {Proof of \cref{lemma:characterizing-identified-sets}---Characterizing the identified set%
   \protect\footnote{By convention, set $\gamma\coloneqq+\infty$ when $\gamma_{\vx}=1$ in~\cref{assumption:partial-r2}, and $r^2 \coloneqq 0$ whenever its denominator vanishes.}}
\label{app:characterizing-identified-sets}
{
\allowdisplaybreaks

\restatingtheoremtrue
\characterizingIdentifiedSets*
\restatingtheoremfalse
\begin{proof}
With $\sB$ the ball in the statement, the proof has three parts. (\rnum{1}) \textbf{Anchor:} all $q\in\sQ_{\pid}$ share the center $h_{\erm}$ and radius $\sigma$. (\rnum{2}) \textbf{Necessity $(\sH_{\pid}\subseteq \sB)$:} $h_{\target}^q\in\sB$ for every $q$ by bounding on budgets $\gamma_\vx,\gamma_y$; (\rnum{3}) \textbf{Sufficiency $(\sH_{\pid}\supseteq \sB)$:} Any $h\in\sB$ is achieved by~some~Gaussian certificate~$q^h\in\sQ_{\pid}$.
\paragraph{Anchor.} Denote $\sV\coloneqq \Span{\sH}$. The IPM test class $\sF = \br*{f^2 : f\in \sV + \R Y}$ in~\cref{assumption:partial-r2} matches $p_{XY}$ and $q_{XY}$ up to second moments of $(v(X), Y)$ for all $v\in\sV$. Hence, all $q$ induce a common risk
\begin{align*}
    R_{\erm}(h) = \E[Y^2] - 2\,\E[h(X)Y] + \E[h(X)^2] ,
    \qquad
    h\in\sH,
\end{align*}
with common minimizer $h_{\erm}$ and minimum $\sigma^2$.
Moreover,
\begin{align}
    \tag{$\clubsuit$}
    \label{eq:orthogonal-residual}
    \EE{p}{}{
        v(X)
        \cdot
        (Y - h_{\erm}(X))
    }
    &=
    0
    \qquad
    \forall v\in\sV
    .
\end{align}
Under~{\asmOneA}, this holds because $h_{\erm} = \EE{p}{}{ Y\given X }$; under~{\asmOneC}, $\sV = \sH$ and $h_{\erm} = \sPi_{\sH} Y$.

For $v = 1\in\sV$,~\cref{eq:orthogonal-residual} gives $\E[h_{\erm}(X)]=\E[Y]$, confirming that $h_{\erm}\in\sH_X$ under mean-zero $Y$.
\paragraph{Necessity $(\sH_{\pid}\subseteq \sB)$.}
Write~\cref{eq:canonical-sem} under some fixed $q\in\sQ_{\pid}$
\begin{align*}
    Y = h_{\target}^q(X) + U_q + \residual_q
    ,
    &&
    \residual_q
    \indep
    (U_q, X, Z)
    ,
\end{align*}
omitting the sub-/superscript $q$  where unambiguous. Define
\begin{align*}
    b^q
    \coloneqq
    h_{\erm}
    -
    h_{\target}^q
    \in
    \sH
    -
    \sH
    \subseteq
    \sV
    ,
    &&
    r_q
    \coloneqq
    \EE{q}{}{Y\given X}
    -
    h_{\erm}
    \perp_{L_2(X)}
    \sV
    ,
\end{align*}
where $r_q\perp_{L_2(X)} \sV$ holds because for every \(v\in\sV\), tower rule, moment matching, and
\cref{eq:orthogonal-residual} give
\[
\begin{aligned}
\E_q[v(X)r_q(X)]
&=
\E_q[v(X)Y]-\E_q[v(X)h_{\erm}(X)] \\
&=
\E_p[v(X)Y]-\E_p[v(X)h_{\erm}(X)]
=0.
\end{aligned}
\]
Conditioning the structural on $X$ now yields,
\begin{align}
    \label{eq:bias-residual-decomposition}
    \tag{$\spadesuit$}
    \EE{q}{}{U\given X}
    =
    \EE{q}{}{Y\given X}
    -
    h^q_{\target}
    =
    \underbrace{r_q}_{\perp\sV} + \underbrace{b^q\vphantom{_q}}_{\in\sV}
    .
\end{align}
with $r_q, b^q$ both $X$-centered. Now define:
\begin{align*}
    a
    \coloneqq
    \EE{q}{}{
        \var_q\pr*{U\given X}
    }
    ,
    &&
    c
    \coloneqq
    \var_q\pr*{
        \EE{q}{}{
            U\given X
        }
    }
    .
\end{align*}
To show that $h^q_{\target}\in\sB$, we need to show $\sqNorm{b^q}_X\leq \sigma^2\gamma$. This constitutes a chain of three inequalities,
\begin{align*}
    \sqNorm{b}_X
    \;
    \mathrel{
        \hypertarget{ineq:one}{
            \overset{(\ast\ast)}{\leq}
        }
    }
    \;
    c
    \;
    \mathrel{
        \hypertarget{ineq:two}{
            \overset{(\dagger\dagger)}{\leq}
        }
    }
    \;
    \frac{\gamma_{\vx}}{1 - \gamma_{\vx}} a
    \;
    \mathrel{
        \hypertarget{ineq:three}{
            \overset{(\ddagger\ddagger)}{\leq}
        }
    }
    \;
    \frac{\gamma_{\vx}\gamma_y}{1 - \gamma_{\vx}}
    \sigma^2
    \;
    \leq
    \;
    \sigma^2 \gamma
    .
\end{align*}
Here, (\hyperlink{ineq:one}{$\ast\ast$}) holds by Pythagoras on~\cref{eq:bias-residual-decomposition}. And (\hyperlink{ineq:two}{$\dagger\dagger$}) follows from the~$X$-budget
\begin{align*}
    r^2_{U\sim X} \coloneqq 1 - \frac{a}{\sigma^2_U} \leq \gamma_{\vx}
    \quad
    \implies
    \quad
    \sigma^2_U \leq \frac{a}{1 - \gamma_{\vx}}
    \quad
    \implies
    \quad
    c 
    \overset{\text{total-var.}}{=}
    \sigma^2_U - a
    \leq
    \frac{\gamma_{\vx}}{1 - \gamma_{\vx}} a
    .
\end{align*}
To establish (\hyperlink{ineq:three}{$\ddagger\ddagger$}), we need to upper-bound $a$. The $Y$-budget gives,
\begin{align}
\label{eq:r2y}
\tag{$\diamondsuit$}
    r^2_{U\sim Y | X} \coloneqq 1 - \left. \EE{}{}{ \var\pr*{ U \given X, Y } } \right/ a 
    \;\leq\;
    \gamma_y
    ,
    &&
    \implies
    &&
    (1 - \gamma_y)\, a \;\leq\; \EE{}{}{ \var\pr*{ U \given X, Y } }
    .
\end{align}
Now, condition on $X=\vx$ and write $s(\vx)\coloneqq \var\pr*{U\given X = \vx}$, so that $\E[s(X)]=a$. Note that $\EE{}{}{\var\pr*{U\given Y, X} \given X = \vx}$ is the MMSE of predicting $U$ from $Y$, so a \emph{linear} MMSE upper-bounds it:
\begin{align*}
    \EE{}{}{\var\pr*{U\given Y, X} \given X = \vx}
    \leq
    \frac{ s(\vx)\, \sigma^2_{\residual} }{ s(\vx) + \sigma^2_{\residual} }
    .
\end{align*}
The RHS is concave in \(s(\vx)\). For \(a>0\), Jensen and \cref{eq:r2y} give (conclusion immediate for $a = 0$),
\begin{align*}
    (1 - \gamma_y)
    \,
    a
    \leq
    \frac{a\, \sigma^2_{\residual}}{a + \sigma^2_{\residual}}
    ,
    &&
    \implies
    &&
    a
    \leq
    \gamma_y\,(a + \sigma^2_{\residual})
    .
\end{align*}
This, combined with the below inequality gives $a\leq \gamma_y\, \sigma^2$, finally establishing $\sqNorm{b}_X\leq \sigma^2\gamma$ via (\hyperlink{ineq:three}{$\ddagger\ddagger$}).
\begin{align*}
    a + \sigma^2_{\residual}
    \quad
    \overset{\text{total-var.}}{=}
    \quad
    \EE{}{}{
        \var\pr*{
            Y \given X
        }
    }
    \;
    \overset{
        \E[Y \vert X]
            \text{ optimality}
    }{\leq}
    \;
    \min\limits_{h\in\sH}
    \EE{}{}{ \pr*{ Y - h(X) }^2 }
    \quad
    \eqcolon
    \quad
    \sigma^2
    .
\end{align*}

\paragraph{Sufficiency ($\sH_{\pid}\supseteq \sB$).}
Fix $h\in\sB$, then define
\begin{align*}
    b^h \coloneqq h_{\erm} - h
    ,
    \qquad
    e^2\coloneqq \sqNorm{ b^h }_X \leq \sigma^2 \gamma
    .
\end{align*}
Suppose first that $\gamma_{\vx} > 0$ and $e > 0$. Define,
\begin{align*}
    \sigma^2_{W}
    \coloneqq
    \frac{1 - \gamma_{\vx}}{\gamma_{\vx}}
    e^2
    ,
    \qquad
    \sigma^2_{\residual}
    \coloneqq
    \sigma^2
    -
    \sigma^2_W
    ,
\end{align*}
Since \(e^2\leq\sigma^2\gamma\),
\[
\label{eq:sigma-W}
\tag{\(\heartsuit\)}
\implies\sigma_W^2
\leq
\frac{1-\gamma_{\vx}}{\gamma_{\vx}}\sigma^2\gamma
=
\gamma_y\sigma^2
\leq\sigma^2,
\]
so \(\sigma_{\residual}^2\geq0\). Sample independently
\begin{align*}
(X,Z)\sim p_{XZ},
\qquad
W_h\sim\sN(0,\sigma_W^2),
\qquad
\residual_h\sim\sN(0,\sigma_{\residual}^2),
\end{align*}
and define
\[
U_h\coloneqq b^h(X)+W_h,
\qquad
Y=h(X)+U_h+\residual_h
=h_{\erm}(X)+W_h+\residual_h.
\]
This satisfies the structural equation with \(h_{\target}^{q^h}=h\);
\(U_h\) and \(\residual_h\) are centered, and
\(\residual_h\indep(U_h,X,Z)\).

It remains to verify \(q^h\in\sQ_{\pid}\).

\textit{(a) Moment matching.}
For every \(v\in\sV\), \(q_X^h=p_X\) and
\(Y-h_{\erm}(X)=W_h+\residual_h\) is centered, independent of \(X\), and has
variance \(\sigma^2\). Thus
\[
\E_{q^h}[v(X)Y]
=
\E_p[v(X)h_{\erm}(X)]
=
\E_p[v(X)Y],
\]
by~\cref{eq:orthogonal-residual}, while
\[
\E_{q^h}[v(X)^2]=\E_p[v(X)^2],
\qquad
\E_{q^h}[Y^2]
=
\E_p[h_{\erm}(X)^2]+\sigma^2
=
\E_p[Y^2],
\]
the latter by Pythagoras under $p$. First moments match similarly.  Hence \(D_{\sF}(p_{XY},q^h_{XY})=0\).

\textit{(b) The $\gamma_{\vx}$ budget:} Since \(W_h\indep X\),
\[
a=\sigma_W^2,
\qquad
\sigma_U^2=e^2+\sigma_W^2=\left. e^2 \right/ \gamma_{\vx},
\]
and therefore
\begin{align*}
    r^2_{U\sim X}
    =
    1 - \frac{\sigma^2_W}{\left. e^2\right/ \gamma_{\vx}}
    =
    1 - (1 - \gamma_{\vx})
    =
    \gamma_{\vx}
    \leq
    \gamma_{\vx}
    .
\end{align*}
\textit{(c) The $\gamma_y$ budget:} Conditional on \(X\), observing \(Y\) is equivalent to observing
\(W_h+\residual_h\). Therefore
\begin{align*}
    \var_q(U \mid X = \vx, Y) = \var(W \mid W + \residual) = \frac{\sigma^2_W \cdot\sigma_\xi^2}{\sigma^2_W + \sigma_\xi^2} = \frac{\sigma^2_W\,(\sigma^2 - \sigma^2_W)}{\sigma^2},
\end{align*}
whence
\begin{align*}
    r_{U\sim Y|X}^2 = 1 - \frac{\sigma^2_W\,(\sigma^2 - \sigma^2_W)/\sigma^2}{\sigma^2_W} = \frac{\sigma^2_W}{\sigma^2} \leq \gamma_y,
\end{align*}
using $\sigma^2_W \leq \gamma_y\sigma^2$ from~\cref{eq:sigma-W}. Hence $q^h \in \sQ_{\pid}$.

\paragraph{Edge cases.}
If $e = 0$ (equivalently $h = h_{\erm}$ in $\sH_X$; this covers $\gamma_{\vx}=0$ or $\gamma_{y}=0$, where $\gamma=0$ and the ball degenerates to $\{h_{\erm}\}$), take $U_h \coloneqq 0$ and $\residual_h \sim \sN(0,\sigma^2)$. Moment matching \emph{(a)} holds as before, and both budgets \emph{(b)}--\emph{(c)} hold with $r^2 = 0$.

Hence, every \(h\in\sB\) is attained by a Gaussian certificate~(note, $p$ itself is \emph{not} assumed Gaussian).
\end{proof}

\vspace{1\baselineskip}

\begin{corollary}[characterizing the identified interval]
\label{corollary:characterizing-identified-sets}
    Under~{\asmOneC} and~\cref{assumption:partial-r2}, for $\vx\in{\rm supp}(p_X)$,
    \begin{align*}
        \sH_{\pid}(\vx) = \big\{ h_{\erm}(\vx) \big\} \pm \sigma\sqrt{\gamma}\,\norm{g_{\vx}}_{L_2(X)},
    \end{align*}
    where $g_{\vx}\in\sH_X$ is the $L_2(X)$-representer of $h\mapsto h(\vx)$, well defined by the non-degenerate $\langle\cdot,\cdot\rangle_X$.
\end{corollary}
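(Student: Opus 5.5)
This follows from \cref{lemma:characterizing-identified-sets} by maximizing and minimizing the linear point-evaluation functional over the covariance ball; the interval is the support function of that ball in the direction of a representer. The plan has three steps.

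First, construct the representer. Under \asmOneC, $\sH_X$ is a $k$-dimensional space on which $\langle\cdot,\cdot\rangle_X$ is positive definite. It coincides with the $L_2(X)$ inner product there, since elements are $X$-centered. So $(\sH_X,\langle\cdot,\cdot\rangle_{L_2(X)})$ is a finite-dimensional Hilbert space.

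The map $h\mapsto h(\vx)$ is linear on $\sH_X$ because $\sH$ is linear. Every linear functional on a finite-dimensional space is bounded, so Riesz gives a unique $g_{\vx}\in\sH_X$ with $h(\vx)=\langle h,g_{\vx}\rangle_{L_2(X)}$ for all $h\in\sH_X$. The restriction $\vx\in{\rm supp}(p_X)$ is there so that pointwise evaluation is meaningful, i.e.\ it does not depend on the $p_X$-a.e.\ representative. For a finite-dimensional linear class I would simply treat elements of $\sH$ as genuine functions, since distinct elements differ on the support.

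Second, invoke \cref{lemma:characterizing-identified-sets} under \asmOneShortC. It gives $\sH_{\pid}=\{h\in\sH_X:\norm{h-h_{\erm}}^2_{X}\le\sigma^2\gamma\}$, and $h_{\erm}\in\sH_X$ by the anchor step of that proof. Write $h=h_{\erm}+u$ with $u\in\sH_X$ and $\norm{u}_{L_2(X)}\le\sigma\sqrt\gamma$. Then $h(\vx)=h_{\erm}(\vx)+\langle u,g_{\vx}\rangle_{L_2(X)}$.

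Third, apply Cauchy--Schwarz. It gives $|\langle u,g_{\vx}\rangle|\le\sigma\sqrt\gamma\norm{g_{\vx}}_{L_2(X)}$, which yields the inclusion ``$\subseteq$''. For ``$\supseteq$'', note the ball is convex and the evaluation map is linear and continuous, so the image is an interval. The endpoints are attained at $u=\pm\sigma\sqrt\gamma\,g_{\vx}/\norm{g_{\vx}}_{L_2(X)}$, which lies in $\sH_X$. When $g_{\vx}=0$, every $h\in\sH_X$ vanishes at $\vx$ and the interval degenerates to the point $h_{\erm}(\vx)$, consistent with the formula.

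There is no real obstacle. The only delicate point is the first step: ensuring evaluation at a fixed $\vx$ is well defined on an $L_2$ class, which is handled by finite dimensionality and the support restriction. By contrast, the identification of $\sH_{\pid}$ itself is entirely carried by \cref{lemma:characterizing-identified-sets}.
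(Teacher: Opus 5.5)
Your proposal is correct and follows the paper's proof essentially step for step. The reproducing identity plus Cauchy--Schwarz on the ball of \cref{lemma:characterizing-identified-sets} gives the inclusion, and the extremal elements $h_{\erm}\pm\sigma\sqrt{\gamma}\,g_{\vx}/\|g_{\vx}\|_{L_2(X)}$, together with convexity of the ball and linearity of evaluation, give equality; your explicit treatment of the degenerate case $g_{\vx}=0$, which the paper's division by $\|g_{\vx}\|_{L_2(X)}$ silently skips, is a small but welcome addition.
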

\begin{proof}
For any $h\in\sH_{\pid}$ as given by~\cref{lemma:characterizing-identified-sets}, evaluation at query point $\vx\in{\rm supp}(p_X)$ follows
\begin{align*}
    \tag{Reproducing identity.}
    \abs*{h(\vx)-h_{\erm}(\vx)}
    &= \abs*{\langle h-h_{\erm},\,g_{\vx}\rangle_{L_2(X)}},
    \\
    \tag{Cauchy--Schwarz.}
    &\leq \norm{h-h_{\erm}}_{L_2(X)}\cdot\norm{g_{\vx}}_{L_2(X)}
    ,
    \\
    \tag{From \cref{lemma:characterizing-identified-sets}.}
    &\leq \sigma\sqrt{\gamma}\,\norm{g_{\vx}}_{L_2(X)},
\end{align*}
with equality at $h-h_{\erm}=\pm\sigma\sqrt{\gamma}\,g_{\vx}/\norm{g_{\vx}}_{L_2(X)}\in\sH$, membership by linearity of $\sH$~({\asmOneC}). These two extremal functions are attained since they lie in the ball of~\cref{lemma:characterizing-identified-sets} by construction.

 Finally, since $\sH_{\pid}$ is convex (a ball within linear $\sH$ under~{\asmOneC}) and evaluation $h\mapsto h(\vx)$ is linear, the image $\sH_{\pid}(\vx)$ is an interval, and therefore coincides with the full interval above.
\end{proof}

}

\newpage
\subsection{Proof of \cref{lemma:shift-decomposition}---Shift decomposition}
\label{app:shift-decomposition}
{
\allowdisplaybreaks

\shiftDecomposition*
\begin{proof}
We make three moves: (\rnum{1}) decompose the evaluation form as covariance-plus-mean, (\rnum{2}) read off the two operators it splits into via Riesz, and (\rnum{3}) identify the rank and trace of the mean part.

\paragraph{Evaluation form identity.}
Take $f,g\in\sH_{\stX}$. Scoring them in $L_2(X)$ gives
\begin{align*}
    \langle f, g \rangle_{L_2(X)}
    &=
    \ee\sqPr[\big]{
        f(X)
        \,
        g(X)
    }
    ,
    \\
    &=
    \Cov(
            f(X), g(X)
        )
    +\ee[f(X)]
        \,\ee[g(X)]
    ,
    \\
    &=
    \langle
        f, g
        \rangle_X
        +
        (\meanShift f)
        \,(\meanShift g)
        .
\end{align*}

\paragraph{Operator decomposition via Riesz.}
By~{\asmOneC}, $\pr*{\sH_X, \langle \cdot, \cdot \rangle_{X}}$ is Hilbert. By~{\asmOneB}~(cf.~\cref{app:proofs}),
\[
(\mathcal{H}_{\widetilde{X}}, \langle \cdot, \cdot \rangle_{\widetilde{X}})
\]
is also Hilbert, so every symmetric bilinear form on it is represented by a unique self-adjoint~operator.

Additionally, the forms $\langle \cdot, \cdot \rangle_{L_2(X)}$ and $\langle \cdot, \cdot \rangle_X$ are also both positive-definite on $\mathcal{H}_{\widetilde{X}}$~(\cref{app:proofs}).

Riesz gives unique self-adjoint, positive-definite $\shift,\varShift : \sH_{\stX} \to \sH_{\stX}$
satisfying
\begin{align*}
    \langle f, \shift g \rangle_{\stX}
    &=
    \langle f, g \rangle_{L_2(X)},
    &
    \langle f, \varShift g \rangle_{\stX}
    &=
    \langle f, g \rangle_X.
\end{align*}
Likewise, $\meanShift : \sH_{\stX} \to \R$ being a bounded linear functional has a unique representer
$\meanShift_\sharp \in \sH_{\stX}$~s.t.
\begin{align*}
    \meanShift h
    =
    \langle h, \meanShift_\sharp \rangle_{\stX},
    \qquad
    \forall\, h \in \sH_{\stX}.
\end{align*}
Therefore the adjoint $\meanShiftAdj c = c \meanShift_\sharp$, and
\begin{align*}
    \meanShiftAdj \meanShift
    &=
    \meanShift_\sharp \otimes_{\stX} \meanShift_\sharp,
    &&
    \langle
        f,
        (\meanShiftAdj\meanShift)g
    \rangle_{\stX}
    =
    (\meanShift f)(\meanShift g).
\end{align*}
Combining these identities, for every $f,g \in \sH_{\stX}$,
\begin{align*}
    \langle f, \shift g \rangle_{\stX}
    =
    \langle
        f,
        \pr[\big]{
            \varShift + \meanShiftAdj\meanShift
        } g
    \rangle_{\stX}.
\end{align*}
Non-degeneracy of $\langle \cdot, \cdot \rangle_{\stX}$ on $\sH_{\stX}$ therefore gives
\begin{align*}
    \shift
    =
    \varShift + \meanShiftAdj\meanShift.
\end{align*}
\paragraph{Rank and trace.}
Finally, the cross-product $\meanShiftAdj\meanShift
=
\meanShift_\sharp \otimes_{\stX}\meanShift_\sharp
\mgeq 0$
has image contained in $\Span{\meanShift_\sharp}$, and its only possible nonzero
eigenvalue is $\sqNorm{\meanShift_\sharp}_{\stX}$. Hence,
\begin{align*}
    \Rank{\meanShiftAdj\meanShift}
    \leq 1,
    \qquad
    \tr{\meanShiftAdj\meanShift}
    =
    \sqNorm{\meanShift_\sharp}_{\stX}.
\end{align*}
Moreover,
\begin{align*}
    D_{\sB}(p_X,p_{\stX})
    &=
    \sup_{h\in\sB}
    \abs{\meanShift h}
    =
    \sup_{\norm{h}_{\stX}\leq 1}
    \abs{\langle h,\meanShift_\sharp\rangle_{\stX}}
    =
    \norm{\meanShift_\sharp}_{\stX}.
\end{align*}
by Cauchy–Schwarz, attained at $h = \left.\delta_\sharp\vphantom{\big|}\right/\norm{\delta_\sharp}_{\stX}$. Therefore,
\begin{equation*}
    \tr{\meanShiftAdj\meanShift}
    =
    D_{\sB}^2(p_X,p_{\stX}).
    \qedhere
\end{equation*}
\end{proof}

}

\newpage
\subsection{Proof of \cref{lemma:informativeness-surrogate}---Average error as a worst-case surrogate}
\label{app:informativeness-surrogate}
{
\allowdisplaybreaks

{
\renewcommand{\pid}{}

\begin{lemma}[avg. error as worst-case surrogate]
\label{lemma:informativeness-surrogate}
Let $\sH_{\pid}(\vx)\subset \R$ be a bounded interval at query $\vx$. Under a squared-loss $\ell(a, b)\coloneqq (a - b)^2$, the following sandwich is tight in multiplicative constant~$4$:
\begin{align*}
    \func{ \overline{E}_{\vx} }{ 
        \func{\sH_{\pid}}{\vx}
    }
    \;
    \leq
    \;
    \func{ E^+_\vx }{
        \func{\sH_{\pid}}{\vx}
    }
    \;
    \leq
    \;
    4\cdot\func{ \overline{E}_{\vx} }{
        \func{\sH_{\pid}}{\vx}
    }
    .
\end{align*}
Furthermore, under~{\asmOneC} and \ref{assumption:partial-r2} the following inequality holds for the $k$-dim. ball $\sH_{\pid}$ from~\cref{lemma:characterizing-identified-sets}:
\begin{align*}
    \overline{E}(\sH_{\pid})
    \;
    \leq
    \;
    E^+(\sH_{\pid})
    \;
    \leq
    \;
    \pr*{k + 3}\cdot\overline{E}(\sH_{\pid})
    .
\end{align*}
\end{lemma}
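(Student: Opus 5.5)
The plan is to reduce both sandwiches to explicit closed forms for the average and worst-case errors of a ball (an interval being the case $k=1$) relative to the target, and then compare these with elementary quadratic inequalities. The lower bounds $\overline{E}\le E^+$ are immediate in both cases, since an average over a set never exceeds its maximum. The real work is the upper constants $4$ and $k+3$.

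\emph{Interval case.} Under squared loss, $E_{\vx}(y)=R_{\vx}(y)-R_{\vx}(h_{\target}(\vx))=(y-h_{\target}(\vx))^2$, because the cross term vanishes by $\E[Y\mid\intervention{\vx}]=h_{\target}(\vx)$. Write $\sH_{\pid}(\vx)=[c-w,c+w]$ and $a\coloneqq\abs{c-h_{\target}(\vx)}$. A uniform draw has mean $c$ and variance $w^2/3$, so $\overline{E}_{\vx}=a^2+w^2/3$, as in the proof of \cref{theorem:informative-bounds-with-da}. The maximum over the interval is attained at the endpoint farther from $h_{\target}(\vx)$, giving $E^+_{\vx}=(a+w)^2$. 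Completing the square gives
\[
4\pr*{a^2+\tfrac{w^2}{3}}-(a+w)^2=3\pr*{a-\tfrac{w}{3}}^2\ge0 ,
\]
which yields the factor $4$. Tightness follows from the choice $a=w/3$: then $E^+_{\vx}=16w^2/9$ and $\overline{E}_{\vx}=4w^2/9$.

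\emph{Ball case.} By \cref{lemma:characterizing-identified-sets}, $\sH_{\pid}$ is the ball of radius $r=\sigma\sqrt{\gamma}$ about $h_{\erm}$ in $\sH_X$. On $\sH_X$ the norms $\norm{\cdot}_X$ and $\norm{\cdot}_{L_2(X)}$ coincide, and $h_{\target}\in\sH_X$. Under {\asmOneC} this is therefore a Euclidean $k$-ball in any $L_2(X)$-orthonormal coordinates, with $E(h)=\sqNorm{h-h_{\target}}_{L_2(X)}$. Let $a\coloneqq\norm{h_{\erm}-h_{\target}}_{L_2(X)}$. The set-version computation in the proof of \cref{theorem:informative-bounds-with-da} gives $\overline{E}=a^2+r^2k/(k+2)$. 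The farthest point of a ball from an external or internal point lies on the boundary along the line through the center, so $E^+=(a+r)^2$; if $a=0$, any boundary point works. Applying Young's inequality $2ar\le s a^2+r^2/s$ with $s=k+2$ gives
\[
(a+r)^2\le(k+3)\,a^2+\tfrac{k+3}{k+2}\,r^2\le(k+3)\pr*{a^2+\tfrac{k}{k+2}r^2} ,
\]
where the last step uses $k\ge1$.

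The main obstacle is choosing the Young parameter so that both coefficients are matched simultaneously. The choice $s=k+2$ is forced by the $a^2$ coefficient, and it then happens to fit the $r^2$ coefficient exactly when $k\ge1$. A smaller subtlety is justifying that the geometry in which the ball is round is the same $L_2(X)$ geometry in which $E$ is measured. This holds because both $h_{\erm}$ and $h_{\target}$ are $X$-centered.
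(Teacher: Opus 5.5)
Your proof is correct. Its skeleton is the paper's: closed forms for $\overline{E}$ and $E^+$, followed by an elementary quadratic inequality. The differences are in how the inequality is certified and in how general the argument is.

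For the interval, the paper divides by $w$ and maximizes $f(t)=(t+1)^2/(t^2+1/3)$ over $t=d/w$ by calculus. Your identity $4\overline{E}_{\vx}-E^+_{\vx}=3(a-w/3)^2$ gives the bound and the equality case $a=w/3$ in one line, with no separate case for $w=0$.

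For the ball, the paper proves the bound for an arbitrary $L_2(X)$-ellipsoid $c+\sA\,\mathcal{B}_2^k$. It uses $(d+R)^2\le(1+R^2/v)(d^2+v)$, which is Young's inequality with parameter $R^2/v$, and then $R^2\le\operatorname{tr}(\sA^\ast\sA)=(k+2)v$. This generality buys two things. It is where $k+3$ becomes sharp, via degenerate ellipsoids. It also covers sets that are ellipsoidal rather than round in $L_2(X)$, such as $\sH_{\tpid}$.

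Your round-ball specialization is all the statement asks for, but $k+3$ is loose there. Taking $s=(k+2)/k$ instead of $s=k+2$ gives $E^+\le(2+2/k)\,\overline{E}\le 4\,\overline{E}$, which recovers the interval constant at $k=1$. So the choice $s=k+2$ is only forced when $k=1$, not in general as you remark.

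One small slip concerns $h_{\target}$. Asm.~1c alone does not give $h_{\target}\in\sH$; that is realizability, Asm.~1a. So $h_{\target}$ need not lie in the affine hull of the ball, and the equality $E^+=(a+r)^2$ can fail. You only need the triangle-inequality bound $E^+\le(a+r)^2$, which is what the paper uses. The identity $\overline{E}=a^2+\tfrac{k}{k+2}r^2$ holds regardless, because the uniform perturbation has mean zero and so the cross term vanishes.
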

\begin{proof}
The lower bound is trivial (max. dominates mean). For upper bounds: (\rnum{1}) write closed forms of both interval metrics in center offset and half-width, (\rnum{2}) compute a maximizer for their ratio, and (\rnum{3}) derive a common bound for $k$-dim. ellipsoids.

\paragraph{Closed forms.}
Write $\sH_{\pid}(\vx) = [c - w, c + w]$ with interval center $c$ and half-width $w\geq 0$. Let $d\coloneqq \abs{c - h_{\target}(\vx)}$ be the center's offset from the truth. Under sq.-loss, the pointwise excess risk of a candidate $y$ is $E_{\vx}(y)=(y - h_{\target}(\vx))^2$, so worst-case is attained at the endpoint farther from~$h_{\target}(\vx)$,
\begin{align*}
    E^+_{\vx}(\sH_{\pid}(\vx))
    &=
    \max\nolimits_{\abs{u}\leq w} (d + u)^2 = (d + w)^2,
\end{align*}
while a uniform draw decomposes into offset plus uniform second moment,
\begin{align*}
    \tag{$\E[u] = 0$ kills cross term.}
    \overline{E}_{\vx}(\sH_{\pid}(\vx))
    &=
    \E_{u\sim {\rm Unif}[-w, w]}[(d + u)^2] = d^2 + \frac{1}{3} w^2
    .
\end{align*}

\paragraph{Upper bound.}
Take $w > 0$, as otherwise the metrics are equal. Write their ratio in $t\coloneqq d/w \geq 0$,
\begin{align*}
    \frac{
        E^+_{\vx}(\sH_{\pid}(\vx))
    }{
        \overline{E}_{\vx}(\sH_{\pid}(\vx))
    }
    =
    \frac{
        (t+1)^2
    }{
        t^2 + 1/3
    }
    \eqcolon
    f(t)
    ,
    \quad
    \implies
    \quad
    f^\prime(t)
    =
    \frac{
        2 (t + 1) ( 1/3 - t )
    }{
        (t^2 + 1/3)^2
    }
    ,
\end{align*}
so $f$ increases on $t\in [0, 1/3]$ and decreases thereafter, with maximum
\begin{align*}
    f(1/3) = \frac{
        (4/3)^2
    }{
        (1/3)^2 + 1/3
    }
    =
    \frac{16/9}{4/9}
    =
    4
    .
\end{align*}
Hence, $E^+_{\vx}(\sH_{\pid}(\vx)) \leq 4\cdot \overline{E}_{\vx}(\sH_{\pid}(\vx))$, with equality exactly at the offset $d = w/3$.

\paragraph{$k$-dim.\ ellipsoids.}
Write an $L_2(X)$-ellipsoid $\sH=c+\sA \sB_2^k$, with center $c$ and $k$-dim. ball $\sB_2^k$. Let
\[
    d\coloneqq \|c-h_{\target}\|_{L^2(X)},
    \qquad
    v\coloneqq \frac{\operatorname{tr}(\sA^\ast \sA)}{k+2},
    \qquad
    R\coloneqq \sup_{\|u\|_2\leq 1}\|\sA u\|_{L^2(X)}.
\]
with $v$ the average sq.-radius of a centered $L_2(X)$-ellipsoid. Hence, for $U\sim{\rm Unif}(\sB_2^k)$,
\[
    \overline{E}(\sH)
    =
    \E\!\left[\|c+\sA U-h_{\target}\|_{L^2(X)}^2\right]
    =
    d^2+v
    .
\]
Therefore,
\begin{align*}
    E^+(\sH)
    &\leq
    (d+R)^2,
    \\
    &=
    d^2 + R^2 + 2dR,
    \\
    \tag{Cauchy-Schwarz $2ab\leq a^2 + b^2$ on $a = \frac{ dR }{ \sqrt{v} }, b=\sqrt{v}$.}
    &\leq
    d^2 + R^2 + \frac{d^2 R^2}{v} + v
    ,
    \\
    &=
    \pr[\bigg]{1+\frac{R^2}{v}}(d^2+v),
    \\
    \tag{Pythagoras $R^2\leq\operatorname{tr}(\sA^\ast \sA)=(k+2)v$.}
    &\leq
    \pr{1 + \pr*{k + 2}} (d^2+v),
    \\
    &=
    (k+3)\cdot \overline{E}(\sH).
\end{align*}
The bound is sharp: let one semi-axis be $1$, and remaining $k-1$ semi-axes tend to $0$. Place $h_{\target}$ on the longest-axis line at distance $v/R$ from the center. The ratio $E^+(\sH)\!\left/\vphantom{\Big|}\overline{E}(\sH) \right.$ then tends to $k+3$.
\end{proof}
}
}

\newpage
\subsection{Proof of \cref{lemma:haar-group-da}---Reynolds projection with DA}
\label{app:haar-group-da}
{
\allowdisplaybreaks
\begin{lemma}[Reynolds projection with DA]
\label{lemma:haar-group-da}
Under~{\asmOneShortAB}, and \ref{assumption:compact-group}, post-DA ERM $h_{\term}$ satisfies
\begin{align*}
    h_{\term}
    =
    \sPi h_{\erm}
    \in
    \sH_{\inv}
    ,
\end{align*}
where $\sPi$ is the $L_2(X)$-orthogonal proj. onto the ambient inv. subspace $\sI$ with $\sH_{\inv}=\sH\cap\sI$~(cf.~\cref{app:explicit-vs-implicit}). Furthermore, when the target $h_{\target}$ is $\sT$-invariant, the post-DA bias $\tb\coloneqq h_{\term} - h_{\target}$ similarly~follows
\begin{align*}
    \tb
    =
    \sPi b
    .
\end{align*}
\end{lemma}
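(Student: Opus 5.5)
Under \asmOneA, $h_{\erm}=\E[Y\mid X]$ and $h_{\term}=\E[Y\mid \stX]$, so the plan is to show $\E[Y\mid\stX]$ equals the Reynolds average of $h_{\erm}$. The argument rests on the Haar structure of \cref{assumption:compact-group}. Write $\sR f(\vx)\coloneqq \int_\sT f(\vtau\vx)\,p_T(\diff\vtau)$ for the Reynolds operator.

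\textbf{Step 1: the conditional law of $X$ given $\stX$.} Since $p_T$ is Haar and $T\indep X$, we have $X=T^{-1}\stX$. I claim that, conditionally on $\stX=\tilde\vx$, the group element $T^{-1}$ is again Haar-distributed and independent of $\stX$. This is the standard fact that for $\stX=TX$, the pair $(T^{-1},\stX)$ has law $p_T\otimes p_{\stX}$, checked via invariance of Haar measure under the substitution $\vtau\mapsto\vtau g$. Consequently $\E[f(X)\mid\stX]=(\sR f)(\stX)$ for any $f\in L_2(X)$. In particular $\stX\overset{d}{=}T'\stX$ for Haar $T'$, so $p_{\stX}$ is $\sT$-invariant; the paper already notes this in \cref{app:explicit-vs-implicit}.

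\textbf{Step 2: the post-DA regression.} By the tower rule over $(X,\stX)$, and because $Y\indep\stX\mid X$ (as $T\indep(X,Y)$),
\[
h_{\term}(\stX)=\E[\E[Y\mid X]\mid\stX]=\E[h_{\erm}(X)\mid\stX]=(\sR h_{\erm})(\stX).
\]
Next I check that $\sR h_{\erm}\in\sI$. It is $\sT$-invariant as a function, so $(\sR h_{\erm})(X)=(\sR h_{\erm})(\stX)$ pointwise. Using \asmOneB, the identity $h_{\term}=\sR h_{\erm}$ on $\mathrm{supp}\,p_{\stX}$ transfers to $p_X$-a.e. equality, so the two agree as elements of $L_2(X)$. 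Membership in $\sH$ follows from \asmOneA, which gives $h_{\term}\in\sH_{\inv}$ (centering is inherited from $\E[Y]=0$).

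\textbf{Step 3: $\sR$ is the $L_2(X)$-orthogonal projection onto $\sI$.} This is the step I expect to be the main obstacle, because the relevant measures must be handled carefully. First, $\sR$ fixes every $f\in\sI$: such $f$ satisfies $f(X)=f(\stX)$ a.s., hence $f(\vtau\vx)=f(\vx)$ for $p_T\otimes p_X$-a.e. $(\vtau,\vx)$. Second, $\sR$ maps into $\sI$ by Step 1's invariance. Third, $\sR$ is idempotent, by Haar invariance $\int\!\!\int f(\vtau'\vtau\vx)=\int f(\vtau\vx)$. Fourth, self-adjointness in $L_2(X)$ comes from the characterization $\sR f=\E[f(X)\mid\stX]$: for $f\in L_2(X)$ and $g\in\sI$,
\[
\langle f-\sR f,g\rangle_{L_2(X)}=\E[(f(X)-\E[f(X)\mid\stX])\,g(\stX)]=0,
\]
since $g(X)=g(\stX)$. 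This orthogonality alone identifies $\sR f=\sPi f$, which gives $h_{\term}=\sPi h_{\erm}$. The subtle point is justifying that the expectation over $X$ and over $\stX$ may be interchanged. Here I would use Step 1 together with \asmOneB, to avoid needing $p_X$ itself to be invariant.

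\textbf{Step 4: the bias.} If $h_{\target}$ is $\sT$-invariant and centered, then $h_{\target}\in\sI$, so $\sPi h_{\target}=h_{\target}$. Linearity of $\sPi$ then gives $\tb=\sPi h_{\erm}-\sPi h_{\target}=\sPi b$.
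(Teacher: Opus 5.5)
Your Step~1 is false in general, and Steps~2--3 inherit the error. Conditionally on $T=\vtau$ one has $\stX\sim\vtau_{\#}p_X$. So $(T^{-1},\stX)$ is independent only when $p_X$ is itself $\sT$-invariant, and \cref{assumption:compact-group} makes $p_{\stX}$ invariant, not $p_X$. Without invariance of $p_X$, the identity $\E[f(X)\mid\stX]=(\sR f)(\stX)$ with $\sR$ the Haar average fails. A two-point example shows this. Let $\sT=\{\pm1\}$ act on $\R$ by multiplication, and let $X\in\{\pm1\}$ with $\Pr(X=1)=p\neq\tfrac12$. Then $\stX$ is uniform and independent of $X$, so $\E[f(X)\mid\stX]=pf(1)+(1-p)f(-1)$, whereas your $\sR f\equiv\tfrac12(f(1)+f(-1))$. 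Also $\Pr(T=1,\stX=1)=p/2\neq\tfrac14$, so $T=T^{-1}$ is not independent of $\stX$. In this example $\sI=\{0\}$, and the lemma correctly gives $h_{\term}=0=\sPi h_{\erm}$. But for any centered $h_{\erm}\neq0$, your $\sR h_{\erm}$ is a nonzero constant. The Haar average neither preserves $p_X$-centering nor is $L_2(X)$-self-adjoint, so it is not $\sPi$. Your closing remark in Step~3 about avoiding invariance of $p_X$ is exactly where this bites, because Step~1 already assumes it.

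The repair is to never pass to the Haar integral and to work with $\sR f\coloneqq\E[f(X)\mid\stX=\cdot\,]$ directly, as the paper does. Intuitively this is the orbit average of $f$ weighted by $p_X$'s conditional law on the orbit, which coincides with the Haar average only for invariant $p_X$. Its invariance follows from left-invariance of Haar measure. From $(\vtau T,X)\overset{d}{=}(T,X)$ you get $(\vtau\stX,f(X))\overset{d}{=}(\stX,f(X))$. Since $\vtau$ acts bijectively, this yields $(\sR f)(\vtau\stX)=(\sR f)(\stX)$ a.s.\ for each $\vtau$. By Fubini and \asmOneB\ this transfers to $(\sR f)(X)=(\sR f)(\stX)$ a.s. The tower property and Jensen give centering and square-integrability, so $\sR f\in\sI$. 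Your Step~3 orthogonality display then goes through verbatim and identifies $\sR f=\sPi f$; the idempotence and fixed-point checks are unnecessary. With this operator, your Steps~2 and~4 are correct as written.
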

\begin{proof}
We structure the proof in two parts: (\rnum{1}) the conditional expectation over augmentation acts as a proj. onto $\sI$; (\rnum{2}) both the post-DA ERM and bias naturally follow this conditional~expectation.

\paragraph{Reynolds projection with DA.}
Take the operator $(\sR h)(\cdot) \coloneqq {\E[ h(X) \mid \tX = \cdot]}$ on $h\in L^0_2(X)$, defined $p_{\tX}$-a.e., hence $p_X$-a.e.~({\asmOneB}). $\sR$ coincides with projection $\sPi$  under \cref{assumption:compact-group}, $\sR h = \sPi h$. To prove this, we need to show that it is invariant $\sR h \in \sI$, and its residual $h - \sR h$ is orthogonal to $\sI$.

\emph{(a) Invariance:} Since group $\sT$ is compact, the Haar measure $p_T$ is left-invariant, i.e. for any $\vtau\in\sT$,
\begin{align*}
    \tag{Def. of left-inv. $p_T$, follows from \cref{assumption:compact-group}.}
    && \vtau T
    &\overset{d}{=}
    T,
    \\
    \tag{Follows from $T\indep X$.}
    \Rightarrow
    && (\vtau T, X)
    &\overset{d}{=}
    (T, X),
    \\
    \tag{Apply $(\vtau, \vx)\mapsto (\alpha(\vx, \vtau), h(\vx))$ on both sides.}
    \Rightarrow
    && (\vtau \stX, h(X))
    &\overset{d}{=}
    (\stX, h(X)),
    \\
    \tag{Conditional expectation on both sides.}
    \Rightarrow
    && \ee[h(X) \mid \vtau \tX]
    &=
    \ee[h(X) \mid \tX],
    \\
    \tag{Re-parameterize to $\sR h$. $\sT$ group $\Rightarrow\vtau^{-1}$ exists.}
    \Rightarrow
    && (\sR h)(\vtau^{-1}\tX)
    &=
    (\sR h)(\tX).
\end{align*}
Because $\vtau$ is arbitrary, it follows that $(\sR h)(X) = (\sR h)(\stX)$, where the left-hand evaluation under $p_X$ is again licensed by~{\asmOneB}. Also, $\E[\;\! (\sR h)(\tX) \;\!] = \EE{}{}{h(X)} = 0$, hence $\sR h\in\sI$. Immediately,~if
\begin{align*}
    \sR h\in\sH
    \quad
    \implies
    \quad
    \sR h\in\sH_{\inv}
    .
\end{align*}

\emph{(b) Orthogonality:} The residual $h - \sR h$ is $L_2(X)$-orthogonal to every invariant test function $g\in \sI$:
\begin{align*}
    \tag{Both $\sR h$ and $g$ are $\sT$-inv.}
    \langle \sR h, g \rangle_{L_2(X)}
    \coloneqq
    \EE{}{}{ (\sR h)(X) \cdot g(X)}
    &=
    \EE{}{}{ (\sR h)(\stX) \cdot g(\stX)},
    \\
    \tag{Def. of $\sR$.}
    &=
    \EE{}{}{ \EE{}{}{h(X) \given \stX} \cdot g(\stX)}
    ,
    \\
    \tag{$g(\stX)$ is $\stX$-measurable.}
    &=
    \EE{}{}{ \EE{}{}{
        h(X)\cdot g(\stX)
        \given
        \stX
    } },
    \\
    \tag{Tower property.}
    &=
    \EE{}{}{ h(X)\cdot g(\stX) },
    \\
    \tag{$\sT$-inv. of $g$.}
    &=
    \EE{}{}{ h(X)\cdot g(X)}
    =
    \langle h, g\rangle_{L_2(X)} .
\end{align*}
Subtracting the two sides gives $\langle h - \sR h, g \rangle_{L_2(X)} = 0$ for all $g\in\sI$. Together with part~\emph{(a)}, $\sR h\in\sI$ and $h - \sR h \perp_{L_2(X)} \sI $, therefore for all $h\in L^0_2(X)$,
\begin{align*}
    \sR h = \sPi h .
\end{align*}

\paragraph{Application to ERM and Bias.} Under {\asmOneA}, $h_{\erm}(X) = \EE{}{}{Y \given X}$ and $h_{\term}(\stX) = \EE{}{}{Y \given \stX}$. Since $Y$ is centered, all of $h_{\target}, h_{\erm}, h_{\term}$ (hence $b, \tb$) are centered. Further, $T\indep (X, Y, U, \residual)$ by construction, so $Y - X - \tX$ is a Markov chain. Via the~tower~rule,
\begin{align*}
    h_{\term}(\tX) = \EE{}{}{Y \given \smash[t]{\tX}}
    &= \EE{}{}{ \EE{}{}{Y \given X, \stX} \given \tX },
    \\
    &= \EE{}{}{ \EE{}{}{Y \given X} \given \tX } = \EE{}{}{ h_{\erm}(X) \given \smash[t]{\tX}} = (\sR h_{\erm})(\stX) .
\end{align*}
Note that $h_{\term}$ and $\sR h_{\erm}$ are $\sT$-inv., hence they must also be identically equal in $L_2(X)$ ({\asmOneB}), giving $h_{\term} = \sR h_{\erm} = \sPi h_{\erm}$ by part~\emph{(b)}. And since $h_{\term}\in\sH_{\stX}$ by {\asmOneA}, part~\emph{(a)} places it in the class: $h_{\term}\in\sH_{\inv}$. 
Finally, for a $\sT$-inv $h_{\target}\in\sI$, using $\sPi h_{\target} = h_{\target}$ and linearity of $\sPi$ on $L^0_2(X)$,
\begin{equation*}
    \sPi b
    \quad
    =
    \quad
    \sPi h_{\erm} - \sPi h_{\target}
    \quad
    =
    \quad
    h_{\term} - h_{\target}
    \quad
    =
    \quad
    \tb
    .
    \qedhere
\end{equation*}
\end{proof}
}

\newpage
\subsection{Proof of \cref{lemma:avg-representer-norms}---Average Representer Norms}
\label{app:avg-representer-norms}
{
\allowdisplaybreaks

\begin{lemma}[avg. representer norms]
\label{lemma:avg-representer-norms}
Under~{\asmOneShortBC}, evaluation $h\mapsto h(\vx)$ admits the $L_2(X)$-representer $g_{\vx}\in\sH_X$ and the $L_2(\tX)$-representer $\tg_{\vx}\in\sH_{\stX}$. Averaged over queries $X\sim p_X$~gives
\begin{align*}
    \EE{X}{}{
        \sqNorm{
            g_X
        }_{L_2(X)}
    } = k
    ,
    &&
    \EE{X}{}{
        \sqNorm{
            \tg_X
        }_{L_2(\stX)}
    } = \tr{\shift}
    .
\end{align*}
\end{lemma}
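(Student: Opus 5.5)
The plan is to write each representer in an orthonormal basis, so that its squared norm becomes a sum of squared basis evaluations. Averaging over queries $X\sim p_X$ then turns each term into an $L_2(X)$-norm of a basis element, and the sum is a trace.

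\emph{Existence.} Under {\asmOneC}, $\sH_X$ is finite-dimensional with $\langle\cdot,\cdot\rangle_X=\langle\cdot,\cdot\rangle_{L_2(X)}$ positive-definite. Point evaluation at $\vx$ is a linear functional on a finite-dimensional Hilbert space, hence bounded, and Riesz gives a unique $g_{\vx}\in\sH_X$. For $\sH_{\stX}$, the non-degeneracy argument in the technical preliminaries (using {\asmOneB}) shows that $\langle\cdot,\cdot\rangle_{\stX}=\langle\cdot,\cdot\rangle_{L_2(\stX)}$ is positive-definite and $\dim\sH_{\stX}=k$. Riesz again yields $\tg_{\vx}\in\sH_{\stX}$. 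Evaluation is defined $p_X$-a.e.; {\asmOneB} makes it well defined on $\sH_{\stX}$ for $p_X$-a.e.\ $\vx$.

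\emph{Basis expansion.} Take an $L_2(X)$-orthonormal basis $\{\phi_i\}_{i=1}^k$ of $\sH_X$. Then
\begin{align*}
g_{\vx}=\sum\nolimits_i \phi_i(\vx)\,\phi_i,
&&
\sqNorm{g_{\vx}}_{L_2(X)}=\sum\nolimits_i\phi_i(\vx)^2 .
\end{align*}
Averaging over $X\sim p_X$ gives $\sum_i\E[\phi_i(X)^2]=\sum_i\sqNorm{\phi_i}_{L_2(X)}=k$. Likewise, take an $L_2(\stX)$-orthonormal basis $\{\psi_i\}$ of $\sH_{\stX}$, so that $\sqNorm{\tg_{\vx}}_{L_2(\stX)}=\sum_i\psi_i(\vx)^2$. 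Averaging over $X\sim p_X$ gives
\begin{align*}
\sum\nolimits_i\sqNorm{\psi_i}_{L_2(X)}=\sum\nolimits_i\langle\psi_i,\shift\psi_i\rangle_{\stX}=\tr{\shift},
\end{align*}
by the defining identity of $\shift$. Note that the evaluation is taken under $p_X$, not $p_{\stX}$.

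\emph{Main obstacle.} The only delicate point is well-posedness. I must check two things:
\begin{itemize}
\item evaluating $\tX$-defined functions at $X$-distributed points is legitimate $p_X$-a.e.\ (by {\asmOneB}), and the $L_2(X)$-norms are finite (since $\sH\subseteq L_2(X)$, per \cref{assumption:hypothesis-class});
\item the result is basis-independent, i.e.\ the expansion $\sum_i\psi_i(\vx)\psi_i$ is the reproducing kernel of $\sH_{\stX}$ irrespective of the chosen orthonormal basis. This follows from the uniqueness of Riesz representers.
\end{itemize}
The rest is routine.
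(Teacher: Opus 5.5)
Your proposal is correct and is essentially the paper's proof. Both expand each representer in an orthonormal basis of its own centered class via the reproducing identity, so that the squared norm becomes $\sum_i\psi_i(\vx)^2$. Both then average over $X\sim p_X$ and read $\sum_i\sqNorm{\psi_i}_{L_2(X)}=\sum_i\langle\psi_i,\shift\psi_i\rangle_{\stX}$ as $\tr{\shift}$, with $\shift$ replaced by $\Id$ in the $\sH_X$ case to get $k$.
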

\begin{proof}
Two steps give the result: (\rnum{1}) \textbf{expand} each representer in an orthonormal basis of its~own~centered class; (\rnum{2}) \textbf{average} over $X\sim p_X$, reading each summand as a quadratic form of the~shift~operator.

\paragraph{Expansion.}
Fix an $L_2(\stX)$-orthonormal basis $\br*{\psi_i}_{i=1}^k$ of the class $\sH_{\stX}$ ({\asmOneShortBC}, cf.~\cref{app:proofs}).~Then,
\begin{align*}
    \tag{Parseval in $\langle \cdot, \cdot \rangle_{L_2(\stX)}$.}
    \tg_{\vx}
    &=
    \sum_{i=1}^k
    \langle \tg_{\vx}, \psi_i \rangle_{L_2(\stX)}
    \cdot
    \psi_i
    ,
    \\
    \tag{Reproducing identity $\langle \psi_i, \tg_{\vx}\rangle_{L_2(\stX)} = \func{\psi_i}{\vx}$.}
    &=
    \sum_{i=1}^k
    \func{\psi_i}{\vx}
    \cdot
    \psi_i
    ,
    \\
    \tag{Orthonormality of $\psi_i$.}
    \Longrightarrow
    \quad
    \sqNorm{\tg_{\vx}}_{L_2(\stX)}
    &=
    \sum_{i=1}^k
    \func{\psi_i}{\vx}^2
    .
\end{align*}

\paragraph{Averaging.}
Take expectations over $X\sim p_X$ in each summand,
\begin{align*}
    \tag{Def.~of $\shift$.}
    \EE{X}{}{
        \func{\psi_i}{X}^2
    }
    =
    \sqNorm{\psi_i}_{L_2(X)}
    =
    \langle \psi_i, \shift \psi_i \rangle_{L_2(\stX)}
    ,
\end{align*}
and sum over the basis to obtain a trace,
\begin{align*}
    \tag{Trace in any orthonormal basis.}
    \EE{X}{}{
        \sqNorm{\tg_X}_{L_2(\stX)}
    }
    =
    \sum_{i=1}^k
    \langle \psi_i, \shift \psi_i \rangle_{L_2(\stX)}
    =
    \tr{\shift}
    .
\end{align*}
Repeating the two steps in an $L_2(X)$-orthonormal basis $\br*{\psi_i}_{i=1}^k$ of $\sH_X$ replaces $\shift$ by $\Id$ throughout. The averaged summand is then $\sqNorm{\psi_i}_{L_2(X)}=1$, now resulting in $\EE{X}{}{\sqNorm{g_X}_{L_2(X)}} = \tr{\Id}=~k$.
\end{proof}

}

\end{document}